\def\eqref#1{equation~\ref{#1}}
\def\1{\bm{1}}
\def\va{{\bm{a}}}
\def\vb{{\bm{b}}}
\def\vh{{\bm{h}}}
\def\vu{{\bm{u}}}
\def\vv{{\bm{v}}}
\def\vx{{\bm{x}}}
\def\vy{{\bm{y}}}
\def\vz{{\bm{z}}}
\def\mA{{\bm{A}}}
\def\mB{{\bm{B}}}
\def\mC{{\bm{C}}}
\def\mG{{\bm{G}}}
\def\mI{{\bm{I}}}
\def\mK{{\bm{K}}}
\def\mM{{\bm{M}}}
\def\mQ{{\bm{Q}}}
\def\mS{{\bm{S}}}
\def\mU{{\bm{U}}}
\def\mV{{\bm{V}}}
\def\mW{{\bm{W}}}
\def\mX{{\bm{X}}}
\def\mY{{\bm{Y}}}
\DeclareMathAlphabet{\mathsfit}{\encodingdefault}{\sfdefault}{m}{sl}
\SetMathAlphabet{\mathsfit}{bold}{\encodingdefault}{\sfdefault}{bx}{n}
\newcommand{\tens}[1]{\bm{\mathsfit{#1}}}
\def\tH{{\tens{H}}}
\def\tJ{{\tens{J}}}
\def\tL{{\tens{L}}}
\def\tT{{\tens{T}}}
\def\tW{{\tens{W}}}
\def\tX{{\tens{X}}}
\def\tY{{\tens{Y}}}
\DeclareMathOperator*{\argmin}{arg\,min}
\DeclareMathOperator{\Tr}{Tr}
\theoremstyle{plain}
\newtheorem{theorem}{Theorem}[section]
\newtheorem{proposition}[theorem]{Proposition}
\newtheorem{lemma}[theorem]{Lemma}
\theoremstyle{definition}
\newtheorem{definition}[theorem]{Definition}
\theoremstyle{remark}
\newcommand{\conv}{\operatorname{conv}}
\newcommand{\FFT}{\operatorname{FFT}}
\newcommand\aaa{\cellcolor{blue!50}}
\newcommand\bbb{\cellcolor{purple!50}}
\newcommand\ccc{\cellcolor{green!50}}
\newcommand\ddd{\cellcolor{cyan!50}}
\crefname{section}{Section}{Section}
\crefname{subsection}{Section}{Section}
\algnewcommand{\LineComment}[1]{\Statex \hskip\ALG@thistlm  #1}
\title{projUNN: efficient method for training deep networks with unitary matrices}
\author{%
  Bobak T. Kiani\\
  MIT \\
\texttt{bkiani@mit.edu}
   \And
  Randall Balestriero \\
  Meta AI, FAIR \\
\texttt{rbalestriero@fb.com}
   \And
  Yann LeCun \\
  NYU \&
  Meta AI, FAIR \\
\texttt{yann@fb.com}
  \And
  Seth Lloyd \\
  MIT \&
  Turing Inc. \\
  \texttt{slloyd@mit.edu}
}
\begin{document}

\maketitle
\begin{abstract}
In learning with recurrent or very deep feed-forward networks, employing unitary matrices in each layer can be very effective at maintaining long-range stability. However, restricting network parameters to be unitary typically comes at the cost of expensive parameterizations or increased training runtime. We propose instead an efficient method based on rank-$k$ updates -- or their rank-$k$ approximation -- that maintains performance at a nearly optimal training runtime.
We introduce two variants of this method, named Direct (projUNN-D) and Tangent (projUNN-T) projected Unitary Neural Networks, that can parameterize full $N$-dimensional unitary or orthogonal matrices with a training runtime scaling as $O(kN^2)$. 
Our method either projects low-rank gradients onto the closest unitary matrix (projUNN-T) or transports unitary matrices in the direction of the low-rank gradient (projUNN-D). Even in the fastest setting ($k=1$), projUNN is able to train a model's unitary parameters to reach comparable performances against baseline implementations. In recurrent neural network settings, projUNN closely matches or exceeds benchmarked results from prior unitary neural networks. Finally, we preliminarily explore projUNN in training orthogonal convolutional neural networks, which are currently unable to outperform state of the art models but can potentially enhance stability and robustness at large depth. 
\end{abstract}

\section{Introduction}
Learning in neural networks can often be unstable when networks are very deep or inputs are long sequences of data \cite{arjovsky2016unitary,xiao2018dynamical}. For example, vanilla recurrent neural networks (RNNs) have recurrent states that are evolved via repeated application of a linear transformation followed by a pointwise nonlinearity, which can become unstable when eigenvalues of the linear transformation are not of magnitude one. Unitary matrices, which have eigenvalues of magnitude one, can naturally avoid this issue and have been used as a means to overcome these so-called vanishing and exploding gradients \cite{arjovsky2016unitary,jing2017tunable}. More recently, unitary convolutional layers have been similarly constructed to help build more stable deep networks that are norm-preserving in their transformations \cite{li2019preventing,sedghi2018singular}.

\begin{table*}[!t]
\caption{When training RNNs on inputs with sequence length $T$, \textsc{projUNN} achieves nearly optimal runtime complexity while maintaining full parameterization of the unitary manifold.
}
\centering
\small
\begin{tabular}{lccl}
\hline
Model                     & \begin{tabular}[c]{@{}c@{}}Complexity of\\ gradient step\end{tabular} & \begin{tabular}[c]{@{}c@{}}Layers to fully\\ parameterize$^a$\end{tabular} & Method of parameterization                 \\ \hline
EURNN (tunable, n layers) \cite{jing2017tunable} & $O(Tn^2)$                                                     & $O(n)$                                                                            & Sequence of rotations \\
oRNN (n layers) \cite{mhammedi2017efficient}         & $O(Tn^2)$                                                     & $O(n)$                                                                            & Sequence of householder reflections        \\
full-capacity URNN \cite{wisdom2016full}                   & $O(Tn^2 + n^3)^b$                                                     & 1                                                                               & Parameterized matrix entries        \\
expRNN \cite{lezcano2019cheap}                    & $O(Tn^2 + n^3)^b$                                                     & 1                                                                               & Parameterized matrix in Lie algebra        \\
\textsc{projUNN} (our method)                 & $O(Tn^2 + kn^2)^c$                                                     & 1                                                                               & Parameterized matrix entries               \\ \hline
\multicolumn{4}{l}{\begin{tabular}[c]{p{0.95\linewidth}}\small 
$^a$ layers needed to parameterize the full unitary space, $^b$ approximations exist which may reduce runtimes though these approximations are not implemented here and can significantly bias the gradient \cite{lezcano2019cheap}, $^c$ runtime shown for typical setting when $k \ll n$ where $k$ is the rank of gradient updates   \end{tabular}}
\end{tabular}
\label{tab:model_comps}
\end{table*}

In the RNN setting, prior algorithms to apply $n \times n$ unitary matrices in RNNs have parameterized matrices into layers of unitary or orthogonal transformations or parameterized the Lie algebra of the unitary or orthogonal group (see \cref{tab:model_comps}). In the layer-wise setting, unitarity is enforced for all values of parameters, but many layers are required to form a composition that can recreate any desired unitary, \textit{i.e.,} fully parameterizing an $n \times n$ unitary requires $O(n)$ layers. By parameterizing the Lie algebra \cite{lezcano2019cheap,hyland2017learning}, algorithms perform better on common benchmarks but have the drawback that performing gradient optimization on an $n \times n$ unitary requires $O(n^3)$ operations generically per step. Though not an issue with the small to medium sized models used today, this $O(n^3)$ is still $O(n)$ slower than standard methods of forward- and back-propogation in RNNs. 

Motivated by the feature that gradients in neural networks are typically approximately low rank, we show that gradient updates to unitary/orthogonal matrices can be efficiently performed in low rank settings. We propose a new model called \textsc{projUNN} where matrices are first updated via gradient based optimization and then projected back onto the closest unitary (\textsc{projUNN-D}) or transported in the direction of the gradient (\textsc{projUNN-T}). \textsc{projUNN} has near-optimal runtime complexity unlike other existing algorithms for unitary RNNs (\cref{tab:model_comps}) and is especially effective even in the most extreme case where gradients are approximated by rank one matrices. In RNN learning tasks, \textsc{projUNN} matches or exceeds benchmarks of state-of-the-art unitary neural network algorithms. 

Though we present our model first in the RNN setting, we show that there is a direct extension of \textsc{projUNN} to the case of orthogonal/unitary convolution which we explore further. Here, we perform unitary/orthogonal convolution in the Fourier domain as inspired by \cite{trockman2021orthogonalizing}. Our algorithm runs efficiently in the convolutional setting especially for filters of large size and many channels (see \cref{app:runtime} for more details). 

\section{Related works}
Maintaining stability in neural networks via orthogonal or unitary matrices has a rich history of study in machine learning, both from an applied and theoretical perspective. Here, we briefly mention the most related works and algorithms we use in comparison to our \textsc{projUNN}. For a more holistic review of prior work in unitary neural networks and other related topics, please see \cref{app:prior_works}.   

Unitary neural networks were first designed to address the issue of vanishing and exploding gradients in RNNs while learning information in very long sequences of data more efficiently than existing parameterizations such as the long-short term memory unit (LSTM) \cite{hochreiter1997long}. Early algorithms \cite{arjovsky2016unitary,mhammedi2017efficient} maintained unitarity by constructing a series of parameterized unitary transformations. Perhaps the most effective of these methods is the efficient unitary recurrent neural network (EUNN) \cite{jing2017tunable} which parameterized unitary matrices by composing layers of Givens rotations, Fourier transforms, and other unitary transformations. The unitary RNN (uRNN) of \cite{wisdom2016full} and the Cayley parameterization (scoRNN) of \cite{helfrich2018orthogonal} parameterized the full unitary space and maintained unitarity by performing a Cayley transformation. Later, \cite{lezcano2019cheap} introduced the exponential RNN (expRNN) which parameterized unitary matrices in the Lie algebra of the orthogonal/unitary group. Though the uRNN, scoRNN, and expRNN perform well on benchmarks, their algorithms require matrix inversion or SVD steps which are time-consuming in high dimensions.

For convolutional neural networks, \cite{sedghi2018singular} showed how to efficiently calculate the singular values of a linear convolution and proposed an algorithm for projecting convolutions onto an operator-norm ball which relied on a series of costly projection steps. \cite{li2019preventing} introduced a block convolutional orthogonal parameterization (BCOP) which was faster and more efficient than the methods in \cite{sedghi2018singular}, but required extra parameters in its parameterization and only parameterized a subset of the space of orthogonal convolutions. Most recently, \cite{singla2021skew} implemented orthogonal convolutions by parameterizing the Lie algebra of the orthogonal group via their skew orthogonal convolution (SOC) algorithm which approximates orthogonal convolutions especially well for small filter sizes. Finally, \cite{trockman2021orthogonalizing} performs convolutions in the Fourier domain via application of the Cayley transform. Our orthogonal/unitary convolutional parameterization is inspired by their approach and improves their runtime for convolutions over many channels.


\section{Notation and background}
Vectors and matrices are denoted with bold lower-case and upper-case script, $\vv$ and $\mV$, respectively. Scalars are denoted by regular script $e$ and tensors are denoted by bold text $\tT$. The complex conjugate of a complex-valued input $\cdot$ is denoted by $\cdot^*$ (ignored when real-valued). The transpose of a matrix $\mM$ is denoted by $\mM^\intercal$ and the conjugate transpose of a matrix is denoted by $\mM^\dagger$. We denote the Frobenius norm of a matrix by $\| \cdot \|_F$ and the spectral norm of a matrix by $\| \cdot \|_2$. 

Here, we provide a brief overview of the unitary/orthogonal groups and refer readers to \cref{app:math_background} for a more detailed mathematical background. The set of $n \times n$ orthogonal $O(n)$ and unitary $U(n)$ matrices are both Lie groups defined as
\begin{align}
    O(n) = \left\{ \mM \in \mathbb{R}^{n \times n} | \mM \mM^\intercal = \mI  \right\}, \; \; \; \; \; \;
    U(n) = \left\{ \mM \in \mathbb{C}^{n \times n} | \mM \mM^\dagger = \mI  \right\}.
\end{align}
Constraining matrices in $O(n)$ and $U(n)$ to have determinant equal to one constructs the special orthogonal $SO(n)$ and unitary $SU(n)$ groups respectively. The Lie algebra or tangent space of the identity of $O(n)$ and $U(n)$ are the set of skew symmetric $\mathfrak{o}(n)$ and skew Hermitian $\mathfrak{u}(n)$ matrices,
\begin{align}
    \mathfrak{o}(n) = \left\{\boldsymbol{A} \in \mathbb{R}^{n \times n}: \mA + \mA^\intercal = 0 \right\}, \; \; \; \; \; \;
    \mathfrak{u}(n) = \left\{\boldsymbol{A} \in \mathbb{C}^{n \times n}: \mA + \mA^\dagger = 0 \right\}.
\end{align}

The matrix exponential $\exp(\cdot)$ is a map from the Lie algebra to the associated Lie group. The map is surjective if the Lie group is compact and connected -- a property which holds for the unitary and special orthogonal groups but not the orthogonal group. 

\section{Projected unitary networks}

\begin{figure}[t]
 \captionsetup[subfigure]{aboveskip=1pt,belowskip=-2pt}  
 \begin{subfigure}{0.5\textwidth}
    \centering
    \includegraphics[width=\linewidth]{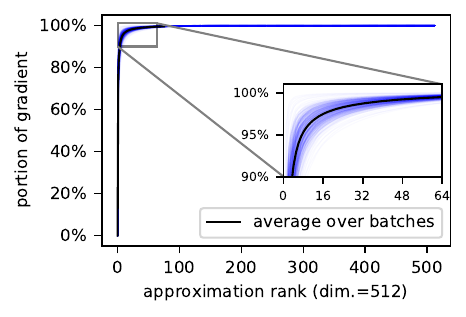}
    \caption{Low rank approximations}
    \label{fig:cifar_low_rank}
  \end{subfigure}%
  \hspace*{\fill}   
 \begin{subfigure}{0.5\textwidth}
    \centering
    \includegraphics[width=\linewidth]{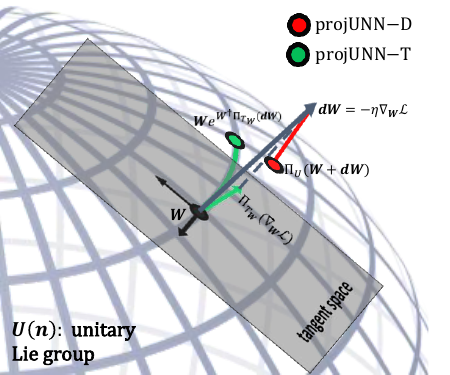}
    \caption{Illustration of algorithm}
    \label{fig:projection}
  \end{subfigure}%
  \hspace*{\fill}   
  \caption{(a) Low rank approximations capture most of the Frobenius norm of the gradient of a $512 \times 512$ matrix in the convolution filter (512 channels) of the last residual block of Resnet-9. Blue lines plot gradients of a single batch during training of our \textsc{projUNN} algorithm on CIFAR10 over a single epoch (see \cref{sec:numerical_rank} for details and equivalent plot for RNN architecture). (b) Illustration of a single gradient update via gradient descent with learning rate $\eta$. \textsc{projUNN-D} (pictured in red) directly projects the gradient update back onto the unitary/orthogonal manifold. \textsc{projUNN-T} (pictured in green) first projects onto the tangent space (Lie algebra) and then performs a rotation in that direction via the exponential map.}
\end{figure}

Our \textsc{projUNN} algorithm is motivated by the simple observation that most of the ``information" of a typical gradient in a deep learning task is captured in a low rank subspace of the complete gradient. \cref{fig:mnist_low_rank} illustrates this feature when training our \textsc{projUNN} convolutional network on CIFAR10. We include further analysis and justification of this low rank behavior in \cref{sec:numerical_rank}. As we will show, we can perform updates on the low rank subspace of the gradient efficiently by approximating the gradient with a low rank matrix and performing projections of parameters onto that low rank subspace. Our experiments show that this methodology, even with rank one approximations, is effective at learning and empirically introduces a form of ``beneficial" stochasticity during gradient descent.


Based on how the projection is performed, our \textsc{projUNN} algorithm takes two forms illustrated in \cref{fig:projection}. The directly projected unitary neural network (\textsc{projUNN-D}) projects an update onto the closest unitary/orthogonal matrix in Frobenius norm. The tangent projected unitary neural network (\textsc{projUNN-T}) projects gradients onto the tangent space and transports parameters in that direction.


\subsection{\textsc{projUNN-D}} \label{sec:projunnd_alg}
\textsc{projUNN-D} takes advantage of the fact that the polar transformation returns the closest unitary or orthogonal matrix in the Frobenius norm to a given matrix (not necessarily unitary or orthogonal):
\begin{restatable}[Projection onto unitary manifold \cite{keller1975closest}]{lemma}{closestUnitary}
\label{lemma:DirectProjection}
Given a matrix $\mA \in \mathbb{C}^{n \times n}$:
\begin{equation}
    \Pi_{U}(\mA) = \argmin_{\mU \in \, \mathcal{U}(n)} \| \mA - \mU \|_F^2 = \mA (\mA^\dagger \mA)^{-\frac{1}{2}},
\end{equation}
where $\mathcal{U}(n)$ indicates the set of $n \times n$ unitary matrices.
\end{restatable}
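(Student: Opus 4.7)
The plan is to reduce the Frobenius minimization to a trace maximization, diagonalize via the SVD, and then verify the resulting closed form matches $\mA(\mA^\dagger \mA)^{-1/2}$.

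First, I would expand the objective using $\|\mU\|_F^2 = \Tr(\mU^\dagger \mU) = n$ for any $\mU \in \mathcal{U}(n)$, obtaining
\begin{equation}
\|\mA - \mU\|_F^2 = \|\mA\|_F^2 + n - 2\,\mathrm{Re}\,\Tr(\mU^\dagger \mA).
\end{equation}
Since the first two terms are independent of $\mU$, minimizing over $\mathcal{U}(n)$ is equivalent to maximizing $\mathrm{Re}\,\Tr(\mU^\dagger \mA)$.

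Next, I would introduce an SVD $\mA = \mW \mSigma \mV^\dagger$, with $\mW,\mV \in \mathcal{U}(n)$ and $\mSigma = \mathrm{diag}(\sigma_1,\dots,\sigma_n)$, $\sigma_i \ge 0$. By the cyclic property of the trace,
\begin{equation}
\Tr(\mU^\dagger \mA) = \Tr(\mU^\dagger \mW \mSigma \mV^\dagger) = \Tr(\mSigma \mQ),
\end{equation}
where $\mQ := \mV^\dagger \mU^\dagger \mW$ is unitary since it is a product of unitaries, and conversely every unitary $\mQ$ arises from some $\mU$. Because every unitary matrix has $|\mQ_{ii}| \le 1$ (its columns have unit norm), we get
\begin{equation}
\mathrm{Re}\,\Tr(\mSigma \mQ) = \sum_{i=1}^n \sigma_i \,\mathrm{Re}(\mQ_{ii}) \le \sum_{i=1}^n \sigma_i,
\end{equation}
with equality forced precisely when $\mQ_{ii} = 1$ for every index with $\sigma_i > 0$. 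Under the implicit invertibility assumption that $(\mA^\dagger \mA)^{-1/2}$ exists (all $\sigma_i > 0$), the diagonal-unitary condition $\mQ_{ii}=1$ for all $i$ forces $\mQ = \mI$, yielding the unique maximizer $\mU^\star = \mW \mV^\dagger$.

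Finally, I would identify $\mU^\star$ with the claimed formula by a direct computation on the SVD: $\mA^\dagger \mA = \mV \mSigma^2 \mV^\dagger$, so $(\mA^\dagger \mA)^{-1/2} = \mV \mSigma^{-1} \mV^\dagger$, and therefore
\begin{equation}
\mA (\mA^\dagger \mA)^{-1/2} = \mW \mSigma \mV^\dagger \mV \mSigma^{-1} \mV^\dagger = \mW \mV^\dagger = \mU^\star,
\end{equation}
which establishes the lemma. The one delicate point I would flag is the rank-deficient case: if some $\sigma_i = 0$ then $(\mA^\dagger \mA)^{-1/2}$ is not literally defined and the argmin is non-unique (any choice of $\mQ_{ii}$ on the zero-singular-value coordinates is optimal); this is handled either by taking the Moore--Penrose pseudoinverse or by invoking the polar decomposition $\mA = \mU^\star \mP$ with $\mP \succeq 0$, which always exists and selects a canonical minimizer. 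For the purposes of \textsc{projUNN-D}, whose low-rank updates keep iterates full rank, the invertible case given here suffices.
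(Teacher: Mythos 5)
Your proof is correct, but it takes a genuinely different route from the paper's. The paper follows Keller's verification-style argument: it sets $\mV = \mA(\mA^\dagger\mA)^{-1/2}$, writes an arbitrary unitary as $\mU = \mV + \mM$, uses the unitarity of $\mU$ and $\mV$ to derive $\mM\mV^\dagger + \mM\mM^\dagger + \mV\mM^\dagger = 0$, and then expands $\|\mA - \mU\|_F^2$ to show it exceeds $\|\mA - \mV\|_F^2$ by exactly $\Tr[(\mA^\dagger\mA)^{1/2}\mM\mM^\dagger] \geq 0$, the nonnegativity following from positive semi-definiteness of $(\mA^\dagger\mA)^{1/2}$. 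No SVD appears. You instead reduce to the orthogonal Procrustes problem of maximizing $\operatorname{Re}\Tr(\mU^\dagger\mA)$ and resolve it with the singular value decomposition and the bound $|\mQ_{ii}| \leq 1$ for unitary $\mQ$. Your route buys more: it establishes uniqueness of the minimizer when $\mA$ is invertible, and it makes explicit what happens in the rank-deficient case (non-unique argmin, pseudoinverse or polar decomposition as the fix) — a point the paper's proof silently glosses over, since it assumes $\mV$ is unitary, which already requires $\mA$ invertible. The paper's argument, in exchange, is shorter and purely algebraic, at the cost of needing the candidate $\mV$ handed to it in advance rather than deriving it. Both proofs ultimately rest on the same invertibility hypothesis; yours is the only one that names it.
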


Note, that if the matrix $\mA$ is real, then the projection above will be onto an orthogonal matrix. Given \Cref{lemma:DirectProjection}, \textsc{projUNN-D} performs optimization in two steps, which are illustrated in \cref{fig:projection}. First, matrix entries are updated via a standard learning step as in gradient descent, constructing a new matrix that is generally no longer unitary. In the second step, \textsc{projUNN-D} returns the unitary or orthogonal matrix closest in the Frobenius norm to the inputted matrix using \cref{lemma:DirectProjection}. At first sight, the second step would require $O(n^3)$ time to perform, but we can take advantage of the fact that gradient updates are typically approximately low rank (see \cref{sec:numerical_rank}). Efficient low rank approximations can be obtained using sampling methods detailed in \cref{sec:sampling}. With this in mind, we show that rank $k$ updates can be performed in $O(kn^2)$ time when $k \ll n$.

\begin{restatable}[Low rank unitary projection]{theorem}{DirectProjection}
\label{thm:DirectProjection}
Let $\mU$ be an $n \times n$ orthogonal/unitary matrix perturbed by $\mG_k$, a rank $k$ matrix. Then the projection onto the closest orthogonal/unitary matrix defined below can be performed in $O(k(n^2 + nk + k^2))$ steps.
\begin{equation}
\label{eq:direct_update}
    \mU + \mG_k \to \argmin_{\mV \in \, \mathcal{U}} \| \mU + \mG_k - \mV \|_F^2.
\end{equation}
\end{restatable}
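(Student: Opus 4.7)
The plan is to reduce the projection to work on $2k \times 2k$ matrices by exploiting three ingredients: a factored form for $\mG_k$; left-cancellation of $\mU$ via unitary invariance; and the fact that the inverse square root of $\mI +$ (rank $r$) is again $\mI +$ (rank $r$) supported on the same column span.

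First I would write the rank-$k$ perturbation in compact form $\mG_k = \mA\mB^\dagger$ with $\mA,\mB \in \mathbb{C}^{n\times k}$ (which, in \textsc{projUNN}, the sampling subroutine already returns), and set $\mC := \mU^\dagger \mA$, so that $\mU + \mG_k = \mU(\mI + \mC\mB^\dagger)$. Because left-multiplication by the unitary $\mU$ is a Frobenius isometry that preserves $\mathcal{U}(n)$, \cref{lemma:DirectProjection} reduces the problem to projecting $\mN := \mI + \mC\mB^\dagger$ onto $\mathcal{U}(n)$. Then I would thin-QR factorize $\mY := [\mB \;\; \mC] \in \mathbb{C}^{n\times 2k}$ as $\mY = \mQ\mR$ (with $\mQ^\dagger \mQ = \mI_{2k}$) in $O(nk^2)$ time, and split $\mR$ into column blocks $\mR_B, \mR_C \in \mathbb{C}^{2k\times k}$ so that $\mB = \mQ\mR_B$ and $\mC = \mQ\mR_C$. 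Expanding $\mN^\dagger\mN$ and using $\mQ^\dagger\mQ = \mI_{2k}$ collapses it to
\begin{equation*}
    \mN^\dagger\mN \;=\; \mI_n + \mQ\mT\mQ^\dagger, \qquad \mT := \mR_C\mR_B^\dagger + \mR_B\mR_C^\dagger + \mR_B\mR_C^\dagger\mR_C\mR_B^\dagger,
\end{equation*}
with $\mT \in \mathbb{C}^{2k\times 2k}$ assembled in $O(k^3)$.

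The crux --- the step I expect to be the main obstacle to state cleanly --- is the observation that because $\mQ^\dagger\mQ = \mI_{2k}$, the operator $\mI_n + \mQ\mT\mQ^\dagger$ is the identity on $\mathrm{range}(\mQ)^\perp$ and unitarily equivalent to $\mI_{2k}+\mT$ on $\mathrm{range}(\mQ)$. Under the (generic) assumption that the step is small enough for $\mI_{2k}+\mT$ to be positive definite, equivalently that $\mU+\mG_k$ has full rank so the polar projection is unique, a spectral argument yields
\begin{equation*}
    (\mI_n + \mQ\mT\mQ^\dagger)^{-1/2} \;=\; \mI_n + \mQ\mW\mQ^\dagger, \qquad \mW := (\mI_{2k}+\mT)^{-1/2} - \mI_{2k},
\end{equation*}
and $\mW$ is computed in $O(k^3)$ via an eigendecomposition of $\mT$. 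Multiplying $\mN(\mN^\dagger\mN)^{-1/2}$ out and using $\mC\mB^\dagger = \mQ(\mR_C\mR_B^\dagger)\mQ^\dagger$ together with $\mQ^\dagger\mQ = \mI_{2k}$ once more collapses everything to $\mN(\mN^\dagger\mN)^{-1/2} = \mI_n + \mQ\tilde{\mW}\mQ^\dagger$ for a $2k\times 2k$ matrix $\tilde{\mW}$ obtainable in $O(k^3)$.

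Finally I would recombine to present the closest unitary as $\mU + (\mU\mQ)\,\tilde{\mW}\,\mQ^\dagger$, a rank-at-most-$2k$ update to $\mU$. Tallying the cost: the two dense-by-tall products $\mU^\dagger\mA$ and $\mU\mQ$ each cost $O(n^2 k)$; the QR on $\mY$ and the skinny-by-tall products feeding $\mT$ and $\tilde{\mW}$ cost $O(nk^2)$; and the small-matrix inverse square root and compositions cost $O(k^3)$. Summed, this is $O(k(n^2 + nk + k^2))$, matching the claim. Everything outside the inverse-square-root identity is routine matrix bookkeeping, so the only genuine content is verifying that $(\cdot)^{-1/2}$ preserves the ``$\mI +$ low-rank bump in $\mathrm{range}(\mQ)$'' structure.
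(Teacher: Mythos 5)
Your proposal is correct and follows essentially the same route as the paper's proof: both reduce the computation to the $2k$-dimensional subspace spanned by the perturbation vectors (your thin QR of $[\mB\;\;\mC]$ is the paper's Gram--Schmidt step), eigendecompose a $2k\times 2k$ matrix, and apply the inverse square root by shifting eigenvalues, so that $(\mI+\mQ\mT\mQ^\dagger)^{-1/2}=\mI+\mQ((\mI+\mT)^{-1/2}-\mI)\mQ^\dagger$ yields a rank-$2k$ update to $\mU$ in $O(k(n^2+nk+k^2))$ time. The only addition is your explicit invertibility caveat and the left-cancellation of $\mU$, both of which the paper leaves implicit; neither changes the argument.
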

To achieve this runtime, we perform updates completely in an $O(k)$ subspace of the full vector space. The operation $(\mU + \mG_k)[(\mU + \mG_k)^\dagger (\mU + \mG_k)]^{-1/2}$ can be decomposed into a series of $O(k)$ matrix-vector operations and an eigendecomposition of a $2k \times 2k$ sub-matrix. The complete proof and details are deferred to \cref{app:deferred_proofs}. One limitation of the above is that the eigendecomposition and inversion of a low rank matrix can cause numerical instability after many update steps. We discuss this further in \cref{app:numerical_stability} where we also provide options to alleviate this instability. \textsc{projUNN-T}, which we discuss next, does not require matrix inversion and is thus empirically more stable.

\subsection{\textsc{projUNN-T}}
\textsc{projUNN-T} maintains unitarity of matrices by orthogonally projecting gradient updates onto the tangent space and then performing a rotation in the direction of the projection (\textit{i.e.,} along the geodesic). As in \textsc{projUNN-D}, there is a closed form for the orthogonal projection:
\begin{restatable}[Tangent space projection \cite{wisdom2016full}]{lemma}{orthogonalProjectionTangent}
\label{lemma:TangentProjection}
Given the tangent space $T_{\mU} U(n)$ of an orthogonal/unitary matrix $\mU$, the orthogonal projection $\Pi_{T_\mU}$ with respect to the canonical metric $\langle \mX, \mY \rangle =\operatorname{Re} \left( \Tr [\mX^\dagger \mY] \right)$ is
\begin{equation}
    \Pi_{T_\mU}(\mX) = \frac{1}{2}\left(\mX - \mU \mX^\dagger \mU \right).
\end{equation}
Similar to \cref{lemma:DirectProjection}, this projection also returns the closest matrix in Frobenius norm to $\mX$ in the tangent space,
\begin{equation}
    \min_{\mY \in T_{\mU}U(n) } \left\| \mY - \mX \right\|_F = \Pi_{T_\mU}(\mX).
\end{equation}
\end{restatable}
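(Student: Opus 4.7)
The plan is to verify directly that the formula $\Pi_{T_\mU}(\mX)=\tfrac12(\mX-\mU\mX^\dagger\mU)$ is the orthogonal projection onto $T_\mU U(n)$ with respect to the canonical inner product $\langle\mX,\mY\rangle=\operatorname{Re}\Tr[\mX^\dagger\mY]$, and then invoke the standard Hilbert-space fact that orthogonal projection is the unique norm-minimizer. The Frobenius-norm claim will come for free once we notice that $\|\mX\|_F^2 = \Tr[\mX^\dagger\mX] = \langle\mX,\mX\rangle$, so the induced norm of the canonical inner product on $\mathbb{C}^{n\times n}$ (viewed as a real $2n^2$-dimensional Hilbert space) coincides with the Frobenius norm.

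First I would recall the characterization of the tangent space. Differentiating the constraint $\mU\mU^\dagger=\mI$ along a curve, one gets $T_\mU U(n)=\{\mU\mA:\mA\in\mathfrak{u}(n)\}$, which is equivalent to the condition $\mY=-\mU\mY^\dagger\mU$, or equivalently that $\mU^\dagger\mY$ is skew-Hermitian. Note this is a real (not complex) linear subspace of $\mathbb{C}^{n\times n}$.

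Second, I would check that $\Pi_{T_\mU}(\mX)$ actually lies in $T_\mU U(n)$ by substituting into the skew-Hermitian criterion: taking the adjoint gives $\Pi_{T_\mU}(\mX)^\dagger=\tfrac12(\mX^\dagger-\mU^\dagger\mX\mU^\dagger)$, so
\begin{equation}
\Pi_{T_\mU}(\mX)+\mU\Pi_{T_\mU}(\mX)^\dagger\mU=\tfrac12(\mX-\mU\mX^\dagger\mU)+\tfrac12(\mU\mX^\dagger\mU-\mX)=0.
\end{equation}
Third, I would verify that the residual $\mR:=\mX-\Pi_{T_\mU}(\mX)=\tfrac12(\mX+\mU\mX^\dagger\mU)$ is orthogonal to every $\mY=\mU\mA$ with $\mA\in\mathfrak{u}(n)$. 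Setting $\mB:=\mU^\dagger\mX$ (so $\mB^\dagger=\mX^\dagger\mU$) and using cyclicity of the trace, a short computation gives $\langle\mR,\mY\rangle=\tfrac12\operatorname{Re}\Tr[(\mB+\mB^\dagger)\mA]$. The matrix $\mH:=\mB+\mB^\dagger$ is Hermitian and $\mA$ is skew-Hermitian, and the key identity is that $\Tr(\mH\mA)$ is purely imaginary: applying the adjoint inside the trace, $\overline{\Tr(\mH\mA)}=\Tr((\mH\mA)^\dagger)=\Tr(\mA^\dagger\mH^\dagger)=\Tr(-\mA\mH)=-\Tr(\mH\mA)$, so the real part vanishes and $\langle\mR,\mY\rangle=0$.

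Finally, the minimization statement is automatic: since $T_\mU U(n)$ is a closed real subspace of the real Hilbert space $(\mathbb{C}^{n\times n},\langle\cdot,\cdot\rangle)$ and $\mX=\Pi_{T_\mU}(\mX)+\mR$ with $\mR\perp T_\mU U(n)$, the Pythagorean identity forces $\Pi_{T_\mU}(\mX)$ to be the unique minimizer of $\|\mY-\mX\|_F$ over $\mY\in T_\mU U(n)$. The only subtle step is Step 3, where care must be taken that the inner product is the \emph{real} part of the trace and that the tangent space is a real subspace, so the Hilbert-space projection theorem is invoked over $\mathbb{R}$ rather than $\mathbb{C}$; otherwise all computations are direct algebraic manipulations.
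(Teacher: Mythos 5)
Your proof is correct, and it diverges from the paper's argument in how it handles the minimization claim. The paper verifies idempotence, the identity on the subspace, and only the single orthogonality relation $\langle \Pi_{T_\mU}(\mX), \mX - \Pi_{T_\mU}(\mX)\rangle = 0$; it then proves the Frobenius-norm minimality by a separate, self-contained trick: writing $\Pi_{T_\mU}(\mX)-\mX = -\tfrac12(\mX-\mY)-\tfrac12(\mY+\mU\mX^\dagger\mU)$, applying the triangle inequality, and using $\mY=-\mU\mY^\dagger\mU$ together with unitary invariance of $\|\cdot\|_F$ to bound the second term by $\tfrac12\|\mY-\mX\|_F$. You instead prove the stronger statement that the residual $\mR=\tfrac12(\mX+\mU\mX^\dagger\mU)$ is orthogonal to \emph{every} tangent vector $\mU\mA$ with $\mA\in\mathfrak{u}(n)$, via the identity that $\Tr[\mH\mA]$ is purely imaginary for $\mH$ Hermitian and $\mA$ skew-Hermitian, and then both the projection property and the minimization follow at once from the Hilbert projection theorem and the Pythagorean identity. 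Your route is more conceptual and yields uniqueness of the minimizer for free; it also dispenses with the need to separately check idempotence. The paper's route is more elementary in that it never needs full orthogonality of the residual to the subspace, only the triangle inequality and unitary invariance. Your remarks that the tangent space is a \emph{real} subspace and that the canonical metric induces exactly the Frobenius norm are both correct and are exactly the subtleties that make the Hilbert-space argument legitimate here.
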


Similar to \textsc{projUNN-D}, \textsc{projUNN-T} performs learning in two steps. First, a gradient update $\mG$ is projected onto the tangent space using \Cref{lemma:TangentProjection}. Then, the orthogonal/unitary matrix is transported or rotated in the direction of the projection by application of the exponential map via the update rule \cite{lezcano2019cheap,wisdom2016full},  
\begin{equation}
\label{eq:algebra_update_rule}
    \mU \to \mU \exp\left[ - \eta \mU^\dagger \Pi_{T_\mU}(\mG) \right],
\end{equation}
where $\eta$ denotes the learning rate. This update rule is an example of Riemannian gradient descent where we use the exponential map to transport gradient updates along the unitary/orthogonal manifold \cite{bonnabel2013stochastic}. Here, we transport the matrix $\mU$ along the geodesic in the direction of $\Pi_{T_\mU}(\mG)$. This can be related to the update of \textsc{projUNN-D} which is an example of a retraction or an approximation to the exponential map of \textsc{projUNN-T} (see \cref{app:first_order_equivalence}).

The update rule above requires matrix exponentiation and multiplication, both costly steps which can be sped up when $\mG$ is a low rank matrix. Namely, to perform a rank $k$ gradient update, we obtain an equivalent runtime scaling of $O(kn^2)$ for the \textsc{projUNN-D} when $k \ll n$.


\begin{restatable}[Low rank tangent transport]{theorem}{AlgebraProjection}
\label{thm:AlgebraProjection}
Let $\mU$ be an $n \times n$ orthogonal/unitary matrix perturbed by $\mG_k$, a rank $k$ matrix. Then projecting $\mG_k$ onto the tangent space and performing a rotation in that direction as defined in \cref{eq:algebra_update_rule} can be performed in $O(k(n^2 + nk + k^2))$ steps.
\end{restatable}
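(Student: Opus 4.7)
My plan is to factor every object in the update so only $n \times 2k$ slabs or $2k \times 2k$ blocks are ever touched, with the only full $n \times n$ work being a constant number of matrix--thin-matrix products against $\mU$. Writing $\mG_k = \mA \mB^\dagger$ with $\mA, \mB \in \mathbb{C}^{n \times k}$ (a factorization already produced by the low-rank sampling of \cref{sec:sampling}), multiplying \cref{lemma:TangentProjection} on the left by $\mU^\dagger$ gives $\mU^\dagger \Pi_{T_\mU}(\mG_k) = \tfrac{1}{2}\bigl(\tilde{\mA}\mB^\dagger - \mB \tilde{\mA}^\dagger\bigr)$, where $\tilde{\mA} := \mU^\dagger \mA$ is built in $O(kn^2)$ time. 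Collecting columns yields a skew-Hermitian rank-$2k$ factorization $\mM := \mU^\dagger \Pi_{T_\mU}(\mG_k) = \mP \mQ^\dagger$ with $\mP = \tfrac{1}{2}[\tilde{\mA},\,-\mB]$ and $\mQ = [\mB,\,\tilde{\mA}]$ in $\mathbb{C}^{n \times 2k}$.

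Next, I push the matrix exponential through this factorization. The identity $(\mP\mQ^\dagger)^j = \mP (\mQ^\dagger \mP)^{j-1} \mQ^\dagger$ for $j \geq 1$ collapses the power series to
\[ \exp(-\eta\mM) = \mI + \mP\, \phi(\mS)\, \mQ^\dagger, \qquad \phi(z) := \sum_{j \geq 1} \frac{(-\eta)^j z^{j-1}}{j!}, \]
where $\mS := \mQ^\dagger \mP \in \mathbb{C}^{2k \times 2k}$ and $\phi$ is an entire scalar function (with $\phi(0)=-\eta$ and $\phi(z)=(e^{-\eta z}-1)/z$ otherwise). Forming $\mS$ costs $O(n k^2)$, and evaluating $\phi(\mS)$ via a Schur decomposition of the $2k \times 2k$ matrix $\mS$ costs $O(k^3)$; crucially no inverse of $\mS$ is required.

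Finally, I assemble the update $\mU \exp(-\eta\mM) = \mU + (\mU \mP)\phi(\mS)\mQ^\dagger$. The key shortcut is that $\mU \tilde{\mA} = \mU \mU^\dagger \mA = \mA$ is free, so $\mU \mP = \tfrac{1}{2}[\mA,\,-\mU\mB]$ and only $\mU\mB$ is genuinely new work, at cost $O(k n^2)$. Then $(\mU\mP)\phi(\mS)$ costs $O(n k^2)$, and forming the rank-$2k$ outer product against $\mQ^\dagger$ together with the addition to $\mU$ costs $O(k n^2)$; summing gives $O(k n^2 + n k^2 + k^3) = O(k(n^2 + nk + k^2))$. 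The main obstacle I foresee is precisely this exponential step: the tempting closed form $\phi(\mS) = \mS^{-1}(\exp(-\eta\mS) - \mI)$ fails whenever $\mS$ is rank-deficient or badly conditioned, and $\mQ^\dagger\mP$ need not inherit the rank of $\mG_k$, so insisting on evaluating the entire function $\phi$ on a Schur form of $\mS$ (or via a scaled Pad\'e approximation) is what keeps the $O(k^3)$ bound honest rather than merely generic.
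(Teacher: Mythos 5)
Your argument is correct and achieves the claimed $O(k(n^2+nk+k^2))$ bound, but it reaches the $2k\times 2k$ core computation by a genuinely different route than the paper. The paper first runs Gram--Schmidt on the $2k$ vectors spanning the range of $\mU^\dagger \Pi_{T_\mU}(\mG_k)$, represents that matrix as a \emph{skew-Hermitian} (hence normal) operator in the resulting orthonormal basis, eigendecomposes the small matrix with a unitary $\mC$, and then exponentiates eigenvalues directly via $\exp\bigl(\sum_i s_i \vu_i\vu_i^\dagger\bigr) = \mI + \sum_i (e^{s_i}-1)\vu_i\vu_i^\dagger$, which is valid precisely because the $\vu_i$ are orthonormal. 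You skip the orthonormalization entirely: you keep the oblique factorization $\mP\mQ^\dagger$, collapse the power series with $(\mP\mQ^\dagger)^j = \mP(\mQ^\dagger\mP)^{j-1}\mQ^\dagger$, and evaluate the entire function $\phi(z)=(e^{-\eta z}-1)/z$ on the generally \emph{non-normal} matrix $\mS=\mQ^\dagger\mP$. What your route buys is the elimination of the QR/Gram--Schmidt pass and the observation that $\mU\tilde{\mA}=\mA$ makes half of the final left-multiplication free, so only $\mU\mB$ costs a fresh $O(kn^2)$; you also correctly identify that $\mS$ may be singular, so the naive closed form $\mS^{-1}(\exp(-\eta\mS)-\mI)$ must be replaced by a Schur-form or Pad\'e evaluation of $\phi$. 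What the paper's route buys is numerical robustness: its $2k\times 2k$ matrix is normal, so the eigendecomposition is unconditionally well-conditioned and the eigenvalue manipulation is trivial, whereas Schur--Parlett on a non-normal $\mS$ with clustered eigenvalues is delicate (and can degrade beyond $O(k^3)$ in pathological cases). Both give the same asymptotic count; yours is a legitimate and arguably leaner alternative provided the $\phi$-function evaluation is implemented carefully.
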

As with the \textsc{ProjUNN-D}, we achieve this runtime by performing the update above completely in an $O(k)$ subspace of the full vector space. The update via the exponential map can similarly be decomposed into a series of $O(k)$ matrix-vector operations and an eigendecomposition of a $2k \times 2k$ sub-matrix. Proper manipulations of the eigenvalues of the sub-matrix implement updates via the exponential map. The complete proof and details are deferred to \cref{app:deferred_proofs}.

\subsection{Sampling methods}
\label{sec:sampling}

Commonly, gradients can have large rank but have still have many small singular values (\textit{e.g.,} see \cref{fig:cifar_low_rank}). Here, a matrix $\mA$ is deemed approximately low rank (see more details in \cref{sec:numerical_rank}), and one can obtain a rank $k$ approximation $\mA_k$ of $\mA$ by sampling from rows and columns of $\mA$. We use two sampling algorithms.
The \textbf{LSI sampling} algorithm \cite{papadimitriou2000latent} obtains a rank $k$ approximation to an $n \times n$ matrix $\mA$ in time $O(kn^2 \log n)$. The algorithm projects the matrix $\mA$ onto a random orthogonal subspace and then applies SVD based methods to the projected matrix to obtain the low rank approximation to that matrix. This algorithm features low approximation errors even for small $k$ and is used extensively in our implementation. The \textbf{column sampling} (linear time SVD) algorithm \cite{drineas2006fast} samples from the columns of an $n \times n$ matrix $\mA$ to obtain a rank $k$ approximation in $O(c^2n + c^3)$ time, where $c$ is a hyperparameter indicating the number of columns sampled. Typically, $c$ is chosen as a multiple of $k$ so the runtime is $O(k^2n + k^3)$. In implementing this algorithm, we calculate the right singular vectors via matrix multiplication of the left singular vectors so the total runtime is $O(kn^2 + k^2n + k^3)$. 

We note that the two procedures described above, though sufficient for our purposes, can be further optimized in their asymptotic runtime. For sake of completeness, we discuss two of these other sampling algorithms in \cref{sec:numerical_rank}.


\subsection{Extension to unitary or orthogonal convolution}
\label{sec:projunn_conv}
Unitary/orthogonal convolutions are linear convolution operations that also preserve the $2$-norm (isometric). Restricting convolutions to be unitary/orthogonal typically results in a drop in performance on standard imaging tasks when used in isolation, but prior work has explored unitary/orthogonal convolutions to potentially improve algorithmic stability and robustness (see \cref{app:convolution_unitary} for more background) \cite{li2019preventing,trockman2021orthogonalizing}. We describe here how \textsc{projUNN} can be used to implement unitary/orthogonal convolutions in potentially a more efficient manner. 

Given input tensor $\tX \in \mathbb{C}^{ M \times N \times C}$ where $C$ is the number of channels of an $M \times N$ input, linear convolution (or technically cross-correlation) with a filter $\tW \in \mathbb{C}^{M \times N \times C \times C}$ is defined as
\begin{equation}
   \left[\conv_{\tW}(\tX) \right]_{p,q,d} = \sum_{c = 1}^{C} \sum_{m=1}^{M} \sum_{n=1}^N \tW_{m,n,d,c} \tX_{p+m,q+n,c} ,
\end{equation}
where the indexing above is assumed to be cyclic (taken modulus the corresponding dimension) \cite{lecun1995convolutional,goodfellow2016deep}. Orthogonal/unitary convolutions form a subset of filters that preserve norms, \textit{i.e.,} filters $\tW$ such that $\|\conv_{\tW}(\tX)\| = \|\tX\| $. Equivalently, $\conv_{\tW}(\cdot)$ is orthogonal/unitary if the Jacobian of the transformation is also orthogonal/unitary. To maintain unitarity/orthogonality, we set the dimensions of the filter $\tW$ above such that it returns an output $\tY$ of the same dimension as the input $\tX$. One can also perform semi-orthogonal or semi-unitary convolution by appropriately zero-padding an input or truncating from dimensions in the output.

Standard convolutional filters are typically supported over a sparse set of local elements, but performing orthogonal/unitary convolution generally requires implementing convolutions with filters supported over all elements resulting in slower runtimes. One can locally parameterize convolutional filters in the Lie algebra of the orthogonal/unitary group; nevertheless the exponential map into the Lie group expands the support of the filter:
\begin{equation}
    \exp[\conv_{\tL}] (\tX) = \tX + \tL * \tX + \frac{1}{2} \tL *^2 \tX  + \frac{1}{6} \tL *^3 \tX + \cdots
\end{equation}

Thus, enforcing unitarity in convolutions generally requires additional overhead over the traditional setting of locally supported filters, but by performing convolution in the Fourier domain, runtimes for full-width filters can be optimally improved to $O(N^2 C \log(N) +N^2C^2)$ \cite{mathieu2013fast}:
\begin{equation}
    \left[\FFT \conv_{\tW}(\tX) \right]_{\widehat{r},\widehat{s},:} = \widehat{\tW}^*_{\widehat{r},\widehat{s},:,:} \; \left[\FFT\tX \right]_{\widehat{r},\widehat{s},:} \;,
    \label{eq:block_diag_conv}
\end{equation}
where $\widehat{\tW}_{i,j,:,:}$ is the value of the $\widehat{r}$ and $\widehat{s}$ frequency of $\tW$ across all channels in the Fourier domain and $\FFT$ is the 2-dimensional fast Fourier transformation.

Our method is inspired by that of \cite{trockman2021orthogonalizing} 
which transformed $\tW$ into Fourier space and performed a Cayley transformation (approximation to the exponential map into the Lie group) over the matrices indexed by $\widehat{\tW}_{\widehat{r},\widehat{s},:,:}$ which requires $O(N^2C^2 \log(N) + N^2C^3) $ operations. For our algorithm, we parameterize $\tW$ in the Fourier domain and only manipulate $\widehat{\tW}$ (see \cref{app:convolution_unitary} for a depiction of our parameterization). By parameterizing $\widehat{\tW}$ directly and performing rank $k$ updates using our \textsc{projUNN}, this runtime can be improved to $O(N^2C \log(N) + kN^2C^2)$ which is optimal when $k \ll N$. Our procedure for performing unitary/orthogonal convolution on an input $\tX$ with filter $\tW$ essentially follows the steps in \cref{eq:block_diag_conv}: perform an $\FFT$ on $\tX$, block-multiply this by $\widehat{\tW}$, and perform an inverse $\FFT$ on the output to obtain the final result. 

\paragraph{Limitations} Unitary/orthogonal convolutions are implemented in a cyclic fashion (\textit{i.e.,} indices are taken modulus the dimension) which is not the standard approach but has been used before to accelerate convolutional operations \cite{mathieu2013fast}. Additionally, we parameterize convolution filters to have support over all possible elements (full-width), which can be expensive in memory. One can restrict the convolution to local terms in the Lie algebra, but this would not improve runtime as our algorithm runs in the Fourier space. To target local terms in a convolution, we instead propose for future work to implement a regularizer which has a specified support and penalizes the norm of the filter outside that support. Finally, the space of orthogonal convolutions has multiple disconnected components, which can present challenges for gradient based learning \cite{li2019preventing}. However, we can avoid this drawback by implementing \textsc{projUNN} using fully supported filters in the space of unitary convolutions which is connected (proof deferred to \cref{app:unitary_connected}). 

\begin{restatable}[Unitary convolutional manifold is connected]{theorem}{ConnectedConvolution}
\label{thm:unitary_conv_connected}
The space of unitary convolutions with filters of full support has a single connected component.
\end{restatable}

\begin{algorithm}
\caption{\textsc{projUNN} update step}
\small
\begin{algorithmic}[1]
\Require unitary matrix $\mU \in \mathbb{C}^{N \times N}$ or orthogonal matrix $\mU \in \mathbb{R}^{N \times N}$
\Require gradient update $\Delta \mU \in \mathbb{C}^{N \times N}$ or $\Delta \mU \in \mathbb{R}^{N \times N}$
\Require hyperparameter $k$ corresponding to rank of approximation
\State Obtain rank $k$ approximation to $\Delta \mU$ with output $\sum_{i=1}^k \va_i \vb_i^\dagger \approx \Delta \mU$ (see \Cref{sec:sampling})
\State  Follow steps in \Cref{thm:DirectProjection} (\textsc{projUNN-D}) or \Cref{thm:AlgebraProjection} (\textsc{projUNN-T}) in \Cref{app:deferred_proofs}:
    \Indent
    \State Perform Gram-Schmidt (via QR decomposition) on concatenation of vectors $\mU^\dagger \va_i$ and $\vb_i$ for all $i \in [k]$:
        \Indent
        \LineComment{$\;\;\;\;$ output $\mQ \in \mathbb{C}^{N \times k}$ as semi-orthogonal matrix containing basis after Gram-Schmidt}
        \EndIndent
    \State Form matrix $\mK \in \mathbb{C}^{2k \times 2k}$ below:
        \Indent
        \LineComment{$\;\;\;\;$ \textsc{projUNN-D}: $\mK = \sum_{i=1}^k \mQ^\dagger \mU^\dagger \va_i \vb_i^\dagger \mQ +  \mQ^\dagger \vb_i \va_i^\dagger \mU \mQ + \sum_{i=1}^k \sum_{j=1}^k (\va_i^\dagger \va_j) \mQ^\dagger \vb_i \vb_j^\dagger \mQ $}
        \LineComment{$\;\;\;\;$$\;\;\;\;$ \textit{see \Cref{eq:direct_low_rank_projection} to \Cref{eq:gram_schmidt_output_projunnd}}} 
        \LineComment{$\;\;\;\;$ \textsc{projUNN-T}: $\mK = \frac{1}{2} \left[ \sum_{i=1}^k \mQ^\dagger \mU^\dagger \va_i  \vb_i^\dagger \mQ -  \mQ^\dagger \vb_i  \va_i^\dagger \mU \mQ \right]$ }
        \LineComment{$\;\;\;\;$$\;\;\;\;$ \textit{see \Cref{eq:low_rank_algebra_update}  to \Cref{eq:gram_schmidt_output_projunnt}}} 
        \EndIndent
    \State Find eigenvalues $s_1, \dots, s_{2k}$ and eigenvectors $\vu_1, \dots, \vu_{2k}$ of $\mK$
    \State Perform update step by applying eigenvalue function:
        \Indent 
        \LineComment{$\;\;\;\;$ \textsc{projUNN-D}: $\mU \leftarrow (\mU + \sum_{i=1}^k \va_i \vb_i^\dagger) \left[\mI + \sum_{j=1}^{2k} \left((s_j + 1 + \epsilon )^{-\frac{1}{2}} - 1 \right) \vu_j \vu_j^\dagger \right] $}
        \LineComment{$\;\;\;\;$$\;\;\;\;$ \textit{see \Cref{eq:eigenvalue_projUNND} and \Cref{eq:eigenvalue_projUNND_post}}, $\epsilon$ added for stability when $s_j \approx -1$ (we set $\epsilon = 10^{-8}$) }
        \LineComment{$\;\;\;\;$ \textsc{projUNN-T}: $\mU \leftarrow \mU \left[\mI + \sum_{j=1}^{2k} (\exp(-\eta s_j)-1) \vu_j \vu_j^\dagger \right]$ where $\eta$ is the learning rate}
        \LineComment{$\;\;\;\;$$\;\;\;\;$ \textit{see \Cref{eq:eigenvalue_projunnt} and \Cref{eq:eigenvalue_projunnt_post}}}        
        \EndIndent
    \EndIndent

\end{algorithmic}
\label{alg:projunn}
\end{algorithm}

\subsection{Pseudocode for performing projUNN updates}
\label{app:pseudocode}

Pseudocode for performing an update step on a unitary or orthogonal matrix $\mU$  with a gradient update of $\Delta \mU$ is shown in \Cref{alg:projunn}. In convolutional settings, the steps in \Cref{alg:projunn} are applied across blocks of the convolution in Fourier space which can be performed in parallel. As a cautionary note, especially in the last step of \Cref{alg:projunn}, where there is a composition of multiple matrix-vector multiplications, the order of these multiplications must be chosen to only perform matrix-vector operations to ensure optimal runtime. In other words, two $N \times N$ matrices should never be multiplied by each other at any point in this algorithm.

\subsection{Runtime comparisons}

\begin{figure}[t]
 \captionsetup[subfigure]{aboveskip=1pt,belowskip=-2pt}  
 \begin{subfigure}{0.5\textwidth}
    \centering
    \includegraphics[width=\linewidth]{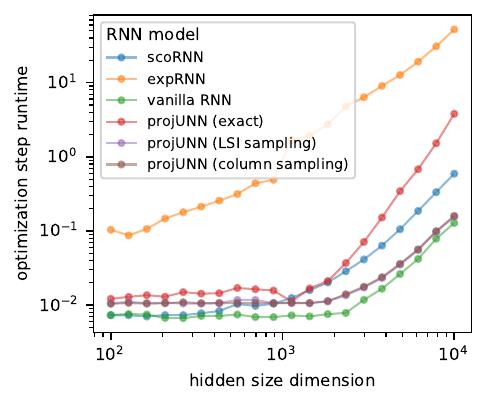}
    \caption{Runtime comparison}
    \label{fig:runtimes}
  \end{subfigure}%
  \hspace*{\fill}   
 \begin{subfigure}{0.5\textwidth}
    \centering
    \includegraphics[width=\linewidth]{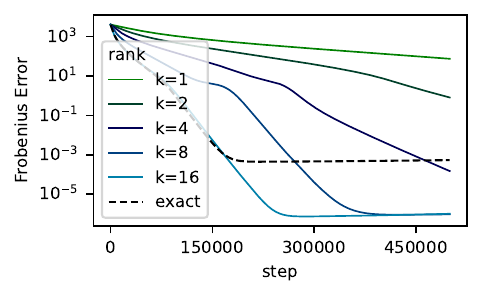}
    \caption{Learning random unitary}
    \label{fig:random_unitary_step}
  \end{subfigure}%
  \hspace*{\fill}   
  \caption{(a) Runtime of \textsc{projUNN} (with low rank approximation) scales asymptotically at same rate of a vanilla RNN and much faster than other unitary RNN models or the exact version of \textsc{projUNN} (not using low rank approximation). Practical runtime improvements are achieved when the hidden dimension is large (see \cref{app:runtime} for details). (b) \textsc{projUNN-T} can learn a random target unitary matrix using SGD. For a fixed learning rate, the loss decays at a rate proportional to the approximation rank $k$ up to $k=16$ where the approximation captures the full batch size (see exact \textsc{projUNN} which employs no approximation). The y-axis plots Frobenius error $\|\mU - \mU_{tar}\|_F^2$. }
\end{figure}

\textsc{projUNN} has a nearly optimal asymptotic runtime scaling which offers practical benefits in high dimensions. In the RNN setting, \cref{fig:runtimes} shows that the low rank version of \textsc{projUNN} has a runtime that scales at the same rate as that of a vanilla RNN albeit with increased overhead. Updating the unitary matrix of \textsc{projUNN} takes $O(kn^2)$ time for performing updates of rank $k \ll n$, only a factor $k$ more than a vanilla RNN which performs updates in $O(n^2)$ time. Note, that exact (full rank) updates to the $n \times n$ unitary matrices of a \textsc{projUNN} take roughly $O(n^3)$ time corresponding to the runtime of an SVD and equivalent to the runtime of expRNN and scoRNN \cite{lezcano2019cheap, helfrich2018orthogonal}. 

In the convolutional setting, \textsc{projUNN} offers the most benefit when there are many channels, filters with large support (very wide), or a need for exact unitary/orthogonal operations (in contrast with an approximate method like \cite{singla2021skew}). Given an $N \times N$ input with $C$ channels, a forward and backward pass of \textsc{projUNN} runs in time $O(N^2C \log(N) + kN^2C^2)$ when performing rank $k$ updates. This is a factor of $C$ faster than the Cayley implementation \cite{trockman2021orthogonalizing} which runs in time $O(N^2C^2 \log(N) + N^2C^3)$. 
For a more complete analysis of the asymptotic and empirical runtimes of various models including many not listed here, please see \cref{app:runtime}.


\section{Experiments}
\label{sec:experiments}

We propose in this section a variety of benchmarked experiments to validate the efficiency and performance of the proposed \textsc{projUNN} method focusing mostly on RNN tasks.\footnote{code repository: \small \url{https://github.com/facebookresearch/projUNN}} 
We include further details of the experiments in \Cref{app:experiment_backup} including a preliminary empirical analysis of \textsc{projUNN} in convolutional tasks. 

\paragraph{Toy model: learning random unitary}


To study the learning trajectories of \textsc{projUNN}, we consider a simple toy model aimed at learning a target random unitary. More specifically, we parameterize a large unitary matrix $\mU \in \mathbb{C}^{2048 \times 2048}$ to learn a Haar random target unitary $\mU_{tar} \in \mathbb{C}^{2048 \times 2048} $ given a dataset $\{ \vx_i,\vy_i=\mU_{tar}\vx_i\}_{i=1}^{4096}$ of size $4096$ where $\vx_i \in \mathbb{C}^{2048}$ has entries drawn i.i.d. random normal. $\mU$ is initialized as a random unitary matrix, and each step, we perform vanilla gradient descent over a batch of 16 training points using mean-squared error loss $\ell(\vx_i,\vy_i) = \|\mU\vx_i -\vy_i\|_2^2$. Approximations of rank $k$ to the gradient are obtained using the column sampling algorithm.

\cref{fig:random_unitary_step}, which plots the Frobenius error $\|\mU - \mU_{tar}\|_F^2$, shows that \textsc{projUNN-T} equipped with the column sampling approximator is able to learn the random target unitary even when $k=1$ (see \cref{app:learning_random_unitary} for plots with \textsc{projUNN-D}). Furthermore, for a fixed learning rate, learning requires fewer steps with larger $k$ up to $k=16$, the maximum rank of the gradient (note that $\nabla_{\mU}\ell(\vx_i,\vy_i)$ is rank 1). Therefore, approximating the gradient via low rank approximations can significantly speed up learning in this task (see \cref{app:learning_random_unitary} for further details).

\paragraph{Adding task}
In the adding task, an RNN must learn to add two numbers in a long sequence. We consider a variant of the adding task studied in \cite{arjovsky2016unitary}, where the input consists of two data sequences of length $T$. The first is a list of $T$ numbers sampled uniformly from $[0,1]$, and the second is a list of binary digits set to zero except for two locations (those which must be summed) set to one located uniformly at random within the intervals $[1,T/2)$ and $[T/2, T)$ respectively.

\begin{figure}[t]
    \centering
    \includegraphics{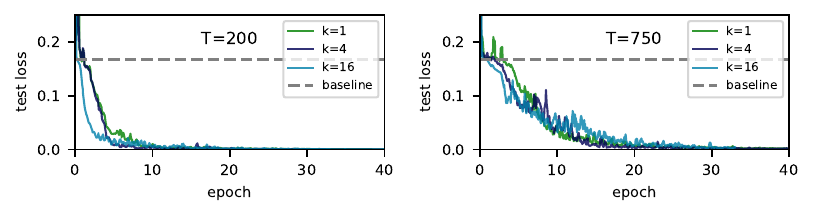}
    \caption{\textsc{projUNN-T} learns the adding task with $T=200$ and $T=750$. Test error is smoothed by taking the running average of 5 sequential points. See \cref{app:adding_task} for more details.}
    \label{fig:adding_task_main}
\end{figure}

Consistent with \cite{helfrich2018orthogonal}, we train our \textsc{projUNN-T} using an RNN with hidden dimension of 170 and the RMSprop optimizer to reduce the mean-squared error of the output with respect to the target. Naively predicting the average value of one for a random input achieves mean-squared error of approximately 0.167. As shown in \cref{fig:adding_task_main}, \textsc{projUNN-T} is able to learn the target function even with rank $k=1$ approximations. Surprisingly, for a fixed learning rate and scheduler, convergence to the true solution is almost equally fast for $k=1$, $k=4$, and $k=16$. Further details are provided in \cref{app:adding_task}.

\paragraph{Copy memory task}

\begin{figure}
    \centering
    \includegraphics{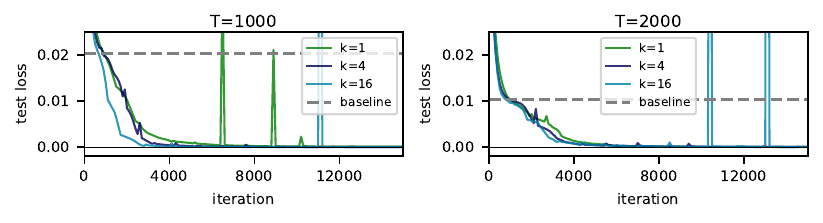}
    \caption{\textsc{projUNN-T} equipped with the column sampling approximation learns the copy task with $T=1000$ and $T=2000$ even with rank one approximations. }
    \label{fig:copy_task_main}
\end{figure}

The copying memory task is a common benchmark for RNNs \cite{hochreiter1997long, arjovsky2016unitary, henaff2016recurrent}, where the aim is to memorize input data by ignoring a long sequence of void tokens. Given an alphabet of $n+2$ symbols $\{a_i\}_{i=1}^{n+2}$, $n$ of which represent data (sequence of letters $A,B,\dots$) and additional \textit{void} (-) and \textit{start recall} (:) tokens, the RNN must output the first $K$ input tokens as the last $K$ output tokens and \textit{void} otherwise. An example input/output for $M = 6$ with $n=4$ is 
\begin{verbatim}
Input:  ABCDAD---···-------:-----
Output: ---------···-------ABCDAD
\end{verbatim} 
Here, $T=1000$ or $T=2000$ so the network must memorize data over a very long sequence of void tokens. As in \cite{jing2017tunable}, we consider $n = 8$ and input length $K=10$ and train networks with batch size $128$ using the RMSProp algorithm. Naively predicting $T+K$ void tokens followed by $K$ random selections of the $n$ possible tokens achieves a baseline loss of $K\log(n)/(T+2K)$. \textsc{projUNN-T} is able to learn the copy task efficiently as shown in \cref{fig:copy_task_main}. In fact, for fixed learning rates, rank one approximations using the column sampling algorithm provide the fastest convergence to the true solution in comparison to higher rank approximations. Networks were initialized using Henaff initialization (see \cref{app:initialization}) and the learning rate for unitary parameters was set to 32 times less than that of regular parameters (see \cref{app:copy_task} for more details).

\paragraph{Permuted MNIST}

Another challenging long-term memory task we consider is the permuted pixel-by-pixel MNIST dataset. Here, MNIST images are flattened, and pixels are randomly shuffled and placed in a sequence thereby creating some non-local dependencies. MNIST images have $28\times 28$ resolution, so the pixel-by-pixel sequences have length $T = 784$. The task is digit classification (10 classes) as in standard MNIST models. We employ the same data processing, shuffle permutation, and formatting as that in prior works \cite{lezcano2019cheap}. We perform cross-validation over different learning rates and evaluate both \textsc{projUNN-T} and \textsc{projUNN-D} with different low-rank values $k \in \{1,2,4,8,16\}$. The final test accuracy is shown in \cref{tab:mnist}. As observed in the copy and adding tasks, we find that using $k>1$ does not lead to improved performances. In fact, we provide the evolution of the test set accuracy during training in \cref{fig:curves_mnist} and note that as the number of updates is large (hundreds per epoch), even rank $k=1$ update are able to move the model's parameters to their local optimum.

\begin{table*}[t!]
\caption{Result of gradient descent optimization using the RMSprop optimizer on a single layer RNN for the permutedMNIST classification task. Each result is averaged over $3$ runs, the same cross validation is done for all settings and includes the learning rate and its schedule. Training occurs for $200$ epochs, and $10\%$ of the training set (same for all models) is set apart as validation set. The training curves are provided in \cref{fig:curves_mnist}.
    }
    \label{tab:mnist}
    \small
    \centering
    \setlength\tabcolsep{0.14em}
\begin{tabular}{|l|rrrr|rrr|rrrrr|rrrrr|}
\multicolumn{8}{c}{}&\multicolumn{5}{c}{\textsc{projUNN-D}}&\multicolumn{5}{c}{\textsc{projUNN-T}}\\
\hline
   Width &   {\small RGD} &   {\small LSTM} &   {\small ScoRNN} &   {\small ExpRNN} &   {\small DT$_{\infty}$} &   {\small DT$_{100}$} &   {\small DT$_{1}$} &   k=1 &   2 &   4 &   8 &   16 &k=1 &   2 &   4 &   8 &   16  \\
\hline
     116 &  92.5 &   91.8 &      -   &      -   &              -   &           -   &         -   &     92.8 &     93.0   &     93.0   &     92.9 &      {\bf 93.2} &                   92.5 &             92.6 &             92.5 &            93.0   &              92.8  \\
     170 &   -   &   92.0   &     94.8 &     94.9 &             95.0   &          95.1 &        {\bf 95.2} &94.3 &     94.3 &     94.4 &     94.7 &      94.3 &              94.4 &             94.3 &             94.4 &             94.1 &              94.3 \\
     360 &  93.9 &   92.9 &     96.2 &     96.2 &             {\bf 96.5} &          96.4 &        96.3 &     96.4 &     96.4 &     96.3 &     96.3 &      {\bf 96.5} &                 96.3 &             96.3 &             96.4 &             96.2 &              96.4 \\
     512 &  94.7 &   92.0   &     96.6 &     96.6 &             96.8 &          96.7 &        96.7 &     {\bf 97.0}   &     {\bf 97.0}   &     96.8 &     96.9 &      {\bf 97.0}   &                96.7 &             96.7 &             96.8 &             96.8 &              96.7  \\
\hline
\end{tabular}
\vspace{-0.3cm}
\end{table*}

\begin{figure}[t!]
    \centering
    \begin{minipage}{0.6\linewidth}
    \centering
    \includegraphics[width=\linewidth]{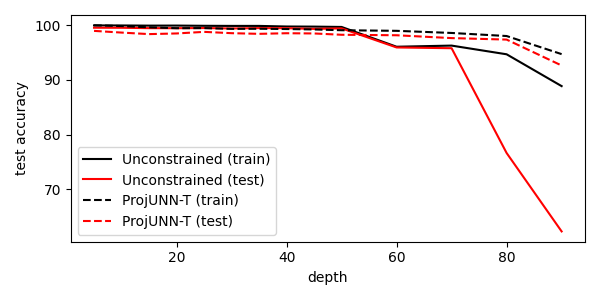}
    \end{minipage}
    \begin{minipage}{0.39\linewidth}
    \caption{\small \textsc{projUNN} can more stably train very deep CNNs. Training on MNIST is done for $50$ epochs in all cases with conv2d-BN-ReLU blocks (repeated ``depth'' times) and learning rate cross-validation (RMSprop), 32 channels throughout, and a final linear classifier. For $100$ epochs and a depth of $100$, we obtain $92.7, 23.5$ for the train/test accuracy of unconstrained CNN, and $95.7, 94.6$ for projUNN-T.}
    \label{fig:cnn_U}
    \end{minipage}
\end{figure}

\paragraph{CNN experiments} To explore the performance of our \textsc{projUNN} training algorithm for convolutional layers, we first analyzed its performance on CIFAR10 classification using a Resnet architecture \cite{he2016deep}. Our aim was not to ``beat" benchmarks but to provide an honest comparison of the performance of \textsc{projUNN} to existing methods. In fact, as noted earlier, enforcing unitarity generically results in a drop in accuracy for commonly used architectures. Consistent with prior work \cite{trockman2021orthogonalizing} we employ data-augmentation of random translations and left-right flips. Previous analysis in the RNN setting showed that rank $k=1$ is sufficient for convergence so we always set $k=1$ when using \textsc{projUNN} in the convolutional setting. For Resnet9 trained using the RMSprop optimizer, \textsc{projUNN-T} and \textsc{projUNN-D} reached $80.75\%$ and $82.06\%$ accuracy respectively, matching or outperforming reported results from existing unitary CNN models which achieved accuracies of $80.72\%$ for BCOP \cite{li2019preventing} and $81.70\%$ for Cayley \cite{trockman2021orthogonalizing} (further details in \Cref{app:cifar10_cnn}). Note, that all of these methods resulted in a performance drop compared to the standard model (without unitary constraints) which achieved accuracy of $92.26\%$. Hence, we believe that there remain a large potential for unitary models to close this gap. Separate from just performance and to motivate the use of unitary parameterization, we provide in \cref{fig:cnn_U}, test accuracy results from a simple CNN model with progressively increasing depth trained with and without unitary parameterization on MNIST data. We observe that unitary weights might provide benefits for vanilla CNN architectures that have not been designed to handle very deep settings. Of course, various techniques and tricks have been designed to enable CNNs to be trainable at large depths \cite{xiao2018dynamical,he2016deep,bjorck2018understanding}. Unitary convolutions, which are simple and theoretically motivated, can potentially be used either separately or in-tandem with these other techniques.

\vspace{-0.2cm}
\section{Discussion}
Our \textsc{projUNN} shows that one need not sacrifice performance or runtime in training unitary neural network architectures. Our results broadly take advantage of the approximate low rank structure of parameter gradients to perform updates at nearly optimal runtime. Looking beyond the setting studied here, it is an interesting question how our framework can be applied to other neural network architectures or parameter manifolds. Group convolutional neural networks and Riemannian gradient descent offer two promising avenues for further application of our techniques.

\bibliography{main.bib} 
\bibliographystyle{plain}

\section*{Checklist}


\begin{enumerate}

\item For all authors...
\begin{enumerate}
  \item Do the main claims made in the abstract and introduction accurately reflect the paper's contributions and scope?
    \answerYes{Contributions and scope of the paper are specified in the abstract and introduction.}
  \item Did you describe the limitations of your work?
    \answerYes{Limitations are addressed throughout this work, including after the descriptions of the algorithm in \Cref{sec:projunnd_alg} and \Cref{sec:projunn_conv}, in \Cref{app:numerical_stability}, in our comparison to related work in \Cref{app:prior_works}, in the runtime comparison of \Cref{app:runtime}, and various other places.}
  \item Did you discuss any potential negative societal impacts of your work?
    \answerNA{}
  \item Have you read the ethics review guidelines and ensured that your paper conforms to them?
    \answerYes{We confirm that we have read the ethics guideline to ensure our paper conforms to it.}
\end{enumerate}

\item If you are including theoretical results...
\begin{enumerate}
  \item Did you state the full set of assumptions of all theoretical results?
    \answerYes{All assumptions are given in the sections themselves or part of the statements of the relevant theorems, propositions, and lemmas.}
        \item Did you include complete proofs of all theoretical results?
    \answerYes{Proofs are included for all results and deferred to the appendix.}
\end{enumerate}

\item If you ran experiments...
\begin{enumerate}
  \item Did you include the code, data, and instructions needed to reproduce the main experimental results (either in the supplemental material or as a URL)?
    \answerYes{An anonymized demo is included in this anonymous manuscript. Full code will be made available upon publication.}
  \item Did you specify all the training details (e.g., data splits, hyperparameters, how they were chosen)?
    \answerYes{Relevant details are included in \Cref{sec:experiments} and \Cref{app:experiment_backup}. }
        \item Did you report error bars (e.g., with respect to the random seed after running experiments multiple times)?
    \answerNo{we employed the same experiment setup as prior work that consists in reporting the best test performance based on validation set performances after cross-validation. As we report such numbers for a variety of settings, error bars could be derived from our tables.}
        \item Did you include the total amount of compute and the type of resources used (e.g., type of GPUs, internal cluster, or cloud provider)?
    \answerYes{Relevant details are included in \Cref{app:network_architectures}.}
\end{enumerate}

\item If you are using existing assets (e.g., code, data, models) or curating/releasing new assets...
\begin{enumerate}
  \item If your work uses existing assets, did you cite the creators?
    \answerYes{Please refer to \Cref{app:network_architectures}.}
  \item Did you mention the license of the assets?
    \answerYes{Please refer to \Cref{app:network_architectures}.}
  \item Did you include any new assets either in the supplemental material or as a URL?
    \answerNA{}
  \item Did you discuss whether and how consent was obtained from people whose data you're using/curating?
    \answerNA{}
  \item Did you discuss whether the data you are using/curating contains personally identifiable information or offensive content?
    \answerNA{}
\end{enumerate}

\item If you used crowdsourcing or conducted research with human subjects...
\begin{enumerate}
  \item Did you include the full text of instructions given to participants and screenshots, if applicable?
    \answerNA{}
  \item Did you describe any potential participant risks, with links to Institutional Review Board (IRB) approvals, if applicable?
    \answerNA{}
  \item Did you include the estimated hourly wage paid to participants and the total amount spent on participant compensation?
    \answerNA{}
\end{enumerate}

\end{enumerate}

\clearpage
\appendix
\newgeometry{top=1in,bottom=1in,left=1in,right=1in}

\onecolumn
\numberwithin{equation}{section}

\section{Mathematical background}
\label{app:math_background}

\subsection{Lie groups and Lie algebras}
Here, we provide a brief mathematical background to Lie groups and Lie algebras with a particular focus on the unitary and orthogonal groups. For a more comprehensive overview, we recommend \cite{hall2015lie}. 

Though Lie groups are typically defined with respect to differentiable manifolds, we restrict ourselves here to the subset of matrix Lie groups which is less general but allows for a more concise and simpler theoretical overview. Informally, Lie groups are groups whose elements are specified by a set of parameters that vary smoothly and continuously, \textit{i.e.,} the group is also a differentiable manifold. Specific to matrices, Lie groups are commonly defined with respect to the general linear group $GL(n, \mathbb{C})$ denoting the set of invertible $n \times n$ matrices with complex valued entries \cite{hall2015lie}. Lie groups are closed subgroups of $GL(n, \mathbb{C})$ that have the following smoothness property.
\begin{definition}[Matrix Lie groups \cite{hall2015lie}]
    A matrix Lie group is any subgroup of $GL(n, \mathbb{C})$ with the property that any sequence of matrices $\mA_m \in \mathbb{C}^{n \times n}$ in the subgroup converge to a matrix $\mA$ that is either an element of the subgroup or not invertible (\textit{i.e.,} not in $GL(n, \mathbb{C})$).
\end{definition}

Two important Lie groups studied in this work are the unitary and orthogonal groups whose definitions are copied below.
\begin{align}
    O(n) &= \left\{ \mM \in \mathbb{R}^{n \times n} | \mM \mM^\intercal = \mM^\intercal \mM = \mI  \right\}, \\
    U(n) &= \left\{ \mM \in \mathbb{C}^{n \times n} | \mM \mM^\dagger = \mM^\dagger \mM = \mI  \right\}.
\end{align}

The Lie algebra is the tangent space of a Lie group at the identity element. To observe this, we introduce the matrix exponential map which is central to the connection between Lie groups and their corresponding Lie algebras. 
\begin{equation}
    \exp(\mM) = \sum_{k=0}^\infty \frac{\mM^k}{k!}.
\end{equation}
For compact groups, the exponential map is a smooth map whose image is the connected component to the identity of the Lie group \cite{kirillov2008introduction,hall2015lie}. The special orthogonal and unitary groups are both compact and connected so the exponential map is surjective for these groups (\textit{i.e.,} for every group element, there exists an element of the Lie algebra whose exponential is equal to that group element). However, the orthogonal group has two connected components, \textit{i.e.,} elements with positive and negative determinant, and the image of exponential map are only orthogonal matrices with positive determinant.

Since the matrix exponential map is a smooth function, we can take the derivative of the exponential map with respect to a parameter as below.
\begin{equation}
    \frac{d}{dt} \exp(t\mX) = \mX \exp(t\mX) = \exp(t\mX)  \mX,
\end{equation}
and thus,
\begin{equation}
    \frac{d}{dt} \exp(t\mX) \Bigr|_{t=0} = \mX.
\end{equation}
The above gives us the Lie algebra to a given group. 
\begin{definition}[Lie algebra \cite{hall2015lie}]
Given a Lie group $G$, the Lie algebra $\mathfrak{g}$ of $G$ is the set of matrices $\mX$ such that $e^{t\mX} \in G$ for all $t \in \mathbb{R}$.
\end{definition}
Typically, Lie algebras of a Lie group are denoted with Gothic or Fraktur font. Using the above definition, one can construct the corresponding Lie algebras. As an example, consider the unitary group where given a matrix $\mU \in U(n)$ and $\mX \in \mathfrak{u}(n)$,
\begin{equation}
    \mU^{-1} = \mU^\dagger \iff \exp(t\mX)^{-1} = \exp(-t\mX)= \exp(t\mX)^\dagger = \exp(t\mX^\dagger),
\end{equation}
and since the above holds for all $t \in \mathbb{R}$, we can differentiate the above at $t=0$, obtaining the property of elements of the unitary Lie algebra that $-\mX = \mX^\dagger$ as seen in the main text:
\begin{equation}
\begin{split}
    \frac{d}{dt} \exp(-t\mX) \Bigr|_{t=0} &= \frac{d}{dt}  \exp(t\mX^\dagger)\Bigr|_{t=0}\\
    -\mX &= \mX^\dagger.
\end{split}
\end{equation}
Proceeding in a similar fashion with the orthogonal group, we obtain the following result copied from the main text,
\begin{align}
    \mathfrak{o}(n) &= \left\{\boldsymbol{A} \in \mathbb{R}^{n \times n}: \mA + \mA^\intercal = 0 \right\}, \\
    \mathfrak{u}(n) &= \left\{\boldsymbol{A} \in \mathbb{C}^{n \times n}: \mA + \mA^\dagger = 0 \right\}.
\end{align}

\subsection{Projecting onto the group or algebra}
Projections onto the orthogonal/unitary groups or their tangent spaces are central to optimizing over the space of orthogonal/unitary matrices. We focus the discussion here to the case of unitary matrices, but note that all of the following statements apply to orthogonal matrices as well by simple adjustments such as replacing the conjugate transpose  $(^\dagger)$ with the transpose $(^\intercal)$. The tangent space $T_{\mU}$ to a matrix $\mU \in U(n)$ is equal to
\begin{equation}
    T_{\mU} U(n) = \left\{ \mX \in \mathbb{C}^{n \times n}: \mU^\dagger \mX + \mX^\dagger \mU = 0  \right\} .
\end{equation}

Given the canonical inner product $\langle \mA, \mB \rangle = \operatorname{Re}\left(\Tr[\mA^\dagger \mB]\right)$, we can show that the orthogonal projection onto the tangent space $T_{\mU} U(n)$ is equal to that given by \cref{lemma:TangentProjection} copied below.
\orthogonalProjectionTangent*
\begin{proof}
By the definition of an orthogonal projection, we need that $\Pi_{T_\mU}(\Pi_{T_\mU}(\cdot)) = \Pi_{T_\mU}(\cdot)$ and for all $\mX,\mY \in T_{\mU} U(n)$ $\Pi_{T_\mU}(\mX) = \mX$ and $\langle \Pi_{T_\mU}(\mX), \mX - \Pi_{T_\mU}(\mX) \rangle = 0$. The first two properties are straightforward to check. The last property can be shown as below using the definition of $\Pi_{T_\mU}$ and cyclic property of trace: 
\begin{equation}
    \begin{split}
        \langle \Pi_{T_\mU}(\mX), \mX - \Pi_{T_\mU}(\mX) \rangle &= \frac{1}{4}\operatorname{Re}\left(\Tr[ (\mX - \mU \mX^\dagger \mU)^\dagger (\mX + \mU \mX^\dagger \mU)  ]\right)  \\
        &= \frac{1}{4}\operatorname{Re}\left(\Tr[\mX^\dagger \mX] - \Tr[\mU^\dagger \mX \mX^\dagger \mU] + \Tr[\mX^\dagger \mU \mX^\dagger \mU] - \Tr[\mU^\dagger \mX \mU^\dagger \mX ]  \right) \\
        &= \frac{1}{4}\operatorname{Re}\left(\Tr[(\mX^\dagger \mU)^2] - \Tr[((\mX^\dagger \mU)^2)^\dagger ]  \right) \\
        &= 0.
    \end{split}
\end{equation}

Furthermore, we have for all $\mY \in T_{\mU}U(n)$ using triangle inequality and unitary invariance of the Frobenius norm:
\begin{equation}
\begin{split}
    \left\| \Pi_{T_\mU}(\mX) - \mX  \right\|_F &= \left\| \frac{1}{2}(\mX - \mU \mX^\dagger \mU) - \mX  \right\|_F \\
    &\leq \frac{1}{2} \left\| \mY - \mX \right\|_F + \frac{1}{2} \left\| \mY + \mU \mX^\dagger \mU  \right\|_F \\
    &= \frac{1}{2} \left\| \mY - \mX \right\|_F + \frac{1}{2} \left\| -\mU \mY^\dagger \mU + \mU \mX^\dagger \mU  \right\|_F \\
    &= \frac{1}{2} \left\| \mY - \mX \right\|_F + \frac{1}{2} \left\| \mY - \mX \right\|_F \\
    &= \| \mY - \mX \|_F,
\end{split}
\end{equation}
which proves that $\Pi_{T_\mU}(\cdot)$ projects onto the closest matrix $\mY \in T_{\mU}U(n)$ in Frobenius norm.
\end{proof}

Since the set of unitary/orthogonal matrices does not form a vector space, an orthogonal projection is not a well defined operation in this space. However, it is still valid to ask what is the ``closest" unitary/orthogonal matrix to a given matrix in a given norm. This is exactly what is stated in \cref{lemma:DirectProjection} copied from the main text and proven below.

\closestUnitary*
\begin{proof}
We follow the approach of \cite{keller1975closest} to prove this result. To shorten our notation, let $\mV = \mA (\mA^\dagger \mA)^{-1/2}$. Given any unitary $\mU \in U(n)$, let $\mU = \mM + \mV$ for the properly chosen $\mM \in \mathbb{C}^{n \times n}$. From unitarity of $\mU$ and $\mV$, we have 
\begin{equation}
\label{eq:relation_V_M}
    \mM\mV^\dagger + \mM\mM^\dagger + \mV\mM^\dagger = 0.
\end{equation}

Then,
\begin{equation}
\begin{split}
    \|\mA - \mU\|_F^2 &= \|\mA - \mM - \mV\|_F^2 \\
    &= \| \mA - \mV \|_F^2 + \Tr[\mM\mV^\dagger + \mM\mM^\dagger + \mV\mM^\dagger] - \Tr[ \mM^\dagger \mA + \mA^\dagger \mM] \\
    &= \| \mA - \mV \|_F^2 - \Tr[ \mM^\dagger \mA + \mA^\dagger \mM] \\
    &= \| \mA - \mV \|_F^2 - \Tr[ \mM^\dagger \mV (\mA^\dagger \mA)^{1/2}  + (\mA^\dagger \mA)^{1/2} \mV^\dagger \mM],
\end{split}
\end{equation}
and since from \cref{eq:relation_V_M} we have that $\mM\mV^\dagger + \mV\mM^\dagger = - \mM \mM^\dagger$,
\begin{equation}
\begin{split}
    \|\mA - \mU\|_F^2
    &= \| \mA - \mV \|_F^2 + \Tr[ (\mA^\dagger \mA)^{1/2} \mM \mM^\dagger ].
\end{split}
\end{equation}
The second term above is non-negative since $\Tr[ (\mA^\dagger \mA)^{1/2} \mM \mM^\dagger ] = \Tr[ \mM^\dagger (\mA^\dagger \mA)^{1/2} \mM  ]$ and $(\mA^\dagger \mA)^{1/2}$ is positive semi-definite. Thus, for all $\mU \in U(n)$,
\begin{equation}
\begin{split}
    \|\mA - \mU\|_F^2
    &\geq \| \mA - \mV \|_F^2,
\end{split}
\end{equation}
which proves the result.
\end{proof}

\section{Review of previous unitary neural network techniques}
\label{app:prior_works}

The integration of orthogonal/unitary matrices into neural networks is broadly aimed at maintaining stability in neural networks with many layers. For vanilla recurrent neural networks, repetitive application of the hidden-to-hidden transformation matrix exponentially amplifies or decays the eigenvalues of the transformation, thus resulting in exponentially large or small gradients. Similarly, in deep network architectures where weight matrices are drawn randomly, the norms of hidden states can similarly grow or decay exponentially with added layers. Enforcing unitarity or orthogonality of neural network layer transformations offers a straightforward method to address these issues of instability since orthogonal/unitary matrices have eigenvalues of unity. 

To establish notation and provide motivation for later analysis, consider a RNN whose input is a sequence of vectors $\vx(t)$ with hidden layer $\vh(t)$ updated according to the following rule:

\begin{equation}
    \vh^{(t)}=\sigma(\mM\textbf{x}^{(t)}+\mW \vh^{(t-1)})
    \label{eq:vanilla_RNN}
\end{equation}




In the unitary or orthogonal formulation for RNNs, the matrix $\boldsymbol{W}$ in \cref{eq:vanilla_RNN} is replaced by a unitary or orthogonal matrix $\mU$. During optimization of a unitary RNN, one must enforce the unitarity or orthogonality of $\mU$ during training.

Existing methods to parameterize and enforce unitarity/orthogonality in neural network layers can be separated into three categories depending on the method of parameterization. We discuss each of these in detail below:
\begin{itemize}
    \item \textbf{Layer-wise transformations:} among the first methods employed in this line of work, these methods parameterize orthogonal/unitary matrices in a layer-wise fashion where each layer is a parameterized orthogonal/unitary matrix. Example parameterizations include Givens rotations and Householder reflections. These methods are efficient when there are not many layers included in the parameterization and typically not employed when full access to the unitary/orthogonal group is required as full parameterization of an $n \times n$ unitary/orthogonal matrix requires $O(n)$ layers.
    \item \textbf{Lie algebra parameterization:} motivated by the fact that staying on the manifold of the Lie algebra is often easier than staying on the manifold of a Lie group, these methods parameterize the Lie algebra of the unitary/orthogonal groups and later obtain the actual unitary/orthogonal matrix by implementing the matrix exponential map. This matrix exponential map is typically the most costly step in these methods as one can either perform it directly (\textit{e.g.,} using an SVD) or approximate it via Taylor series or Padé approximations which require repetitive application of matrices when applying it to an input. Though we adapt techniques from these methods in our work to efficiently perform gradient updates, we do not parameterize matrices in the Lie algebra.
    \item \textbf{Matrix entry parameterization:} as in typical neural network architectures, this method parameterizes an orthogonal/unitary matrix by directly parameterizing the entries of the matrix. This method is optimal in the ``forward" direction since performing the transformation on an input simply requires matrix multiplication (as in vanilla architectures). However, updates to the matrix will no longer maintain unitarity or orthogonality, and one must employ methods to project these updates back onto the unitary/orthogonal manifold. This method is employed in our work.
\end{itemize}

We now present each of the above methods in the order given.

\paragraph{Layer-wise transformations}

Early algorithms \cite{jing2017tunable,arjovsky2016unitary,mhammedi2017efficient} maintained unitarity by parameterizing a matrix $\mU$ as a series or layered set of k parameterized unitary transformations:
\begin{equation}
    \mU = \boldsymbol{F}_{uni}^{(1)}(\theta_1) \boldsymbol{F}_{uni}^{(2)}(\theta_2) \cdots  \boldsymbol{F}_{uni}^{(k)}(\theta_k).
\end{equation}
where $\boldsymbol{F}_{uni}^{(i)}(\theta_i)$ indicates a transformation that maps parameters $\theta_i$ into a unitary matrix. These transformations include parameterized Givens rotations or Householder reflections. As a concrete example, consider the Givens rotation parameterization which is an orthogonal matrix that performs a rotation of the $i,j$-th dimensions by an amount $\theta$:
\begin{equation}
    \mG(i,j,\theta) =
    \begin{bmatrix}   1   & \cdots &    0   & \cdots &    0   & \cdots &    0   \\
                      \vdots & \ddots & \vdots &        & \vdots &        & \vdots \\
                         0   & \cdots &    \cos \theta   & \cdots &   -\sin \theta   & \cdots &    0   \\
                      \vdots &        & \vdots & \ddots & \vdots &        & \vdots \\
                         0   & \cdots &    \sin \theta   & \cdots &    \cos \theta   & \cdots &    0   \\
                      \vdots &        & \vdots &        & \vdots & \ddots & \vdots \\
                         0   & \cdots &    0   & \cdots &    0   & \cdots &    1
       \end{bmatrix},
\end{equation}
or in other words, entries $G_{ii} = G_{jj} = \cos{\theta}$, $G_{ij} = -G_{ji} = \sin{\theta}$, and $G_{kl} = \delta_{kl}$ for all other entries.

In general, at least $k=O(n)$ layers are needed to achieve a full parameterization of the $n \times n$ unitary matrices. Training is performed by updating the parameters $\theta_i$ within each layer. However, for large matrices, due to the fact that $O(n)$ layers are required to parameterize the full space of transformations, these algorithms are only efficient when parameterizations over a subset of the space of unitary/orthogonal matrices suffices. Achieving this balance of parameterization versus performance is challenging as prior work -- especially work studying the learnability of unitary matrices in quantum computation -- has shown that loss landscapes over the unitary/orthogonal manifold may contain many bad local minima \cite{kiani2020learning,frerix2019approximating,anschuetz2021critical,de2013closer}.

\paragraph{Lie algebra parameterization}
As a reminder, the Lie algebra of the orthogonal and unitary groups are the set of skew symmetric ($\mathfrak{o}(n)$) and skew Hermitian ($\mathfrak{u}(n)$) matrices,
\begin{align}
    \mathfrak{o}(n) &= \left\{\boldsymbol{A} \in \mathbb{R}^{n \times n}: \mA + \mA^\intercal = 0 \right\}, \\
    \mathfrak{u}(n) &= \left\{\boldsymbol{A} \in \mathbb{C}^{n \times n}: \mA + \mA^\dagger = 0 \right\}.
\end{align}

Transformations from the Lie algebra to the Lie group are performed using the exponential map which is surjective onto the connected components of the identity:
\begin{equation}
    \exp (\mX) = \sum_{k=0}^\infty\frac{\mX^k}{k!} = \mI + \mX + \frac{1}{2}\mX^2 + \frac{1}{6}\mX^3 + \cdots
\end{equation}

Thus, these methods parameterize the full space of unitary or special orthogonal matrices. However, the orthogonal group has two connected components (matrices with determinant equal to one and negative one) so the exponential maps only onto the positive determinant matrices. 

Note that the Lie algebra is a vector space so the sum of two matrices in the Lie algebra is also in the algebra. Thus, gradient updates can typically be very easily performed. However, the exponential map is often expensive to compute, and much prior work has focused on implementing this transformation efficiently. \cite{helfrich2018orthogonal} and \cite{singla2021skew} employ approximations to the matrix exponential via Padé or Taylor series approximations. For example, in the Taylor series approximation, one simply truncates the Taylor series to $K$ entries:
\begin{equation}
    \exp (\mX) \approx \sum_{k=0}^{K}\frac{\mX^k}{k!}, \; \; \; \; \; \left\| \exp (\mX) - \sum_{k=0}^{K}\frac{\mX^k}{k!} \right\|_2 \leq \frac{\| \mX \|_2^k}{k!},
\end{equation}
where the norm $\| \cdot \|_2$ indicates the spectral norm (\textit{i.e.,} largest singular value). This application of the matrix exponential comes at an added cost both when calculating derivatives and when applying the matrix in the forward direction (\textit{e.g.,} when one simply desires the output of a network). Applying the approximation above to the input of a layer requires $O(K)$ applications of the matrix. The matrix $\exp(\mX)$ can be explicitly constructed to avoid this added cost; however, obtaining this matrix requires a one time cost of $O(K)$ matrix-matrix multiplication operations which can be costly for large matrices. 

The Padé approximants are more typically used in approximating the exponential map since they can guarantee that the approximated matrix is actually a unitary/orthogonal matrix (as opposed to a simple Taylor series approximation which does not provide this guarantee). A Padé approximant is an optimal rational function approximation to a given function. The Padé approximant to the exponential map takes the form $\exp(\mA) \approx r_{mn}(\mA) = p_m(\mA) q_n(\mA)^{-1}$ where $p_m(\cdot)$ and $q_n(\cdot)$ are order $m$ and $n$ polynomials respectively which have closed forms:
\begin{equation}
\label{eq:pade_approx}
    p_m(\mA) = \sum_{k=0}^m \frac{(m+n-k)!m!}{(m+n)!(m-k)!k!}\mA^k, \;\;\;\;\; q_n(\mA) = \sum_{k=0}^n \frac{(m+n-k)!n!}{(m+n)!(n-k)!k!}(-\mA)^k.
\end{equation}

The error of the approximation scales as $O(\|\mA\|^{m+n+1})$ \cite{higham2009scaling}. For unitary/orthogonal matrices, this approximation has the feature that for order $m=n$, applying the approximation to an element of the Lie algebra of the orthogonal/unitary matrices outputs a orthogonal/unitary matrix. Notably, setting $m=n=1$ obtains the Cayley transform which was used in \cite{helfrich2018orthogonal}. 

Finally, we note that approximations to the exponential function often can bias the gradients so in the RNN setting, \cite{lezcano2019cheap} actually perform an SVD to obtain the actual output of the exponential map and analytically calculate gradients.

\paragraph{Matrix entry parameterization}
Parameterizing the entries of a matrix $\mU$ directly is an obvious and simple means of constructing a unitary or orthogonal matrix. However, the set of unitary/orthogonal matrices do not form an algebra so one cannot simply update these matrices by simply adding a gradient update to a given matrix. Instead, updates to these matrices must be performed by projecting the updated matrix back onto the set of unitary/orthogonal matrices. Obviously, $\mU$ must be initialized to be unitary/orthogonal in these methods.

Given a gradient update $\mG$, the matrix $\mU+ \mG$ is typically no longer unitary. Methods of Riemannian optimization are employed to update the matrix in a unitary/orthogonal fashion. One means of updating the matrix is via the Cayley transform which computes a parametric curve in the direction, employed in \cite{li2020efficient,wisdom2016full,liu2021orthogonal,trockman2021orthogonalizing}. The Cayley transform ``transports" $\mU$ in the direction of the projection of the gradient in the tangent space $\Pi(\mG)$. \textit{i.e.,} let $\mU(0)$ be the initial unitary matrix, then one can transport the matrix in the direction of $\Pi(\mG)$ (where $\Pi(\cdot)$ is the projection onto the tangent space) by a ``length" of $\alpha$ via the Cayley transform \cite{nishimori2005learning},
\begin{equation}
\label{eq:cayley_transform}
    \mU(\alpha) = \left( \mI - \frac{\alpha}{2} \Pi(\mG) \right)^{-1} \left( \mI + \frac{\alpha}{2} \Pi(\mG) \right) \mU(0).
\end{equation}
The above update formula requires a matrix inversion step which is the most costly step for large matrices. \cite{li2020efficient,liu2021orthogonal} approximate the transformation via a fixed point interation which avoids having to invert a matrix but still requires matrix-matrix multiplication. More generally, the Cayley transform is a first order Padé approximant to the exponential map which provides a connection between the matrix entry parameterization and the Lie algebra parameterization discussed previously \cite{higham2009scaling}, \textit{i.e.,} setting $m=n=1$ in \cref{eq:pade_approx} obtains the Cayley transform.

Our algorithms described in the main text directly parameterize matrix entries and thus follow this parameterization method. In performing the exponential map, we do not resort to any approximations since the exponential is efficient to perform in low rank settings. Updates to unitary matrices are either projected directly onto the closest unitary matrix in Frobenius norm (\textsc{projUNN-D}) or transported along the geodesic in the direction of the projection of the gradient onto the tangent space (\textsc{projUNN-T}). We refer the reader to the main text for a full description of our methodology.

\subsection{Orthogonal or unitary convolution}
\label{app:convolution_unitary}

Since convolutions are linear operators, one can perform orthogonal or unitary convolutions using similar techniques as in the general case studied above. As a reminder, for 2-D convolution, given input tensor $\tX \in \mathbb{C}^{ M \times N \times C}$ where $C$ denotes the number of channels of the $M \times N$ input, linear convolution (or technically cross-correlation) with a filter $\tW \in \mathbb{C}^{M \times N \times C \times C}$ takes the form
\begin{equation}
    \left[\conv_{\tW}(\tX)\right]_{p,q,d} = \left[\tW * \tX\right]_{p,q,d} = \sum_{c = 1}^{C} \sum_{m=1}^{M} \sum_{n=1}^N \tW_{m,n,d,c} \tX_{p+m,q+n,c} ,
\end{equation}
where the indexing above is assumed to be cyclic (taken modulus the corresponding dimension). Unitary or orthogonal convolutions form a subset of filters $\tW$ which preserve the norm, \textit{i.e.,} $\|\conv_{\tW}(\tX)\| = \|\tX\| $. Equivalently, $\conv_{\tW}(\cdot)$ is orthogonal/unitary if the Jacobian of the transformation is also orthogonal/unitary. 

The first method for orthogonalizing convolutions in neural networks was proposed in \cite{sedghi2018singular}. Their algorithm showed how to calculate the singular values of a linear convolution operation and then performed a series of projections on a given convolutional filter to project it onto an operator norm ball. Their algorithm made use of the convolution theorem which shows that linear convolution is diagonalized by Fourier transformations. \cite{li2019orthogonal} also proposed a method to orthogonalize convolutions by performing an SVD of the linearization of a convolution and then bounding the singular values accordingly. This method outputs linear transformations that are close to a convolution operation but no longer necessarily maintaining the equivariance of a standard convolution. Also, their method is expensive for large matrices as it requires implenting an SVD.

More recent methods implement orthogonal convolutions via approximations to the exponential map \cite{trockman2021orthogonalizing,singla2021skew}. These methods first form convolution filters $\tW$ whose Jacobians $\tJ(\tW)$ satisfy the skew symmetry property of the orthogonal group:
\begin{equation}
    \tJ(\tW) = - \tJ(\tW)^\intercal \; \equiv \; \tW = -\operatorname{conv-transpose}(\tW),
\end{equation}
where $\operatorname{conv-transpose}$ is the equivalent transposition operation in the filter space defined as \cite{singla2021skew}
\begin{equation}
    \left[\operatorname{conv-transpose}(\tL)\right]_{m,n,c,d} = \tL^*_{M-1-m,N-1-n,d,c}
\end{equation}
for a filter $\tL \in \mathbb{R}^{M \times N \times D \times C}$.

A given filter $\tW$ can be transformed into a skew symmetric convolution filter by simply applying $\tL = \tW - \operatorname{conv-transpose}(\tW)$. As in the general case, one can apply the exponential map to the convolution filter to perform orthogonal convolution.
\begin{equation}
    \exp[\conv_{\tL}] (\tX) = \tX + \tL * \tX + \frac{1}{2} \tL *^2 \tX  + \frac{1}{6} \tL *^3 \tX + \cdots ,
\end{equation}
where $*^p$ indicates the convolution is applied $p$ times, \textit{e.g.,} $\tL *^2 \tX = \tL * \tL * \tX$. Since the above operation is computationally expensive \cite{singla2021skew} implement a $k$-th order Taylor approximation to the exponential map:
\begin{equation}
    \exp[\conv_{\tL}] (\tX) \approx \sum_{p=0}^k \frac{1}{p!} \tL *^p \tX.
\end{equation}
Given 2-D inputs of dimension $N \times N \times C$, applying convolution with filters with support over $W$ elements in each dimension has runtime $O(pC^2W^2N^2)$. The added factor $p$ in the runtime is required both upon training and evaluation of the network. Though this number is held constant, this adds a multiplicative overhead to running the algorithm both when evaluating an input and when training the algorithm. \cite{singla2021skew} use the above method to implement orthogonal convolution in their skew orthogonal convolution (SOC) algorithm. Similar techniques have been used to perform invertible (not necessarily orthogonal or unitary) convolutions in \cite{hoogeboom2020convolution}. 

There are two key considerations or drawbacks associated with using the Taylor expansion to perform unitary/orthogonal convolution as in \cite{singla2021skew}. First, approximations via the Taylor approximation necessarily include an error that can be accounted for by scaling the factor $p$. However, unlike the Padé approximants discussed earlier such as the Cayley approximant, the Taylor approximation to the exponential map does not return a unitary/orthogonal operator. Thus, the value $p$ needed to bound the error from any unitary/orthogonal operator also scales logarithmically with the desired error; however, this factor can also grow with the size of the filter, the number of channels, size of the input to a convolution operation, and especially the depth of the network. Second, Taylor approximations to a function can significantly bias the gradient. As noted in \cite{lezcano2019cheap}, their expRNN algorithm avoided using the Taylor or Padé approximation in the implementation of the unitary/orthogonal operation for this reason. As an example of how approximations can fail to represent the gradient, they provide the set of functions $f_n(x)=\sin(2 \pi n x) / n$ approximating $f=0$, where $f_n \to f$ uniformly but the derivatives do not converge to zero. Especially when constructing very deep networks, such instabilities can potentially cause issues in training.

Convolution operations over filters with a large support can be performed more efficiently in the Fourier domain as explored in various prior work \cite{mathieu2013fast,ben1999fast}. Specific to orthogonal convolution, \cite{trockman2021orthogonalizing} use the 2-D fast Fourier transform to more efficiently perform orthogonal convolution. For single channel inputs and outputs, convolution in the Fourier regime corresponds to pointwise multiplication over Fourier bases. Given multi-channel inputs and outputs, convolution in the Fourier regime corresponds to matrix multiplication over the blocks indexed by channels where each block corresponds to a specific Fourier basis, \textit{i.e.,} for a filter $\tW$ and input $\tX$:
\begin{equation}
\label{eq:app_conv_fourier}
    \left[\FFT \conv_{\tW}(\tX) \right]_{\widehat{r},\widehat{s},:} = \widehat{\tW}^*_{\widehat{r},\widehat{s},:,:} \; \left[\FFT\tX \right]_{\widehat{r},\widehat{s},:} \;,
\end{equation}
where $\widehat{\tW}_{\widehat{r},\widehat{s},:,:}$ is the representation of the filter in the Fourier regime. If the filter and and input are flattened in their spatial dimensions, we can represent the operation above as block-diagonal multiplication. For example, one can visually input the Fourier multiplication on an 2-channel input with spatial dimension $[2,2]$.
\begin{align*}
\centering
\small
    \left[\FFT \conv_{\tW}(\tX) \right] = \left[\begin{array}[text width = 10mm]{cccccccc}
    \aaa \widehat{\tW}^*_{1,1,1,1} &\aaa \widehat{\tW}^*_{1,1,1,2}&&&&&&\\
    \aaa \widehat{\tW}^*_{1,1,2,1} &\aaa \widehat{\tW}^*_{1,1,2,2}&&&&&&\\
    &&\bbb \widehat{\tW}^*_{1,2,1,1} &\bbb \widehat{\tW}^*_{1,2,1,2}&&&&\\
    &&\bbb \widehat{\tW}^*_{1,2,2,1} &\bbb \widehat{\tW}^*_{1,2,2,2}&&&&\\
    &&&&\ccc \widehat{\tW}^*_{2,1,1,1} &\ccc \widehat{\tW}^*_{2,1,1,2}&&\\
    &&&&\ccc \widehat{\tW}^*_{2,1,2,1} &\ccc \widehat{\tW}^*_{2,1,2,2}&&\\
    &&&&&&\ddd \widehat{\tW}^*_{2,2,1,1} &\ddd \widehat{\tW}^*_{2,2,1,2}\\
    &&&&&&\ddd \widehat{\tW}^*_{2,2,2,1} &\ddd \widehat{\tW}^*_{2,2,2,2}\\
  \end{array}\right] 
  \left[ \begin{array}{cc}
       \aaa\widehat{\tX}_{1,1,1}  \\
       \aaa\widehat{\tX}_{1,1,2}  \\
       \bbb\widehat{\tX}_{1,2,1}  \\
       \bbb\widehat{\tX}_{1,2,2}  \\
       \ccc\widehat{\tX}_{2,1,1}  \\
       \ccc\widehat{\tX}_{2,1,2}  \\
       \ddd\widehat{\tX}_{2,2,1}  \\
       \ddd\widehat{\tX}_{2,2,2}  \\
  \end{array} \right]
\end{align*}

\cite{trockman2021orthogonalizing} use the above technique, parameterizing filters in the Lie algebra of the orthogonal group. Filters are mapped into the orthogonal group using the Cayley transform (see \cref{eq:cayley_transform} for its form). Since the Cayley transform requires matrix inversion over the matrices above, their total runtime scales as $O(N^2C^2 \log(N) + N^2C^3) $ operations for convolution over $N \times N$ images with $C$ channels. Our methodology is adapted from the techniques in \cite{trockman2021orthogonalizing} and scales more efficiently as $O(N^2C \log(N) + kN^2C^2) $ time when the rank of gradient updates $k \ll N$. We instead parameterize the convolution filter directly in the Fourier regime, by parameterizing the individual blocks ($\widehat{\tW}_{\widehat{r},\widehat{s},:,:}$) of the block diagonal matrix above. One drawback of our method compared to \cite{trockman2021orthogonalizing} is that we cannot specify the support of the convolution in the real space of the Lie algebra as we parameterize matrices directly in the Fourier space. Of course, one can perform projections of the convolutions in Fourier space onto the bases spanned by the support of the elements in the Lie algebra; however, we opt instead to implement a regularizer which biases the filter towards elements of the desired space (\textit{e.g.,} local elements).

Finally, \cite{li2019preventing} performed orthogonal convolution by parameterizing a block of matrices and corresponding projectors in their Block Convolutional Orthogonal Parametrization
(BCOP) algorithm. Their method preceded those of \cite{trockman2021orthogonalizing} and \cite{singla2021skew}. However, their method has two key drawbacks: it only parameterizes a subset of the space of orthogonal convolutions and is slower than other methods as it adds additional parameters to connect the various components in the space of orthogonal convolution. 

Finally, methods have been proposed to iteratively or approximately maintain orthogonality. \cite{bansal2018can} propose a regularizer for convolutional layers which penalize matrices having singular values far from unity. \cite{huang2020controllable} propose using a Newton's method iteration to update linear transformations to be closer to orthogonal/unitary. Their method requires iterative matrix-matrix multiplication over the unrolled weight matrix which can become expensive for large images. 

\subsection{Other related works}
In the context of recurrent neural networks, designing RNNs to learn long sequences of data has a rich history of study. Some of the first and most celebrated algorithms include the long short-term memory networks (LSTM) \cite{hochreiter1997long} and gated reccurent unit (GRU) networks \cite{cho2014learning}. Since these works, various techniques have been used to more optimally avoid issues with learning long-sequence data. Beyond the unitary RNNs discussed earlier, some work has explored using bi-directional RNNs \cite{hannun2014first,koutnik2014clockwork}, including those that have a more biologically inspired design \cite{berglund2015bidirectional}. These networks perform well on the copy task. 

Another line of research studies the stability properties of continuous state space models which can be converted into a RNN formulation \cite{engelken2020lyapunov}. Some algorithms construct continuous state space models whose attractors are stable points in the dynamical system \cite{chang2019antisymmetricrnn,erichson2020lipschitz}. More recently, continuous state space models have been designed with hidden state transformations that are customized to memorize data by limiting the learning over time to a subset of orthogonal polynomials \cite{gu2020hippo,gu2021efficiently,voelker2019legendre}. Here, hidden states are in a sense parameterized over a set of polynomial coefficients \cite{voelker2019legendre}. These algorithms perform very well on tasks such as the copy task or Permuted MNIST but do not include unitary/orthogonal transformations in their network. In fact, more recent models \cite{gu2021efficiently,voelker2019legendre} achieve slightly higher scores on the TIMIT and permuted MNIST benchmarks compared to the unitary RNN formulations studied here.

In the convolutional setting, \cite{xiao2018dynamical} form very deep convolutional neural networks by initializing parameters to construct norm-preserving orthogonal transformations. Orthogonality is not preserved during training however. \cite{wang2020orthogonal} bias filters towards orthogonality by implementing a regularizer that penalizes the weights when norms of outputs are larger or smaller than norms of inputs. The networks used in \cite{xiao2018dynamical,wang2020orthogonal} do not necessarily preserve the orthogonality property of their convolutional transformations during training. \cite{kautsky1994matrix} study orthogonal convolutions from the basis of unitary/orthogonal wavelets showing how to represent convolution in terms of these wavelets.   

Low rank approximations have been used in prior work in deep learning to prune neural network models \cite{yu2017compressing,swaminathan2020sparse} and accelerate convolutions \cite{jaderberg2014speeding,tai2015convolutional,ioannou2015training}. More related to our work, recent research has compressed gradients efficiently using low rank compression methods \cite{vogels2019powersgd}. Although their focus was in sharing compressed information across computing units, their work lends support to the notion that the information of a gradient can be effectively and efficiently stored in low rank components. Research in learning theory has also noted connections between the stable rank of neural network parameters and generalization. \cite{arora2018stronger} prove a generalization bound by compressing the models in the hypothesis class based on the stable rank of individual layers. \cite{martin2018implicit} study the phases of learning from a random matrix theory setting showing that the stable rank of a matrix tends to decay over training. Various works have studied the implicit bias induced by optimization algorithms such as gradient descent showing that in many cases, the implicit bias is towards low rank solutions \cite{gunasekar2018implicit,davenport2016overview}. Such bias towards low rank solutions has been explicitly proven in the setting of 2-layer matrix factorization \cite{li2018algorithmic}, deep matrix factorization \cite{arora2019implicit}, linear group convolutional networks \cite{lawrence2021implicit}, and matrix recovery from Pauli measurements \cite{liu2011universal}. We note that relating low-rankness to the generalization ability of learning algorithms is a richly studied topic, and there are numerous papers that we did not mention here.

Finally, a wide range of work in quantum computation and quantum machine learning studies unitary learning algorithms in the context of quantum systems \cite{biamonte2017quantum}. In fact, since the state space of closed quantum systems is transformed by unitary operators, quantum computers offer a unique platform for performing machine learning on the unitary manifold. This is an active area of research and existing methods for performing quantum machine learning on quantum architectures include variational algorithms which parameterize a quantum circuit and update the parameters via classical optimization methods \cite{cerezo2020variational,wecker2015progress,yuan2019theory,khoshaman2018quantum,kiani2021quantum,wang2019accelerated}, quantum neural networks which design analogues to classical deep neural networks \cite{killoran2019continuous,beer2020training,schuld2014quest}, and direct implementations of classical deep learning algorithms on quantum architectures \cite{kerenidis2019quantum,castelazo2021quantum,allcock2020quantum}. We stress that these quantum algorithms are inherently different in nature than their classical counterparts and there still exist significant challenges that must be surmounted before they become practically feasible \cite{cerezo2020variational,biamonte2017quantum,mcclean2018barren}. For example, the loss landscapes of quantum algorithms often have many more poor local minima in comparison to classical counterparts \cite{kiani2020learning,anschuetz2021critical} and training of quantum architectures requires sampling of outputs which is challenging when derivatives decay with the size of a model -- a phenomenon described as ``barren plateaus" \cite{mcclean2018barren,cerezo2020cost}. Furthermore, even if algorithms can be efficiently run on a quantum computer, preparing data for use in a quantum computer and reading out information from the quantum computer are challenging tasks which are not guaranteed to be efficient \cite{aaronson2015read}.

\section{Deferred proofs}
\label{app:deferred_proofs}

\subsection{Proof of \Cref{thm:DirectProjection}} 
\label{app:proof_direct_projected}
Recall \Cref{thm:DirectProjection}:
\DirectProjection*
\begin{proof}
We proceed to prove the above statement by first analyzing the case where $k=1$ and then generalizing to higher rank $k$. Recall from \Cref{lemma:DirectProjection} that we would like to perform the following update: 
\begin{equation}
\label{eq:direct_low_rank_projection}
    \mU + \mG_k \to \argmin_{\mV \in \, \mathcal{U}} \| (\mU + \mG_k) - \mV \|_F^2 = (\mU + \mG_k) \left[(\mU + \mG_k)^\dagger (\mU + \mG_k)\right]^{-\frac{1}{2}}.
\end{equation}

Let the rank one vector components of $\mG_k=\va \vb^\dagger$ and define
\begin{equation}
    \tilde{\mM} = \mU + \mG_k  = \mU + \va \vb^\dagger.
\end{equation}

With the above, we can rewrite $( \tilde{\mM}^\dagger \tilde{\mM} )^{-\frac{1}{2}}$ in \cref{eq:direct_low_rank_projection} as:
\begin{equation}
    \begin{split}
    ( \tilde{\mM}^\dagger \tilde{\mM} )^{-\frac{1}{2}} &= [ (\mU + \va \vb^\dagger)^\dagger (\mU + \va \vb^\dagger) ]^{-\frac{1}{2}}\\ &=[ \mI + \hat{\va} \vb^\dagger + \vb \hat{\va}^\dagger + c_a \vb \vb^\dagger ]^{-\frac{1}{2}},
    \label{eq:expanded_M}
\end{split}
\end{equation}
where $\hat{\va} = \tilde{\mU}^\dagger \va$ and $c_a=\va^\dagger \va$.

\cref{eq:expanded_M} is the Identity matrix plus the update of a rank two matrix. To see this, we decompose $\hat{\va}$ and $\vb$ into orthogonal components using Gram Schmidt:

\begin{equation}
    \vv_1 = \frac{\vb}{\| \vb \|} \; \; \vv_2 = \frac{ \hat{\va} - (\vv_1^\dagger \hat{\va}) \hat{\va}}{\| \hat{\va} - (\vv_1^\dagger \hat{\va}) \hat{\va} \|}.
\end{equation}{}

In this new basis:

\begin{equation}
    \hat{\va} = a_1 \vv_1 + a_2 \vv_2 \; \; \; \vb = b_1 \vv_1,
\end{equation}
and
\begin{align}
    &\tilde{\mM}^\dagger \tilde{\mM} = \nonumber \\ 
    &\mI+\begin{bmatrix}
    \vv_1 & \vv_2
    \end{bmatrix}
    \begin{bmatrix}
    a_1 b_1^* + b_1 a_1^* + c_a b_1 b_1^*  & b_1 a_2^*\\
    a_2 b_1^* & 0
    \end{bmatrix}
    \begin{bmatrix}
    \vv_1^\dagger \\
    \vv_2^\dagger
    \end{bmatrix}.
    \label{eq:gram_schmidt_output_projunnd}
\end{align}

Performing an eigendecomposition of the above $2 \times 2$ matrix into a diagonal eigenvalue matrix $\mS$ and eigenvector matrix $\mC$, we can rewrite $\tilde{\mM}^\dagger \tilde{\mM}$ in a convenient form:
\begin{equation}
\begin{split}
    \tilde{\mM}^\dagger \tilde{\mM} &= \mI + 
    \begin{bmatrix}
    \vv_1 & \vv_2
    \end{bmatrix}
    \mC
    \begin{bmatrix}
    s_1  & 0\\
    0 & s_2
    \end{bmatrix}
    \mC^\dagger
    \begin{bmatrix}
    \vv_1^\dagger \\
    \vv_2^\dagger
    \end{bmatrix}\\ &= \mI + s_1 \vu_1 \vu_1^\dagger + s_2 \vu_2 \vu_2^\dagger.
\end{split}
\end{equation}

Taking the inverse square root of the above can be performed by manipulating singular values:

\begin{align}
    ( \tilde{\mM}^\dagger \tilde{\mM} )^{-\frac{1}{2}} = \mI &+ ((s_1+1)^{-\frac{1}{2}} - 1)\vu_1 \vu_1^\dagger \nonumber \\&+ ((s_2+1)^{-\frac{1}{2}} - 1)\vu_2 \vu_2^\dagger.
    \label{eq:eigenvalue_projUNND}
\end{align}

Finally, we multiply the above on the left by $\tilde{\mM}$:

\begin{align}
    \tilde{\mM}( \tilde{\mM}^\dagger \tilde{\mM} )^{-\frac{1}{2}} = \tilde{\mM} &+ \nonumber ((s_1+1)^{-\frac{1}{2}} - 1)\tilde{\mM} \vu_1 \vu_1^\dagger \\&+ ((s_2+1)^{-\frac{1}{2}} - 1) \tilde{\mM} \vu_2 \vu_2^\dagger.
    \label{eq:eigenvalue_projUNND_post}
\end{align}

The above requires performing an eigendecomposition of a $2 \times 2$ matrix and a series of matrix-vector multiplication, matrix additions, and vector-vector outer products -- in total scaling as $O(n^2)$ time.

\paragraph{Rank $k$ updates}

Note that the above method can be extended to low rank updates, running in $O(kn^2)$ time when the rank of the update $k \ll n$. Specifically, now our update is:

\begin{equation}
    \tilde{\mM} = \mU + \sum_{i=1}^k \va_i \vb_i^\dagger.
\end{equation}

Following the same steps would ultimately require performing an eigendecomposition of a $2k \times 2k$ matrix which takes $O(k^3)$ time. Additionally, one must perform Gram-Schmidt decomposition on a set of $k$ vectors of length $n$ which takes $O(k^2n)$ time. Finally, a series of $O(k)$ matrix-vector multiplications and vector-vector outer products is performed resulting in a total runtime of $O(k(n^2 + nk + k^2))$ time. In cases where $k \ll n$, the time to perform the Gram-Schmidt decomposition and the eigendecomposition is negligible and an overall runtime of $O(kn^2)$ time is achieved. Even in cases where updates are not low rank, one can apply efficient sampling procedures (see \cref{sec:sampling}) to find low rank approximations to the update matrix and apply the methods above while maintaining runtimes.

\end{proof}

\subsection{Proof of \Cref{thm:AlgebraProjection}} 
\label{app:proof_algebra_projected}
Recall \Cref{thm:AlgebraProjection}:
\AlgebraProjection*
\begin{proof}
We would like to efficiently perform the update below:
\begin{equation}
    \mU \to \mU \exp\left[ - \eta \mU^\dagger \Pi_{T_\mU}(\mG_k) \right].
\end{equation}

As before, we proceed to prove the above statement by first analyzing the case where $k=1$ and then generalizing to higher rank $k$.

Let the rank one vector components of $\mG_k=\va \vb^\dagger$. Then
\begin{equation}
\label{eq:low_rank_algebra_update}
    \mU^\dagger \Pi_{T_\mU}(\mG_k) = \frac{1}{2} \mU^\dagger \left(\mG_k - \mU \mG_k^\dagger \mU \right) = \frac{1}{2} \left(\mU^\dagger \va \vb^\dagger - \vb \va^\dagger \mU \right) = \frac{1}{2} \left(\hat{\va} \vb^\dagger - \vb \hat{\va}^\dagger \right),
\end{equation} 
where $\hat{\va} = \mU^\dagger \va$. The above is a rank $2$ matrix. As before, we now proceed to perform an eigendecomposition in the low rank subspace of the above matrix. Using Gram Schmidt, we have
\begin{equation}
    \vv_1 = \frac{\vb}{\| \vb \|} \; \; \vv_2 = \frac{ \hat{\va} - (\vv_1^\dagger \hat{\va}) \hat{\va}}{\| \hat{\va} - (\vv_1^\dagger \hat{\va}) \hat{\va} \|}.
\end{equation}{}

In this new basis:

\begin{equation}
    \hat{\va} = a_1 \vv_1 + a_2 \vv_2 \; \; \; \vb = b_1 \vv_1,
\end{equation}
and
\begin{align}
    \mU^\dagger \Pi_{T_\mU}(\mG_k) = \frac{1}{2} \begin{bmatrix}
    \vv_1 & \vv_2
    \end{bmatrix}
    \begin{bmatrix}
    a_1 b_1^* - b_1 a_1^*  & -b_1 a_2^*\\
    a_2 b_1^* & 0
    \end{bmatrix}
    \begin{bmatrix}
    \vv_1^\dagger \\
    \vv_2^\dagger
    \end{bmatrix}.
    \label{eq:gram_schmidt_output_projunnt}
\end{align}

Performing an eigendecomposition of the above $2 \times 2$ matrix into a diagonal eigenvalue matrix $\mS$ and eigenvector matrix $\mC$, we can rewrite $\tilde{\mM}^\dagger \tilde{\mM}$ in a convenient form:
\begin{equation}
\label{eq:eigen_projunnd}
\begin{split}
    \mU^\dagger \Pi_{T_\mU}(\mG_k) &=  
    \begin{bmatrix}
    \vv_1 & \vv_2
    \end{bmatrix}
    \mC
    \begin{bmatrix}
    s_1  & 0\\
    0 & s_2
    \end{bmatrix}
    \mC^\dagger
    \begin{bmatrix}
    \vv_1^\dagger \\
    \vv_2^\dagger
    \end{bmatrix}\\ &= s_1 \vu_1 \vu_1^\dagger + s_2 \vu_2 \vu_2^\dagger.
\end{split}
\end{equation}
We apply the exponential map scaled by $\eta$ to obtain
\begin{align}
    \exp\left[ - \eta \mU^\dagger \Pi_{T_\mU}(\mG_k) \right] = \mI &+ \left(\exp(- \eta s_1) - 1\right)\vu_1 \vu_1^\dagger \nonumber \\&+ \left(\exp(- \eta s_2) - 1\right)\vu_2 \vu_2^\dagger.
    \label{eq:eigenvalue_projunnt}
\end{align}

Finally, we multiply the above on the left by $\mU$ to obtain the final result

\begin{align}
    \mU\exp\left[ - \eta \mU^\dagger \Pi_{T_\mU}(\mG_k) \right] = \mU + \left(\exp(- \eta s_1) - 1\right) \mU\vu_1 \vu_1^\dagger + \left(\exp(- \eta s_2) - 1\right) \mU\vu_2 \vu_2^\dagger.
    \label{eq:eigenvalue_projunnt_post}
\end{align}

The above requires performing an eigendecomposition of a $2 \times 2$ matrix and a series of matrix-vector multiplication, matrix additions, and vector-vector outer products -- in total scaling as $O(n^2)$ time.

\paragraph{Rank $k$ updates}

Note that as before, the above method can be extended to low rank updates, running in $O(kn^2)$ time when the rank of the update $k \ll n$. Specifically, now our update is:

\begin{equation}
    \mU \to \mU \exp \left[ -\eta \mU^\dagger \Pi_{T_\mU} \left(\sum_{i=1}^k \va_i \vb_i^\dagger \right) \right].
\end{equation}

Following the same steps, the projection onto the tangent space would be a rank $2k$ matrix. The steps that follow would ultimately require performing an eigendecomposition of a $2k \times 2k$ matrix which takes $O(k^3)$ time. Additionally, one must perform Gram-Schmidt decomposition on a set of $k$ vectors of length $n$ which takes $O(k^2n)$ time. Finally, a series of $O(k)$ matrix-vector multiplications and vector-vector outer products is performed resulting in a total runtime of $O(k(n^2 + nk + k^2))$ time. In cases where $k \ll n$, the time to perform the Gram-Schmidt decomposition and the eigendecomposition is negligible and an overall runtime of $O(kn^2)$ time is achieved. Even in cases where updates are not low rank, one can apply efficient sampling procedures (see \cref{sec:sampling}) to find low rank approximations to the update matrix and apply the methods above while maintaining runtimes.

\end{proof}

\subsection{First order equivalence to \textsc{projUNN-D}} 
\label{app:first_order_equivalence}
Though \textsc{projUNN-D} and \textsc{projUNN-T} perform different updates, we can show that up to first order, the updates are in fact equivalent. Furthermore, as one may expect, this first order update is equal to the projection of the gradient onto the tangent space (see \Cref{lemma:TangentProjection}). In the case of \textsc{projUNN-D}, this shows that the update step is a retraction or first order approximation to the matrix exponential implemented in \textsc{projUNN-T} \cite{bonnabel2013stochastic}. 
\begin{restatable}[First order equivalence]{proposition}{FirstOrderUpdate}
\label{prop:first_order_update}
For an $n \times n$ unitary/orthogonal matrix $\mU$ perturbed by $\Delta\mU$,  gradient updates applied by algorithms \textsc{projUNN-D} (\cref{eq:direct_update}) and \textsc{projUNN-T} (\cref{eq:algebra_update_rule}) are, up to first order, equal to $\mU + \Pi_{T_\mU}(\Delta\mU)$, \textit{i.e.,}
\begin{equation*}
    \mU  \to \mU + \frac{1}{2}\left( \Delta\mU - \mU \Delta  \mU^\dagger \mU \right) + O(\Delta\mU \Delta\mU^\dagger).
\end{equation*}
\end{restatable}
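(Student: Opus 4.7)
The plan is to Taylor-expand both update rules to first order in the perturbation and show that each linear term reduces to $\Pi_{T_\mU}(\Delta\mU) = \frac{1}{2}(\Delta\mU - \mU \Delta\mU^\dagger \mU)$ from \Cref{lemma:TangentProjection}. The identifications are: for \textsc{projUNN-D} the perturbation is $\Delta\mU = \mG$ (the raw gradient step added before projection), while for \textsc{projUNN-T} it is $\Delta\mU = -\eta \mG$ (the tangent update comes from exponentiating $-\eta \mU^\dagger \Pi_{T_\mU}(\mG)$).

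For \textsc{projUNN-D}, I would first expand
\begin{equation*}
(\mU + \Delta\mU)^\dagger (\mU + \Delta\mU) = \mI + \mU^\dagger \Delta\mU + \Delta\mU^\dagger \mU + \Delta\mU^\dagger \Delta\mU,
\end{equation*}
using $\mU^\dagger \mU = \mI$. Denoting the Hermitian perturbation $\mX := \mU^\dagger \Delta\mU + \Delta\mU^\dagger \mU$ and applying the standard Neumann-style expansion $(\mI + \mX)^{-1/2} = \mI - \tfrac{1}{2}\mX + O(\|\mX\|^2)$, multiplication on the left by $\mU + \Delta\mU$ yields
\begin{equation*}
\mU + \Delta\mU - \tfrac{1}{2}\mU(\mU^\dagger \Delta\mU + \Delta\mU^\dagger \mU) + O(\Delta\mU\,\Delta\mU^\dagger).
\end{equation*}
The cancellation $\mU\mU^\dagger \Delta\mU = \Delta\mU$ then collapses this to $\mU + \tfrac{1}{2}(\Delta\mU - \mU \Delta\mU^\dagger \mU) + O(\Delta\mU\,\Delta\mU^\dagger)$, which is precisely $\mU + \Pi_{T_\mU}(\Delta\mU)$ up to second-order terms.

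For \textsc{projUNN-T}, the matrix exponential expansion is the only tool required: $\exp[-\eta \mU^\dagger \Pi_{T_\mU}(\mG)] = \mI - \eta \mU^\dagger \Pi_{T_\mU}(\mG) + O(\eta^2)$, so multiplying on the left by $\mU$ yields $\mU - \eta \Pi_{T_\mU}(\mG) + O(\eta^2)$. Using linearity of $\Pi_{T_\mU}$ and the identification $\Delta\mU = -\eta\mG$, this becomes $\mU + \Pi_{T_\mU}(\Delta\mU) + O(\Delta\mU\,\Delta\mU^\dagger)$, matching the \textsc{projUNN-D} expansion.

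No step is genuinely hard here; the only subtlety is bookkeeping. I would want to (i) verify that $\mX$ above is Hermitian so that $(\mI + \mX)^{-1/2}$ is well-defined for small $\Delta\mU$ and the Neumann expansion converges, and (ii) state clearly that both error terms are $O(\Delta\mU\,\Delta\mU^\dagger)$ in the Frobenius norm so that the comparison to $\mU + \Pi_{T_\mU}(\Delta\mU)$ is meaningful. As noted in the paper, this identification exhibits the \textsc{projUNN-D} step as a retraction (first-order approximation of the exponential map) compatible with Riemannian gradient descent.
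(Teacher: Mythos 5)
Your proposal is correct and follows essentially the same route as the paper's own proof: expand $(\mU+\Delta\mU)^\dagger(\mU+\Delta\mU)$, apply the first-order expansion of $(\cdot)^{-1/2}$ for \textsc{projUNN-D}, and truncate the matrix exponential for \textsc{projUNN-T}. Your added care about the Hermiticity of the perturbation and the explicit identification $\Delta\mU = -\eta\mG$ are minor tidying touches the paper glosses over, but the argument is the same.
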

\begin{proof}
We first show that the above formula holds for \textsc{projUNN-D} and then show the same for \textsc{projUNN-T}. Recall the update formula for \textsc{projUNN-D}:

\begin{equation}
    \mU + \Delta\mU \to \argmin_{\mV \in \, \mathcal{U}} \| (\mU + \Delta\mU) - \mV \|_F^2 = (\mU + \Delta\mU) \left[(\mU + \Delta\mU)^\dagger (\mU + \Delta\mU)\right]^{-\frac{1}{2}}.
\end{equation}
Expanding the above up to first order and applying the first order Taylor expansion of $(\cdot)^{-1/2}$, we have:
\begin{equation}
\begin{split}
    (\mU + \Delta\mU) \left[(\mU + \Delta\mU)^\dagger (\mU + \Delta\mU)\right]^{-\frac{1}{2}} &= (\mU + \Delta\mU) \left[\mI + \mU^\dagger \Delta\mU + \Delta\mU^\dagger \mU + O(\Delta\mU^\dagger \Delta\mU) \right]^{-\frac{1}{2}} \\
    &= (\mU + \Delta\mU) \left[\mI - \frac{1}{2}\left( \mU^\dagger \Delta\mU + \Delta\mU^\dagger \mU \right) + O(\Delta\mU^\dagger \Delta\mU) \right] \\
    &= \mU + \frac{1}{2}\left( \Delta\mU - \mU \Delta\mU^\dagger \mU \right) + O(\Delta\mU^\dagger \Delta\mU).
\end{split}
\end{equation}
    
Similarly, for \textsc{projUNN-T}, recall the update formula (ignoring $-\eta$ term for learning rate):
\begin{equation}
    \mU \to \mU \exp\left[ \mU^\dagger \Pi_{T_\mU}(\Delta\mU) \right].
\end{equation}

Expanding the above and applying the first order approximation of the matrix exponential,
\begin{equation}
    \begin{split}
        \mU \exp\left[ \mU^\dagger \Pi_{T_\mU}(\Delta\mU) \right] &= \mU \exp\left[ - \eta \mU^\dagger \frac{1}{2}\left( \Delta\mU - \mU \Delta\mU^\dagger \mU \right)  \right] \\
        &= \mU\left[ \mI + \frac{1}{2}\left( \Delta\mU - \mU \Delta\mU^\dagger \mU \right) + O(\Delta\mU^\dagger \Delta\mU) \right] \\
        &=\mU + \frac{1}{2}\left( \Delta\mU - \mU \Delta\mU^\dagger \mU \right) + O(\Delta\mU^\dagger \Delta\mU).
    \end{split}
\end{equation}

\end{proof}

\subsection{Unitary convolutional manifold is connected}
\label{app:unitary_connected}

Before proceeding to prove that the space of unitary convolutions is connected, we first provide a version of the convolution theorem which shows that convolution in the Fourier domain corresponds to block multiplication over channels. This is a classic result also contained in various prior works \cite{sedghi2018singular,trockman2021orthogonalizing}, and we provide a short proof here for completeness.

\begin{lemma}
\label{lem:conv_theorem}
Given convolution filter $\tW \in \mathbb{C}^{M \times N \times C \times C}$ and input $\tX \in \mathbb{C}^{M \times N \times C}$, recall the definition of cyclic convolution (or technically cross-correlation) of $\tW$ and $\tX$:
\begin{equation}
   \left[\conv_{\tW}(\tX) \right]_{p,q,d} = \sum_{c = 1}^{C} \sum_{m=1}^{M} \sum_{n=1}^N \tW_{m,n,d,c} \tX_{p+m,q+n,c}.
\end{equation}
Let $\FFT$ be the two dimensional fast Fourier transform, then convolution in the Fourier domain corresponds to block-wise multiplication over channels:
\begin{equation}
    \left[\FFT \conv_{\tW}(\tX) \right]_{\widehat{r},\widehat{s},:} = \widehat{\tW}^*_{\widehat{r},\widehat{s},:,:} \; \left[\FFT\tX \right]_{\widehat{r},\widehat{s},:}.
\end{equation}
\end{lemma}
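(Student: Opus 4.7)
The plan is a direct computation: unfold the 2D FFT on the left side of the claimed identity, reorder the sums, and apply a change of summation variables that exploits the cyclic indexing built into the definition of $\conv_{\tW}$. Once the spatial sums decouple from the channel sum, the identity reads off as a matrix–vector product over the channel index at each spatial frequency $(\widehat r,\widehat s)$.

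Concretely, I would start from the definition of the 2D FFT,
\begin{equation*}
    \left[\FFT \tZ\right]_{\widehat r,\widehat s,c} \;=\; \sum_{p=1}^M\sum_{q=1}^N \tZ_{p,q,c}\, \omega_M^{-p\widehat r}\,\omega_N^{-q\widehat s},
\end{equation*}
with $\omega_M = e^{2\pi i/M}$ and $\omega_N = e^{2\pi i/N}$, apply this to $\conv_{\tW}(\tX)$, and insert the defining sum for the convolution. Swapping the order of summation gives a triple sum in $(p,q,m,n)$ against $\omega_M^{-p\widehat r}\omega_N^{-q\widehat s}\tW_{m,n,d,c}\tX_{p+m,q+n,c}$. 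The next step is the key manipulation: substitute $p' = p+m\pmod M$ and $q' = q+n \pmod N$, which is valid precisely because $\conv_{\tW}$ was defined with cyclic indexing. This detaches the $(p',q')$ sum from the $(m,n)$ sum and produces the factor $\omega_M^{m\widehat r}\omega_N^{n\widehat s}$, the complex conjugate of the usual Fourier kernel.

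Gathering the two factored sums, the inner sum over $(p',q')$ is exactly $[\FFT\tX]_{\widehat r,\widehat s,c}$, while the sum over $(m,n)$ defines
\begin{equation*}
    \widehat{\tW}^*_{\widehat r,\widehat s,d,c} \;=\; \sum_{m=1}^M\sum_{n=1}^N \tW_{m,n,d,c}\,\omega_M^{m\widehat r}\,\omega_N^{n\widehat s},
\end{equation*}
which matches the conjugation convention in the lemma statement (the conjugate appears naturally because the operation is cross-correlation rather than true convolution). The remaining sum over the channel index $c$ is, by inspection, the $(\widehat r,\widehat s,d)$ entry of the block-wise matrix–vector product $\widehat{\tW}^*_{\widehat r,\widehat s,:,:}\,[\FFT\tX]_{\widehat r,\widehat s,:}$, completing the proof.

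There is no real obstacle; the only thing to be careful about is the sign convention and the conjugation that arises from cross-correlation versus convolution, plus a brief note that the index substitution $p\mapsto p-m,\ q\mapsto q-n$ is a bijection on $\{1,\dots,M\}\times\{1,\dots,N\}$ precisely because the convolution was defined cyclically. Everything else is routine reordering of finite sums.
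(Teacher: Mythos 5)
Your proposal is correct and follows essentially the same route as the paper's proof: expand the 2D FFT of the convolution, reorder the finite sums, and use the cyclic reindexing $(p,q)\mapsto(p+m,q+n)$ to factor the expression into $\widehat{\tW}^*_{\widehat r,\widehat s,d,c}$ times $[\FFT\tX]_{\widehat r,\widehat s,c}$ summed over channels. The paper phrases the reindexing by inserting factors $\omega_M^{\widehat r m}\omega_M^{-\widehat r m}$ rather than an explicit change of variables, but this is the same manipulation, and your remark about the conjugation arising from cross-correlation matches the paper's own aside.
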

\begin{proof}
Let the roots of unity be denoted as $\omega_M=e^{2\pi i/M}$ and $\omega_N=e^{2\pi i / N}$. Then,
\begin{equation}
    \begin{split}
        \left[\FFT \conv_{\tW}(\tX) \right]_{\widehat{r},\widehat{s},d} &= \sum_{u=1}^M \sum_{v=1}^N \omega_M^{u\widehat{r}} \omega_M^{v\widehat{s}} \sum_{m =1}^M \sum_{n=1}^N \sum_{c=1}^C \tW_{m,n,d,c} \tX_{u+m,v+n,c} \\
        &= \sum_{u=1}^M \sum_{v=1}^N \sum_{m =1}^M \sum_{n=1}^N \sum_{c=1}^C \omega_M^{u\widehat{r}} \omega_M^{v\widehat{s}} \omega_M^{\widehat{r}m} \omega_M^{-\widehat{r}m} \omega_N^{\widehat{s}n} \omega_N^{-\widehat{s}n} \tW_{m,n,d,c} \tX_{u+m,v+n,c} \\
        &= \sum_{c=1}^C \sum_{m =1}^M \sum_{n=1}^N \omega_M^{-\widehat{r}m} \omega_N^{-\widehat{s}n} \tW_{m,n,d,c} \sum_{u=1}^M \sum_{v=1}^N   \omega_M^{\widehat{r}(u+m)} \omega_M^{\widehat{s}(v+n)} \tX_{u+m,v+n,c}  \tX_{u+m,v+n,c} \\
        &=\sum_{c=1}^C \widehat{\tW}_{\widehat{r},\widehat{s},d,c}^* \widehat{\tX}_{\widehat{r},\widehat{s},c}.
    \end{split}
\end{equation}

\end{proof}
As an aside, the complex conjugation of the filter above is due to the fact that convolution in neural networks corresponds to the more commonly used term cross-correlation in mathematics. If the convolution operation were redefined as what is more commonly known as convolution in mathematics, \textit{i.e.,} define $\conv'$ as $\left[\conv'_{\tW}(\tX) \right]_{p,q,d} = \sum_{c = 1}^{C} \sum_{m=1}^{M} \sum_{n=1}^N \tW_{m,n,d,c} \tX_{p-m,q-n,c}$, then the complex conjugate would no longer appear on the filter term.

From here, we simply show that each block in the above is connected which allows us to prove that the unitary manifold is connected.
\ConnectedConvolution*
\begin{proof}
Since the fourier transform is unitary and invertible, we can represent every filter $\tW$ in the fourier domain as $\widehat{\tW} \in \mathbb{C}^{M \times N \times C \times C}$ and vice-versa. In the Fourier domain, given \cref{lem:conv_theorem}, we have that convolution corresponds to block-wise multiplication over channels. For unitary convolution, each block $\widehat{W}_{\widehat{r},\widehat{s},:,:}$ indexed by frequencies $\widehat{r}$ and $\widehat{s}$ must be a unitary matrix, so we now analyze the set of filters whose blocks are unitary matrices in the Fourier domain.

The space of unitary matrices $U(C)$ is connected \cite{hall2015lie}. Therefore for every block, there exists a connected path between any two unitary matrices $\widehat{W}_{\widehat{r},\widehat{s},:,:}^{(1)},\widehat{W}_{\widehat{r},\widehat{s},:,:}^{(2)}\in U(C)$. The full space of unitary convolutions is parameterized by the $MN$ times direct product of groups $U(C)$, \textit{i.e.,} $U(C)^{(\times MN)}$. Since the direct product of finitely many connected spaces is also connected, then the space of unitary convolutions is connected. 
\end{proof}

\section{Analysis on various benchmarked tasks}
\label{app:experiment_backup}

\subsection{Learning random unitary}
\label{app:learning_random_unitary}
\begin{figure}
    \centering
    \includegraphics[]{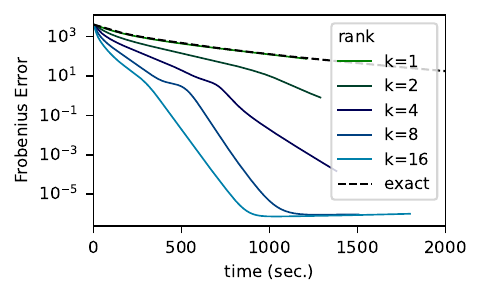}
    \caption{ Runtime of \textsc{projUNN-T} in learning a random target unitary matrix is faster when using low rank approximations. Here we plot Frobenius error $\|\mU - \mU_{tar}\|_F^2$ over the course of optimization. The learning rate is fixed for each value of $k$.
    }
    \label{fig:random_unitary_runtime}
\end{figure}

In addition to the analysis shown in the main text, here we include a plot showing the value of the Frobenius error as a function of the runtime (\cref{fig:random_unitary_runtime}). Using \textsc{projUNN-T} to peform learning via low rank approximations to the gradient significantly speed up learning. Optimization was performed with vanilla gradient descent over batch sizes of 16 with learning rates of 0.5 and 0.33 for \textsc{projUNN-T} and \textsc{projUNN-D} respectively. Since the learning rate was fixed across all $k$ for each of these experiments, the norm of the update was smaller for lower values of $k$. Scaling up the learning rate based on the value of $k$ could make runtimes even faster for lower values of $k$.

\begin{figure}
    \centering
    \includegraphics[]{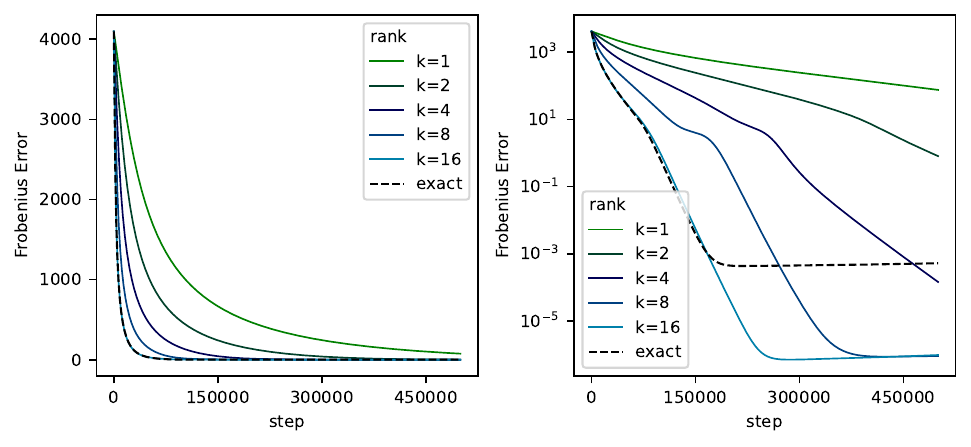}
    \caption{ Learning trajectory of \textsc{projUNN-T} equipped with the column sampling approximation in the random unitary learning task. 
    }
    \label{fig:random_unitary_T_column}
\end{figure}

\begin{figure}
    \centering
    \includegraphics[]{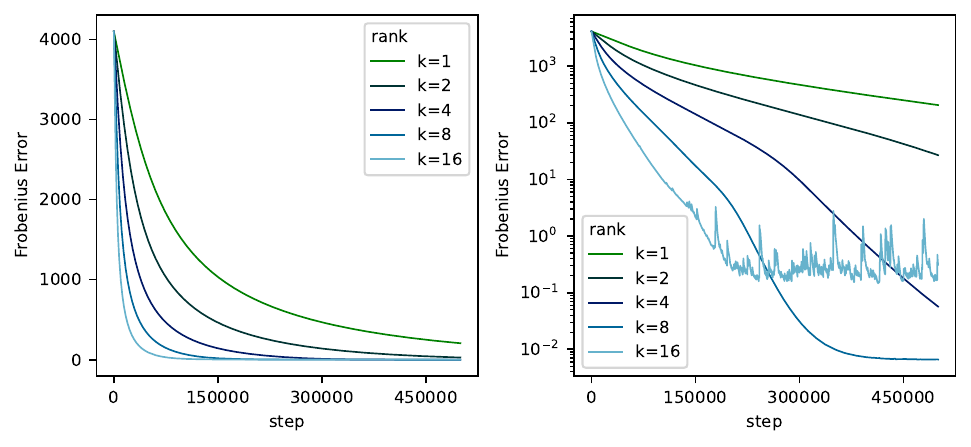}
    \caption{ Learning trajectory of \textsc{projUNN-D} equipped with the column sampling approximation in the random unitary learning task. Performing gradient updates using exact formulas was too computationally expensive for \textsc{projUNN-D} and thus not included here.
    }
    \label{fig:random_unitary_D_column}
\end{figure}

\begin{figure}
    \centering
    \includegraphics[]{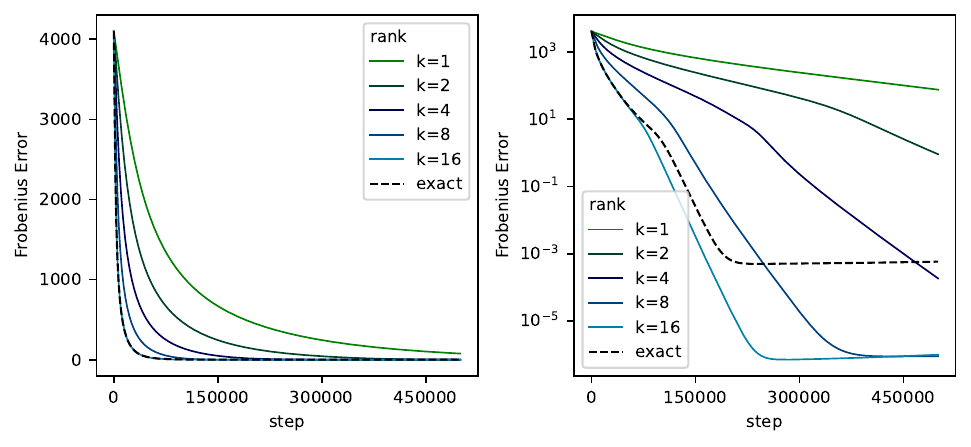}
    \caption{ Learning trajectory of \textsc{projUNN-T} equipped with the LSI sampling approximation in the random unitary learning task. 
    }
    \label{fig:random_unitary_T_LSI}
\end{figure}

\begin{figure}
    \centering
    \includegraphics[]{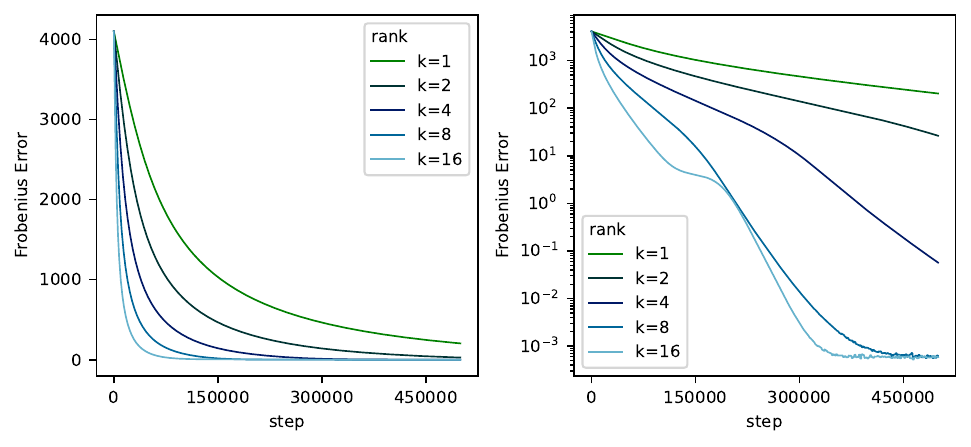}
    \caption{ Learning trajectory of \textsc{projUNN-D} equipped with the LSI sampling approximation in the random unitary learning task. Performing gradient updates using exact formulas was too computationally expensive for \textsc{projUNN-D} and thus not included here.
    }
    \label{fig:random_unitary_D_LSI}
\end{figure}

For sake of completeness, we also include plots (in both regular and logarithmic scaling axis) showing the learning trajectory of the various combinations of samplers and \textsc{projUNN} architectures in the random unitary task. See \cref{fig:random_unitary_T_column} for \textsc{projUNN-T} with column sampling (left hand side repeated from main text), \cref{fig:random_unitary_D_column} for \textsc{projUNN-D} with column sampling, \cref{fig:random_unitary_T_LSI} for \textsc{projUNN-T} with LSI sampling, and \cref{fig:random_unitary_D_LSI} for \textsc{projUNN-D} with LSI sampling.

\cref{fig:random_unitary_D_column} and \cref{fig:random_unitary_D_LSI} also highlight potential instabilities in training \textsc{projUNN-D} over long periods of time, a feature noted in the main text and discussed further in \cref{app:numerical_stability}. To alleviate this, for \textsc{projUNN-D} architectures, we set the learning rate to slightly lower at 0.33 and projected parameter matrices onto the closest unitary using \cref{lemma:DirectProjection} every 2048 steps. Note, that since this projection step was performed sparingly every $n$ steps (where $n$ is the dimension of the matrix), we did not find any significant decrease in runtime.

\subsection{Adding task}
\label{app:adding_task}

\begin{figure}
    \centering
    \includegraphics{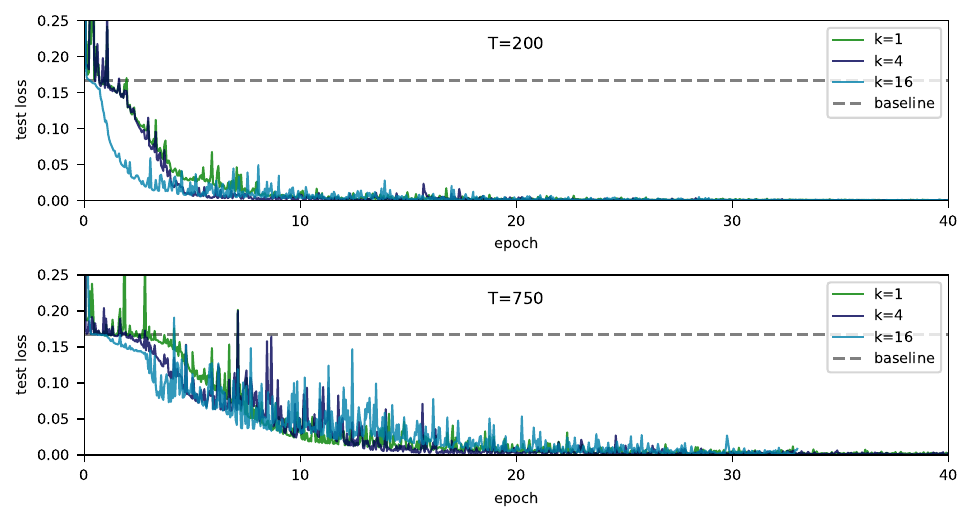}
    \caption{ \textsc{projUNN-T} (with column sampling approximation) is effective at learning the adding task. Above is a copy of \cref{fig:adding_task_main} except the plot here is expanded in size and test error is not smoothed. }
    \label{fig:adding_task_expanded_appendix}
\end{figure}

\begin{figure}
    \centering
    \includegraphics{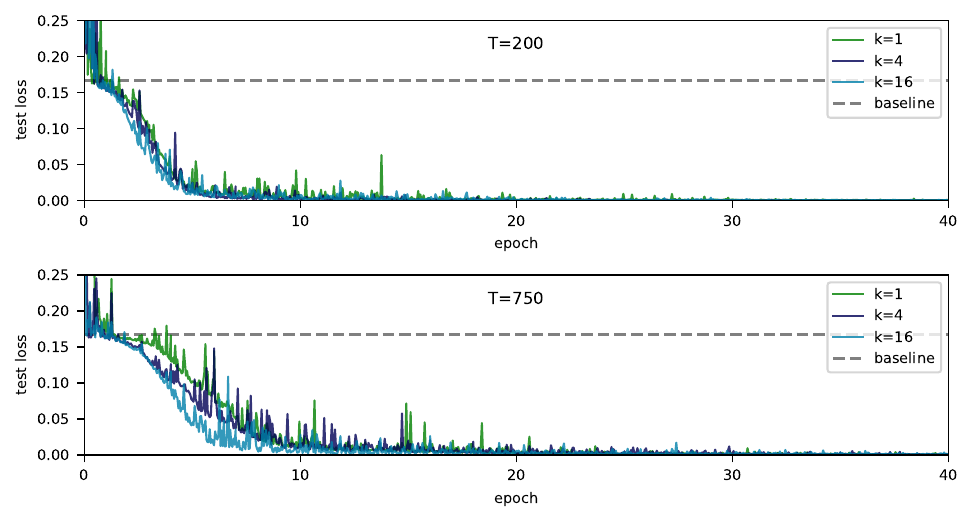}
    \caption{ \textsc{projUNN-T} (with LSI sampling approximation) is effective at learning the adding task. }
    \label{fig:adding_task_expanded_LSI}
\end{figure}

As shown in the main text, our \textsc{projUNN} is able to learn the adding task even for long sequence lengths. Rank $k$ gradient approximations were obtained using the column sampling approximation. RMSprop was used to train the RNN in this task. The learning rate was initialized to $0.001$ for non-orthogonal parameters and $0.001/32$ for all orthogonal parameters. Each epoch, the learning rate was reduced by multiplying it by $0.96$. We found that initializing the orthogonal matrix as the identity matrix worked well for this task. This is in distinction with \textit{e.g.,} \cite{helfrich2018orthogonal} which initialized using the Cayley initialization (block diagonal). For reference, we also include here an enhanced version of the plot in the main text in \cref{fig:adding_task_expanded_appendix} and a similar plot obtained using \textsc{projUNN} with LSI sampling in \cref{fig:adding_task_expanded_LSI}.

\subsection{Copy task}
\label{app:copy_task}
\cref{fig:copy_task_main} in the main text shows that our \textsc{projUNN-T} can efficiently learn the copy task. For this task, we set the learning rate of the RMSprop optimizer to $7e-4$. For orthogonal parameters, this learning rate was divided by 32. Orthogonal matrices were intialized using Henaff initialization \cite{henaff2016recurrent} as described in \cref{app:initialization}. We found that this initialization scheme worked best in comparison to other methods.

\subsection{Pixel permuted MNIST}
\label{app:permuted_mnist}

\cref{fig:curves_mnist} charts the trajectory of learning on the permuted MNIST task. As is evident in the figure, rank $k=1$ is sufficient to guarantee optimal or nearly optimal convergence in all settings.

\begin{figure}[t!]
    \centering
    \begin{minipage}{0.49\linewidth}
    \centering
    \textsc{projUNN-D}\\
    \includegraphics[width=\linewidth]{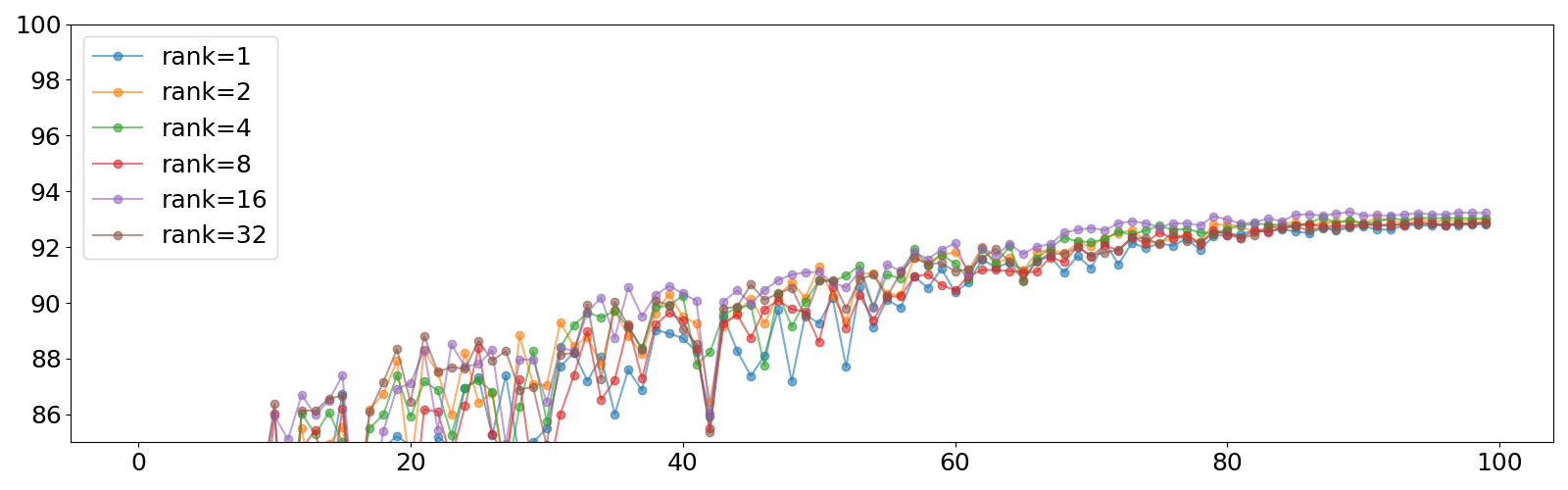}\\
    \includegraphics[width=\linewidth]{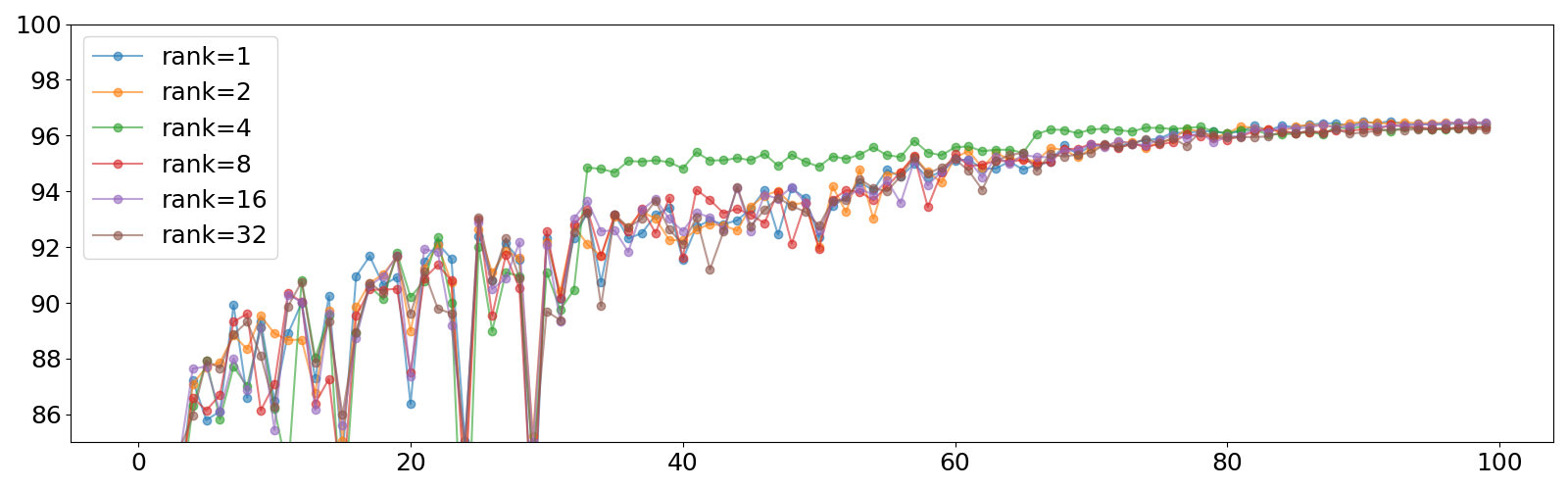}\\
    \includegraphics[width=\linewidth]{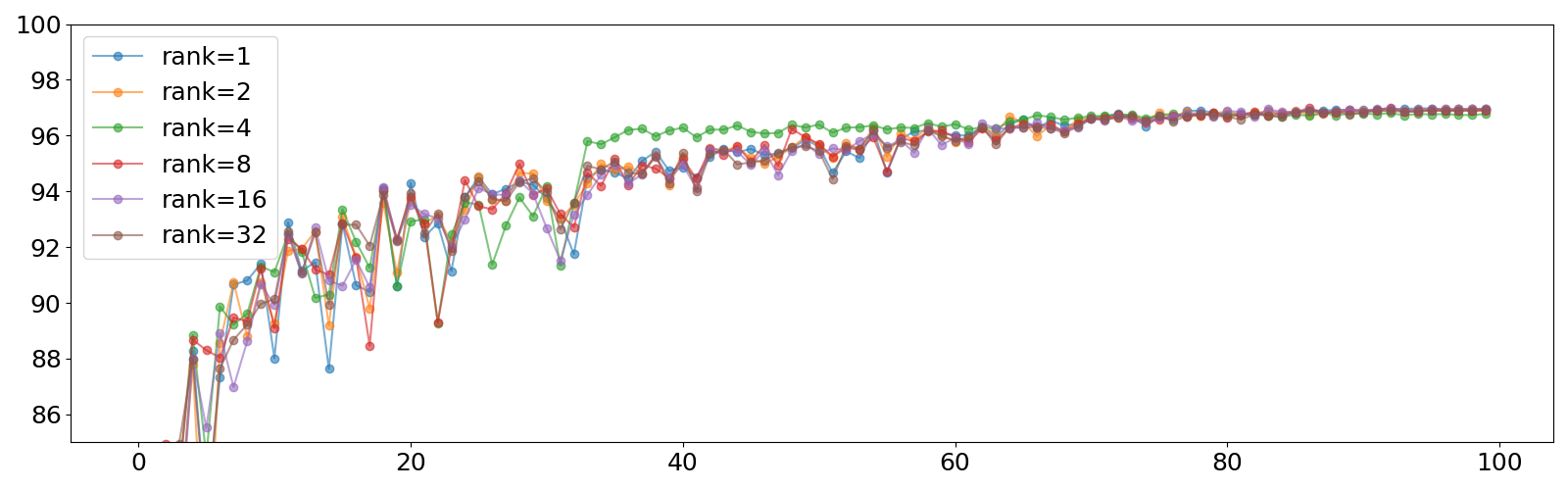}
    \end{minipage}
    \begin{minipage}{0.49\linewidth}
    \centering
    \textsc{projUNN-T}\\
    \includegraphics[width=\linewidth]{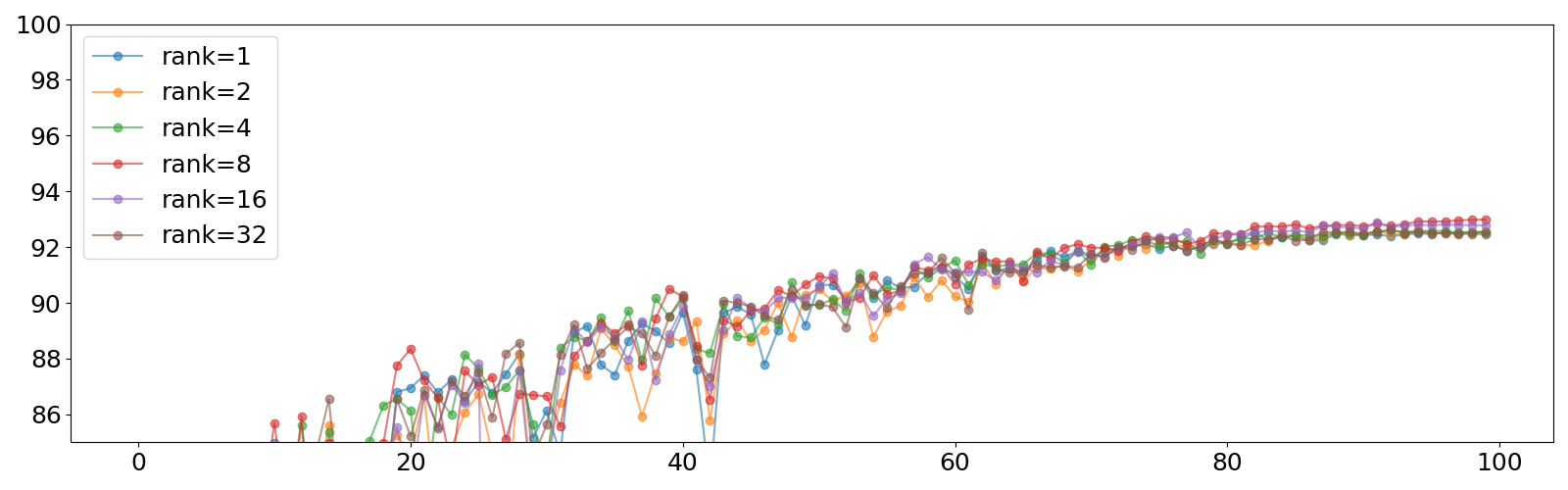}\\
    \includegraphics[width=\linewidth]{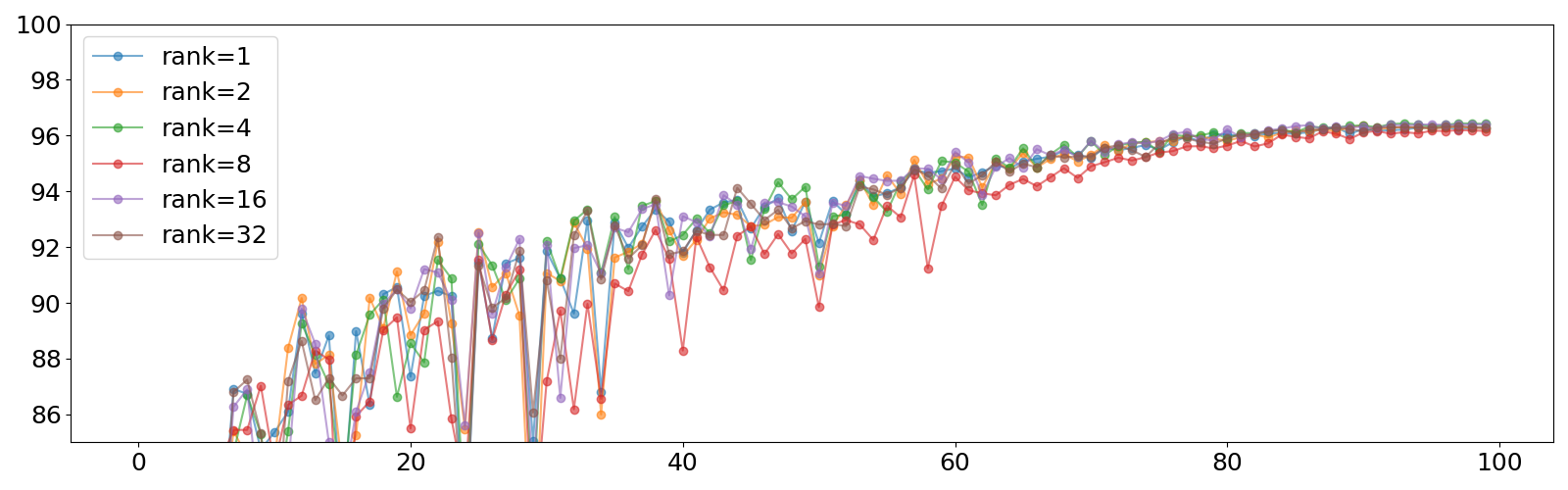}\\
    \includegraphics[width=\linewidth]{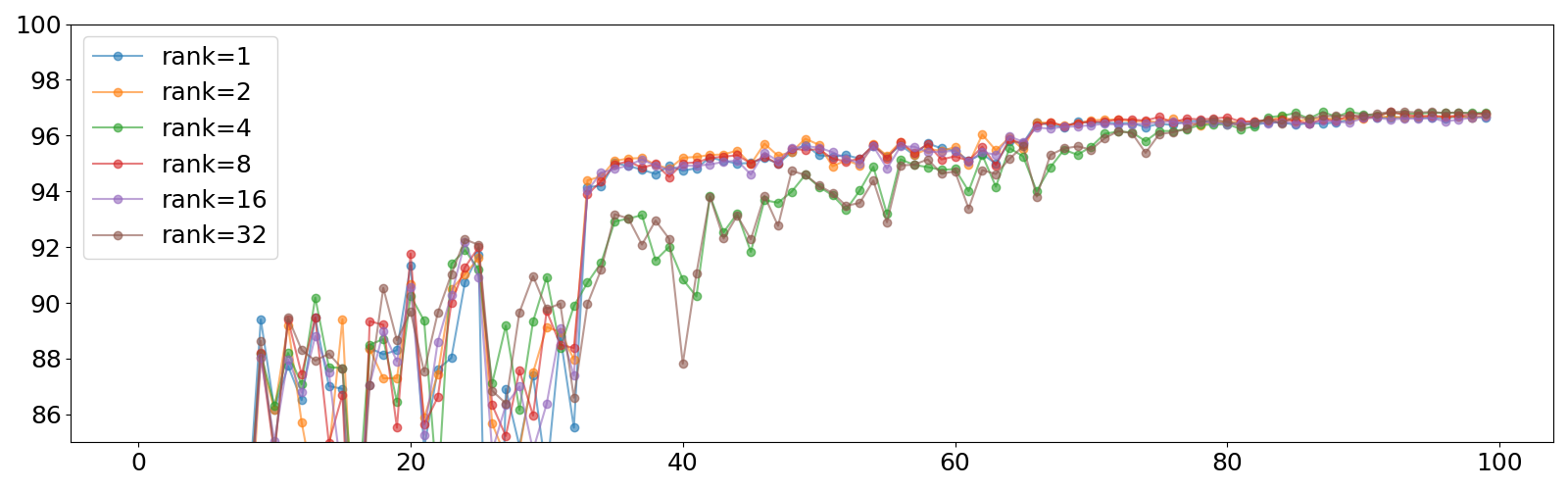}
    \end{minipage}
    \caption{Evolution of the test set accuracy during training at each epoch for the pixel-MNIST task. We depict the evolution for different RNN width (from top to bottom: 116,360 and 512). We observe that regardless of the rank $k$ of the projUNN update, we reach the same final performances.}
    \label{fig:curves_mnist}
\end{figure}

\subsection{CIFAR10 CNN experiments}
\label{app:cifar10_cnn}

\begin{table}
    \caption{Test accuracy on CIFAR10 with different unitary convolution parameterizations and our proposed \textsc{projUNN} algorithm. We stress that SOC is an approximate unitary parameterization.}
    \label{tab:conv_table}
    \centering
    \setlength\tabcolsep{0.2em}
    \begin{tabular}{c|c|c|c|c|c|c}
           \multicolumn{1}{c}{} &\multicolumn{5}{c}{Convolution Type}\\ \toprule
         Model&  Standard & BCOP & SOC &Cayley& Proj-D & Proj-T \\ \toprule
         Resnet9& 92.26&80.72&-&81.70&80.75&{\bf 82.06}\\ \midrule
         Resnet18& 95.10&92.38&{\bf 94.24}&-&89.43&89.59\\\bottomrule
    \end{tabular}
\end{table}

To explore the performance of our \textsc{projUNN} training algorithm for convolutional layers, we analyze its performance on CIFAR10 classification. Here, we provide further details to the results in the main text as well as some further preliminary experiments on unitary CNNs in resnet architectures. As in prior work \cite{trockman2021orthogonalizing} we employ the usual data-augmentation of random translations and left-right flips. We leverage the Resnet \cite{he2016deep} architecture. As our previous analysis in the RNN setting has shown that rank $k=1$ is sufficient for convergence, we always set $k=1$ when using \textsc{projUNN} in the convolutional setting. We leverage the RMSprop optimizer and perform cross-validation on the learning rate. We present our results in \cref{tab:conv_table} and make two observations. First, the smaller model (Resnet9) is able to reach or slightly outperform the alternative exact orthogonal constraints. Second, the larger model (Resnet18) falls behind the approximate orthogonal constraint method. This result is perhaps expected as our convolutional layers are full-width, \textit{i.e.,} they allow for much greater degree of over-fitting. This was not detrimental in the small model with fewer convolutional layers. As a result, although we validate the ability of \textsc{projUNN} to produce state-of-the-art small convolutional networks, there remains open avenues of research to extend the method to larger models. As an aside, the resnet architecture is, in a sense, a very stable architecture since it is precisely designed to be able to incorporate many layers. To provide an honest comparison to prior work, we analyzed the performance of \textsc{projUNN} with respect to this resnet architecture, but note that more ``vanilla" CNN architectures may be better targets for unitary constraints.

\section{Analysis of low rank updates}
\label{sec:numerical_rank}
It is typically the case that gradients of matrices with respect to a loss function are not \emph{exactly} low rank but \emph{approximately} low rank. More specifically, a matrix $\mA$ is \emph{approximately} low rank if there exists a rank $k$ matrix $\mA_k$ such that the relative error of the approximation $E_{rel}$ is small:
\begin{equation}
    E_{rel} = \frac{\| \mA - \mA_k \|_F }{\|\mA\|_F}
\end{equation}
where $\| \cdot \|_F$ denotes the Frobenius norm of a matrix. We note that there is a connection between the above and the stable rank of a matrix $\mA$ defined as  $\|\mA\|_F^2/\|\mA\|_2^2$. The stable rank is upper bounded by the exact or hard rank of a matrix $\mA$.

In RNN settings, \cref{fig:rank_approximation} which plots $1-E_{rel}$ shows that low rank approximations to a gradient can typically be very close to the true gradient even as the hidden size of the RNN is large. In \cref{fig:rank_approximation}, the RNN was given an input of batch size $64$ where each input is a sequence of length $20$ and each element of the sequence is a vector in $\mathbb{R}^{200}$ with elements drawn from the standard normal distribution. The RNN outputs logits to learn random target integers ranging from $1$ to $10$. Gradients were calculated for a single optimization step over a single batch of the random input and output data.

\begin{figure}
\hspace*{-0.5cm}
    \centering
    \begin{subfigure}[t]{0.49\textwidth}
    \centering
    \includegraphics{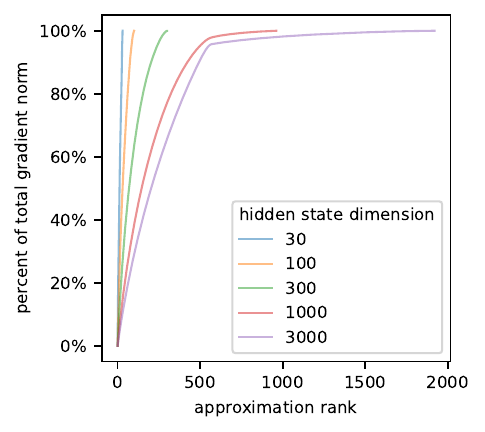}
    \caption{unnormalized}
    \label{fig:rank_approximation_unnormalized}
    \end{subfigure}
    \begin{subfigure}[t]{0.49\textwidth}
    \centering
    \includegraphics{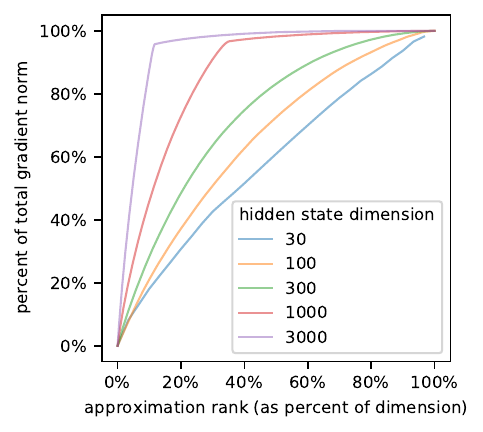}
    \caption{normalized by dimension}
    \label{fig:rank_approximation_normalized}
    \end{subfigure}
    \caption{Portion of Frobenius norm (1-$E_{rel}$) captured by a low rank approximation to the gradient of the matrix typically improves with the dimension of the matrix. For larger matrices, at least half of the Frobenius norm of the gradient can be captured with a low rank approximation of a small percent of the overall dimension. Here, we assume that low rank approximations to the matrix are optimal and the RNN is trained on random inputs and outputs. Plots are for the matrix performing transformation from hidden states to hidden states (\textit{i.e.,} the matrix replaced by a unitary/orthogonal matrix in \textsc{projUNN} implementations. 
    }
    \label{fig:rank_approximation}
\end{figure}

As observed in the main text, such low rank behavior is often more evident in real-world data. In fact, as seen in \cref{fig:cifar_low_rank}, virtually all the information of the gradient in a convolutional architecture is captured in the first few singular vectors. Gradients here are shown for a single $C \times C$ block in the Fourier regime of the convolution filter parameterized via our orthogonal \textsc{projUNN} convolution (see \cref{sec:projunn_conv} and \cref{app:convolution_unitary} for form of parameterization). This filter is contained in the last residual block of the Resnet-9 network \cite{he2016deep} and has $512$ channels so the gradient is a $512 \times 512$ matrix. Since networks were trained with a batch size of 128, the maximum rank of this gradient is actually 128 (see short proof below in \cref{prop:convnet_rank_bound}).

\begin{figure}
    \centering
    \includegraphics[]{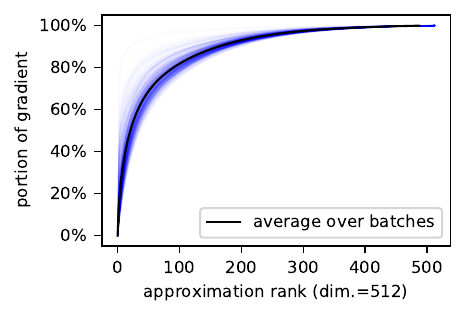}
    \caption{Low rank approximations capture most of the Frobenius norm of the gradient. Here, we plot gradients of the matrix in the RNN's hidden layer for each batch (light blue line) over an epoch of training \textsc{projUNN} in the pixel-by-pixel MNIST task (see \cref{sec:experiments}).
    }
    \label{fig:mnist_low_rank}
\end{figure}

Since the weight matrix in hidden layers of RNNs is repetitively applied, gradients tend to be of a higher stable rank in these settings. Nevertheless, as evident in \cref{fig:mnist_low_rank}, gradients over the course of an epoch of training an RNN  of hidden dimension 512 on the sequential MNIST task still exhibit clear low rank behavior, albeit not to the same degree as convolutional architectures. In constructing these plots, a batch size of 128 was used and networks were trained using the RMSprop algorithm. Throughout our experience with training the \textsc{projUNN} on various tasks, we observe that setting the rank of an approximation to even just one is effective at ensuring convergence of training to a good solution.

We furthermore note that the exact rank of a gradient in neural network architectures can typically be bounded by the dimensions of the input. For convolutional networks, this exact rank is bounded by the batch size and for vanilla recurrent neural networks, this exact rank is bounded by the batch size times the sequence length. We provide formal statements and short proofs of these propositions below.

\begin{proposition} 
\label{prop:RNN_rank_bound}
Given a loss function $\ell:\mathbb{R}\times\mathbb{R} \to \mathbb{R}$ taking in two real numbers and outputting a real number, let $f^{(t)}$ denote the $t$-th sequential output of a vanilla RNN defined as
\begin{equation}
\begin{split}
        \vh^{(t)}(\vx) &=\sigma(\mM\textbf{x}^{(t)}+\mW \vh^{(t-1)}(\vx) ) \\
        f^{(t)}(\vx) &= \tau(\mV \vh^{(t)}(\vx) ),
\end{split}
\end{equation}
where $t \in [T]$ indexes the sequence of inputs, inputs $\vx^{(t)} \in \mathbb{R}^{d}$ ($\vx$ denotes the concatenation of all inputs), input transformation $\mM \in \mathbb{R}^{h \times d}$, hidden transformation $\mW \in \mathbb{R}^{h \times h}$, output transformation $\mV \in \mathbb{R}^{h \times d}$, and $\sigma$ and $\tau$ are pointwise non-linearities. Let $\nabla_{\mW} L$ indicate the gradient of the hidden transformation matrix $\mW$ with respect to the loss function $L = \sum_{i=1}^b \ell(f^{(T)}(\vx_i),y_i) $ over a batch of inputs $\{ \vx_i,y_i\}_{i=1}^b$. The rank of $\nabla_{\mW} L$ is bounded above by $bT$.
\end{proposition}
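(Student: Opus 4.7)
The plan is to apply backpropagation through time (BPTT) and observe that the dependence of each hidden state $\vh^{(t)}$ on $\mW$, when taken as a ``direct'' dependence (holding $\vh^{(t-1)}$ fixed), enters only through the linear term $\mW \vh^{(t-1)}$. Consequently, differentiating the scalar $\ell(f^{(T)}(\vx_i), y_i)$ with respect to $\mW$ by unrolling the recurrence produces a sum of rank-one outer products, one per time step, and then summing over the batch gives at most $bT$ such terms.

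Concretely, I would first define, for each input $\vx_i$ in the batch and each time $t \in [T]$, the pre-activation $\vz_i^{(t)} = \mM \vx_i^{(t)} + \mW \vh_i^{(t-1)}$ so that $\vh_i^{(t)} = \sigma(\vz_i^{(t)})$. Viewing $\mW$ as appearing in each $\vz_i^{(t)}$ separately (and then propagating the indirect effect through $\vh_i^{(t-1)}$ back into earlier time steps via the total derivative), the chain rule yields
\begin{equation}
\nabla_\mW \ell(f^{(T)}(\vx_i), y_i) \;=\; \sum_{t=1}^{T} \vg_i^{(t)} \bigl(\vh_i^{(t-1)}\bigr)^{\!\intercal},
\end{equation}
where $\vg_i^{(t)} = \partial \ell_i / \partial \vz_i^{(t)} \in \mathbb{R}^h$ is the usual BPTT upstream gradient at layer $t$ (itself defined by the standard backward recursion $\vg_i^{(t)} = \sigma'(\vz_i^{(t)}) \odot (\mW^\intercal \vg_i^{(t+1)})$, with the boundary term coming from $\tau$, $\mV$ and $\partial \ell/\partial f$). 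The precise form of $\vg_i^{(t)}$ is irrelevant for the rank bound; what matters is only that each summand is an outer product of two vectors in $\mathbb{R}^h$ and hence has rank at most one.

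Summing this identity over the $b$ examples in the batch gives
\begin{equation}
\nabla_\mW L \;=\; \sum_{i=1}^{b} \sum_{t=1}^{T} \vg_i^{(t)} \bigl(\vh_i^{(t-1)}\bigr)^{\!\intercal},
\end{equation}
a sum of $bT$ rank-one matrices. Subadditivity of rank then yields $\operatorname{rank}(\nabla_\mW L) \leq bT$, which is the claim.

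The only subtle step is justifying the single-example identity, since $\mW$ appears in every $\vz_i^{(s)}$ for $s \leq t$ and $\vh_i^{(t-1)}$ itself depends on $\mW$. The clean way to handle this is to introduce a formal ``copy'' $\mW^{(t)}$ of $\mW$ for each time step, treat $\vh_i^{(t)}$ as a function of $(\mW^{(1)}, \dots, \mW^{(T)})$, compute the partial derivative with respect to each $\mW^{(t)}$ (which gives exactly the rank-one term $\vg_i^{(t)} (\vh_i^{(t-1)})^\intercal$ by the standard identity $\partial_{\mA}[\va^\intercal \mA \vb] = \va \vb^\intercal$), and then set all copies equal so that the total derivative $\nabla_\mW \ell_i$ equals the sum $\sum_t \partial \ell_i / \partial \mW^{(t)}$. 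I expect this bookkeeping to be the only place the argument requires care; once it is in place, the rank bound is immediate.
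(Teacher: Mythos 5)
Your proposal is correct and follows essentially the same route as the paper: both define the pre-activations $\vz^{(t)}$, expand $\nabla_{\mW}L$ via the chain rule into the double sum $\sum_{i=1}^{b}\sum_{t=1}^{T}\bigl[\partial\ell_i/\partial\vz_i^{(t)}\bigr]\bigl[\vh_i^{(t-1)}\bigr]^{\intercal}$ of $bT$ rank-one outer products, and conclude by subadditivity of rank. Your extra care with per-time-step copies of $\mW$ is a cleaner justification of a step the paper treats implicitly, but it does not change the argument.
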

\begin{proof}
Define $\vz^{(t)} = \mM\textbf{x}^{(t)}+\mW \vh^{(t-1)}(\vx)$. Calculating the gradient of the loss function with respect to the parameter $\mW$, we have:
\begin{equation}
\begin{split}
    \nabla_{\mW} L &= \sum_{i=1}^b \nabla_{\mW} \ell(f^{(T)}(\vx_i),y_i) \\
    &= \sum_{i=1}^b \sum_{t=1}^T \sum_{k=1}^h \left[\frac{\partial \ell(f^{(T)}(\vx_i),y_i)}{\partial \vz^{(t)}} \right]_k \frac{\partial \vz^{(t)}_k}{\partial \mW} \\
    &= \sum_{i=1}^b \sum_{t=1}^T \left[\frac{\partial \ell(f^{(T)}(\vx_i),y_i)}{\partial \vz^{(t)}} \right] \left[\vh^{(t-1)}(\vx_i)\right]^\intercal.
\end{split}
\end{equation}

The above is the sum of $bT$ rank one matrices which is at most rank $bT$ concluding the proof.
\end{proof}

\begin{proposition} 
\label{prop:convnet_rank_bound}
Given a loss function $\ell:\mathbb{R}\times\mathbb{R} \to \mathbb{R}$ taking in two real numbers and outputting a real number and convolution filter $\tW \in \mathbb{R}^{ N \times M \times C \times C }$, let $f(\tX)$ denote the convolutional neural network defined as
\begin{equation}
\begin{split}
        \tH(\tX) &=g_{pre}(\tX) \\
        \tY(\tX) &= \conv_{\tW}(\tH(\tX)) \\
        f(\tX) &= g_{post}(\tY(\tX)),
\end{split}
\end{equation}
where input $\tX \in \mathbb{R}^{ N_{in} \times M_{in} \times C_{in}}$ and $g_{pre}$ and $g_{post}$ are functions which apply the layers before and after the convolution with filter $\tW$. Let $\widehat{\tW}_{\widehat{r},\widehat{s},:,:}$ denote the $C \times C$ matrix storing the values of the convolution filter in the Fourier regime for frequencies $\widehat{r}$ and $\widehat{s}$ (see \cref{eq:app_conv_fourier}). Let $\nabla_{\widehat{\tW}_{\widehat{r},\widehat{s},:,:}} L$ indicate the gradient of $\widehat{\tW}_{\widehat{r},\widehat{s},:,:}$ with respect to the loss function $L = \sum_{i=1}^b \ell(f(\tX_i),y_i) $ over a batch of inputs $\{ \tX_i,y_i\}_{i=1}^b$. Then, for all $\widehat{r}$ and $\widehat{s}$ in the support of the filter, the rank of $\nabla_{\widehat{\tW}_{\widehat{r},\widehat{s},:,:}} L$ is bounded above by $b$.
\end{proposition}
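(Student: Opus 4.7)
The plan is to mimic the proof strategy used in Proposition~\ref{prop:RNN_rank_bound}: express $\nabla_{\widehat{\tW}_{\widehat{r},\widehat{s},:,:}} L$ as a sum of $b$ rank-one matrices, one per sample in the batch, and conclude by subadditivity of rank. The key enabling fact is the block-diagonal form of Fourier-domain convolution established in Lemma~\ref{lem:conv_theorem}, which decouples distinct frequencies and reduces the computation at a fixed $(\widehat{r},\widehat{s})$ to ordinary matrix-vector multiplication.

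First I would apply the FFT to the intermediate variable $\tY_i = \conv_{\tW}(\tH(\tX_i))$ for each batch element $\tX_i$. By Lemma~\ref{lem:conv_theorem},
\begin{equation*}
\widehat{\tY}_{i,\widehat{r},\widehat{s},:} \;=\; \widehat{\tW}^*_{\widehat{r},\widehat{s},:,:}\;\widehat{\tH}(\tX_i)_{\widehat{r},\widehat{s},:},
\end{equation*}
so at the fixed frequency pair $(\widehat{r},\widehat{s})$ the block $\widehat{\tW}_{\widehat{r},\widehat{s},:,:}$ appears only through a single $C\times C$ matrix acting on the length-$C$ vector $\vu_i := \widehat{\tH}(\tX_i)_{\widehat{r},\widehat{s},:}$, producing the length-$C$ vector $\vv_i := \widehat{\tY}_{i,\widehat{r},\widehat{s},:}$. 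The conjugation simply means one differentiates with respect to the entries of $\widehat{\tW}^*_{\widehat{r},\widehat{s},:,:}$ (handling real and imaginary parts separately if one wishes to stay over $\mathbb{R}$); this does not affect the rank count.

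Next I would use the chain rule. Writing $L_i = \ell(f(\tX_i),y_i)$ and letting $\vg_i$ denote the upstream gradient of $L_i$ pulled back to $\vv_i$, the standard identity for $\partial(\mA\vu)/\partial \mA$ yields
\begin{equation*}
\nabla_{\widehat{\tW}^*_{\widehat{r},\widehat{s},:,:}} L_i \;=\; \vg_i\,\vu_i^{\dagger},
\end{equation*}
which is a rank-one $C\times C$ matrix. Summing over the batch,
\begin{equation*}
\nabla_{\widehat{\tW}^*_{\widehat{r},\widehat{s},:,:}} L \;=\; \sum_{i=1}^{b} \vg_i\,\vu_i^{\dagger},
\end{equation*}
a sum of $b$ rank-one matrices, hence of rank at most $b$. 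Taking the conjugate to switch back to $\widehat{\tW}_{\widehat{r},\widehat{s},:,:}$ preserves rank, giving the claim.

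The only subtlety, and the place I expect to have to be careful, is the complex-conjugation bookkeeping and the fact that $\widehat{\tW}_{\widehat{r},\widehat{s},:,:}$ is not a free parameter in the real-valued setting: its entries are constrained by Hermitian symmetries of the FFT of a real filter $\tW$. Those linear constraints can only decrease the effective rank of the gradient (projection onto a subspace cannot increase rank), so the bound of $b$ still holds; I would mention this explicitly. Everything else is routine bookkeeping once the block-diagonal Fourier form is in place.
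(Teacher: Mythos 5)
Your proof is correct and follows essentially the same route as the paper's: invoke the block-diagonal Fourier form of the convolution, apply the chain rule to write each per-sample gradient as an outer product of the upstream gradient with $\widehat{\tH}(\tX_i)_{\widehat{r},\widehat{s},:}$, and bound the rank of the sum of $b$ rank-one matrices by $b$. Your extra remark on the Hermitian-symmetry constraints of a real filter is a sensible addition the paper omits, but it does not change the argument.
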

\begin{proof}
We have that
\begin{equation}
\begin{split}
    \nabla_{\widehat{\tW}_{\widehat{r},\widehat{s},:,:}} L &= \sum_{i=1}^b \nabla_{\widehat{\tW}_{\widehat{r},\widehat{s},:,:}} \ell(f(\tX_i),y_i).
\end{split}
\end{equation}
We now proceed to show that the rank of $\nabla_{\widehat{\tW}_{:,:,\widehat{r},\widehat{s}}} \ell(f(\tX_i),y_i)$ is equal to one which completes the proof. As a reminder, we have via the convolution theorem that
\begin{equation}
    \left[\FFT \conv_{\tW}(\tX) \right]_{\widehat{r},\widehat{s},:} = \widehat{\tW}^*_{\widehat{r},\widehat{s},:,:} \; \left[\FFT\tX \right]_{\widehat{r},\widehat{s},:} .
\end{equation}

Let $\widehat{\tY}(\tX) = \FFT \conv_{\tW}(\tX)$, then we have
\begin{equation}
\begin{split}
    \nabla_{\widehat{\tW}_{:,:,\widehat{r},\widehat{s}}} \ell(f(\tX_i),y_i) &= \sum_{k=1}^C \frac{\partial \ell(f(\tX_i),y_i)}{\partial \widehat{\tY}(\tX)_{\widehat{r},\widehat{s},k}} \frac{\partial \widehat{\tY}(\tX)_{\widehat{r},\widehat{s},k}}{\tW_{\widehat{r},\widehat{s},:,:}} \\
    &= \frac{\partial \ell(f(\tX_i),y_i)}{\partial \widehat{\tY}(\tX)_{\widehat{r},\widehat{s},:}} \left( \left[\FFT\tX \right]_{\widehat{r},\widehat{s},:} \right)^\intercal.
\end{split}
\end{equation}
The above is a rank one matrix which concludes the proof.
\end{proof}

\subsection{Other sampling algorithms}
\label{app:other_sampling}
The prior analysis showed that gradients are typically low rank, and in the main text, we listed a couple efficient algorithms that allow one to approximate a full but approximately low rank matrix with an explicitly low rank matrix. Enhancements to these algorithms exist that may provide further improvements in very large dimensions. For example, the \emph{FKV sampling} \cite{frieze2004fast} and \emph{constant time SVD} \cite{drineas2006fast} algorithms provide runtimes that are often logarithmic in the dimension of the matrix for sampling entries from the low rank approximation to a matrix. Nevertheless, explicitly constructing all entries of a rank $k$ approximation to an $n \times n$ matrix $\mA$ requires time at least $O(k^2n)$. Improvements in runtime are achieved by applying random projections to both the column and row subspaces of the matrix $\mA$ to perform the final approximation. In practice, the runtime of these algorithms depends on other factors that limit the applicability of the algorithm even for relatively large matrices. Numerical simulations show that FKV sampling achieves a practical speedup for matrices of dimension approximately $10^6$ or higher \cite{arrazola2019quantum}. Since matrices in our \textsc{projUNN} were of much smaller dimension, we did not use these methods.

\section{Runtime comparisons}
\label{app:runtime}

\cref{tab:model_comps} (see main text) and \cref{tab:conv_model_comps} lists the asymptotic runtime complexities of unitary/orthogonal models in the RNN and convolutional setting respectively. For RNNs, \textsc{projUNN} has the best runtime scaling of all models which parameterize the full orthogonal/unitary space with just one layer. In fact, apart from the rank approximation factor $k$, the runtime of \textsc{projUNN} is optimal in the RNN setting. In the convolutional setting, \textsc{projUNN} scales most efficiently when there are many channels or the filter size $W$ scales faster than the logarithm of the input dimension $\log N$. For small filter sizes, BCOP and SOC scale relatively efficiently. In fact, SOC is nearly optimal for $W \ll \log N$ up to the approximation factor $p$ in the Taylor expansion which is required both upon evaluation and training of the network.

\begin{table*}[!t]
\small
\centering
\begin{tabular}{lcl}
\hline
Model & \begin{tabular}[c]{@{}c@{}}Complexity of\\ forward + backward step\end{tabular} & Notes \\ \hline
BCOP $^1$ & $O(C^2W^2N^2 + C^3W^3)$  & $W$ sequential orthogonalizations needed (slow for large $W$) \\
SOC $^2$ & $O(pC^2W^2N^2)^a$ & $p$ denotes number of terms in Taylor series approximation       \\
Cayley $^3$ & $O(N^2C^2 \log(N) + N^2C^3)^b $ &        \\
\textsc{projUNN} (our method) & $O(N^2C \log(N) + kN^2C^2)^c $ & $k$ denotes rank of gradient updates              \\ \hline
\multicolumn{3}{l}{\begin{tabular}[c]{p{0.95\linewidth}}\small $^1$ \cite{li2019preventing}, $^2$ \cite{singla2021skew}, $^3$ \cite{trockman2021orthogonalizing} \\ $^a$ just applying the convolution (without gradient update) also requires the added factor $p$ in runtime unlike standard convolutions and BCOP which run in time $O(C^2W^2N^2)$ upon just evaluation, $^b$ approximations to matrix inversion exist which may reduce runtimes though these approximations are not implemented here, $^c$ runtime shown for typical setting when $k \ll n$ \end{tabular}}
\end{tabular}
\caption{Time complexity of 2-D orthogonal convolutional layers with filter size $W \times W$ applied to $N \times N$ inputs with $C$ input and output channels.
}
\label{tab:conv_model_comps}
\end{table*}

\paragraph{Runtimes for RNN implementations in practice} Enforcing unitarity in \textsc{projUNN} incurs a computational overhead associated to performing eigendecompositions and QR decompositions entailed in updating the gradient. Even though such computations are performed on a small subspace of the total dimension of the matrix, such computations may not effect increase training times by a constant factor as evident in \cref{fig:runtimes}. Empirically, we observed that much of the increased overhead was due to performing eigendecompositions and QR decompositions on a GPU, both tasks which are challenging to parallelize on GPU architectures. We note that similar issues may be one reason why the expRNN runtimes appear much slower in our simulations as shown in \cref{fig:runtimes}. Recent updates to pytorch \cite{NEURIPS2019_9015} and tensorflow \cite{tensorflow2015-whitepaper} have focused on improving the runtimes of these common linear algebraic routines, and we expect these runtimes to continue to improve in the future. 

\begin{figure}
    \centering
    \includegraphics{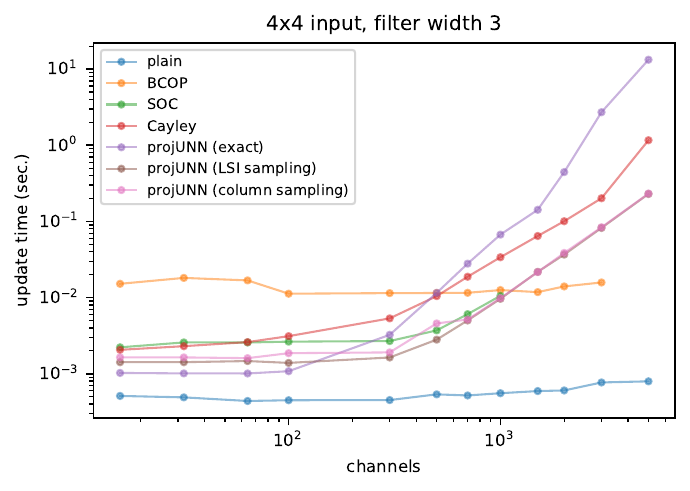}
    \caption{Comparison of runtimes of orthogonal convolutional architectures when varying the number of channels (log-log plot). The number of input channels is equal to the number of output channels. Runtimes are averaged over 10 forward and backward passes through a single convolutional operation on random data. \textsc{projUNN} with LSI sampling and column sampling have very similar runtimes and may appear to completely overlap on the chart. Out of memory error obtained for BCOP for large number of channels.}
    \label{fig:conv_runtime_channels}
\end{figure}

\begin{figure}
    \centering
    \includegraphics{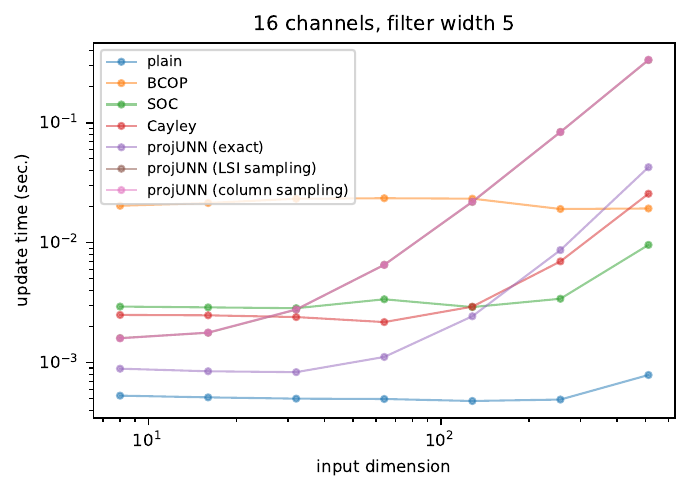}
    \caption{Comparison of runtimes of orthogonal convolutional architectures when varying the size of the input which is a square image (log-log plot). Runtimes are averaged over 10 forward and backward passes through a single convolutional operation on random data. \textsc{projUNN} with LSI sampling and column sampling have very similar runtimes and may appear to completely overlap on the chart. }
    \label{fig:conv_runtime_imsize}
\end{figure}

\begin{figure}
    \centering
    \includegraphics{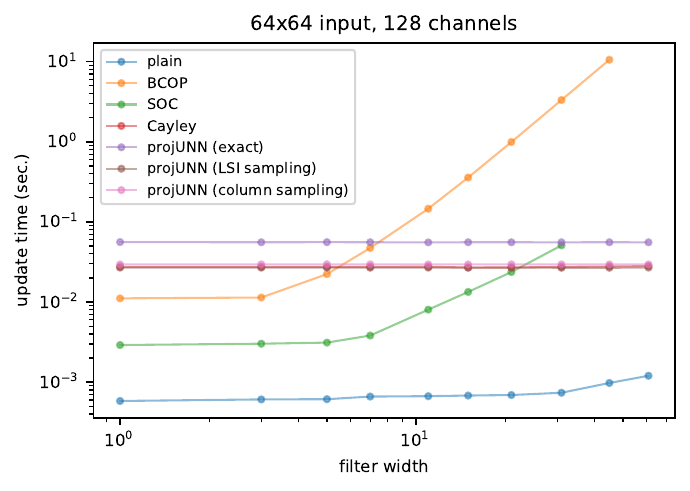}
    \caption{Comparison of runtimes of orthogonal convolutional architectures when varying the size of the filter (log-log plot). Both \textsc{projUNN} and the cayley convolution \cite{trockman2021orthogonalizing} perform operations on the full space of convolution filters so the filter size does not change the runtime for these models. Runtimes are averaged over 10 forward and backward passes through a single convolutional operation on random data. \textsc{projUNN} with LSI sampling and column sampling have very similar runtimes and may appear to completely overlap on the chart. Out of memory error obtained for SOC beyond filter size of 31. }
    \label{fig:conv_runtime_filter}
\end{figure}

\begin{figure}
    \centering
    \includegraphics{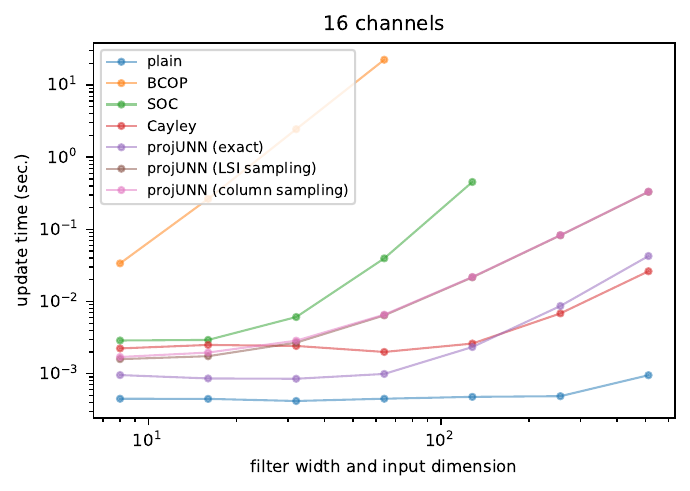}
    \caption{Comparison of runtimes of orthogonal convolutional architectures when varying the size of the filter and the input dimension in-tandem (log-log plot). Runtimes are averaged over 10 forward and backward passes through a single convolutional operation on random data. \textsc{projUNN} with LSI sampling and column sampling have very similar runtimes and may appear to completely overlap on the chart. Out of memory error obtained for SOC beyond filter size and image size of 128. }
    \label{fig:conv_runtime_filter_and_imsize}
\end{figure}

\paragraph{Runtimes for convolutional network implementations in practice} \cref{fig:conv_runtime_channels}, \cref{fig:conv_runtime_imsize}, and \cref{fig:conv_runtime_filter} compare runtimes of orthogonal convolution algorithms when varying the number of channels, input size, and filter size respectively. \cref{fig:conv_runtime_filter_and_imsize} plots the runtime when the filter and image size are set equal to each other and varied together. In summary, we find that the skew orthogonal convolution (SOC) \cite{singla2021skew}, the Cayley model \cite{trockman2021orthogonalizing}, and our \textsc{projUNN} are all relatively efficient for small to medium sized models. \textsc{projUNN} empirically performs best in comparison to other models when there are many channels or the filter size is large.  SOC \cite{singla2021skew} is the fastest when the filter size is small. As a reminder, SOC \cite{singla2021skew} employs the Taylor series approximation to form a unitary/orthogonal matrix. We did not change the number of terms in the approximation used even though increasing the size of a filter, dimension of an input, or the number of channels will likely require more terms to maintain the same error in the approximation. 

The Cayley model \cite{trockman2021orthogonalizing} has a runtime that is empirically similar to \textsc{projUNN} in most of our experiments despite \textsc{projUNN} having an improved asymptotic scaling with regards to the number of channels (see \cref{tab:conv_model_comps}). Runtimes here are calculated by averaging across the first 10 steps of gradient descent on random data for initialized networks. Close to initialization, matrices in the convolution filter are generally well conditioned and this may be one reason why the Cayley model showed good scaling  in \cref{fig:conv_runtime_channels} since the matrix inversion for the well conditioned matrices runs quickly. We also suspect that for the number of channels considered, the added costs of performing sampling and projections in \textsc{projUNN} do not materialize into a very clear runtime benefit. Interestingly, the exact version of \textsc{projUNN} (no low rank approximation) runs in similar time to the Cayley method.

Finally, we note that in comparison to RNN implementations, there is increased overhead associated to performing eigendecompositions and QR decompositions with our \text{projUNN} convolutional network implementations since unitary matrices are batched across the elements of the convolutional filter. As mentioned before, much of the increased overhead was due to performing eigendecompositions and QR decompositions on a GPU, both tasks which can be challenging to parallelize on GPU based architectures. Future updates to Pytorch \cite{NEURIPS2019_9015} and Tensorflow \cite{tensorflow2015-whitepaper} may help improve runtimes of these operations.

\section{Network architectures and training details}
\label{app:network_architectures}

\begin{center}
    \begin{tabular}{c|c|c}
        ExpRNN trivializations &\url{https://github.com/Lezcano/expRNN} & MIT  \\
        scoRNN &\url{https://github.com/SpartinStuff/scoRNN} & NA\\
        scuRNN &\url{https://github.com/Gayan225/scuRNN}& NA\\
        skewed orthogonal convolutions (SOC) &\url{https://github.com/singlasahil14/SOC}& NA\\
        Cayley convolution &\url{https://github.com/locuslab/orthogonal-convolutions}& MIT\\
        BCOP & \url{github.com/ColinQiyangLi/LConvNet}&MIT
    \end{tabular}
\end{center}

\subsection{Handling complex numbers}
\label{app:handling_complex_numbers}
Since matrices in the unitary group are complex valued, care must be taken to ensure that the neural network can handle such complex numbers. In these settings, standard activation functions need to be adapted for complex numbers. It is often advantageous to have a nonlinearity which does not change the phase of its input. For these reasons, the activation function used is typically the modRELU activation shown below:
\begin{equation}
    \begin{split}
        \sigma_{modRELU}(z) & = (|z| + b) \frac{z}{|z|} \; \; \text{if} \; \;  \|z\| + b > 0 \\
        \sigma_{modRELU}(z) & = 0 \; \; \text{if} \; \;  |z| + b \leq 0
    \end{split}
\end{equation}
where $b$ is a bias term (trainable). In calculations of the modulus $|z|$, a small additive constant is often added to avoid stability issues when $|z|$ is small.

Another challenge that arises with performing orthogonal convolution is that the representation of a real-valued convolutional filter in the Fourier domain will be complex-valued. More specifically, the fast Fourier transform of a real signal is Hermitian-symmetric. For example, for a one dimensional vector $\vx \in \mathbb{R}^N$ where $\widehat{\vx} = \FFT \vx \in \mathbb{C}^N$ and $\widehat{\vx}_i = \widehat{\vx}_{-i \mod N}$. Therefore, when initializing weight filters in our orthogonal convolution operations, one must be careful to ensure the Hermitian symmetric property holds. Similarly, when performing convolutions in the Fourier space, care must be taken in converting data types to and from complex and real space. In our implementation, we used the pytorch FFT.RFFT2 and FFT.IRFFT2 commands to implement these operations efficiently \cite{paszke2017automatic}.

\subsection{Optimizers}
For training \textsc{projUNN} architectures, a separate optimizer is used for unitary/orthogonal parameters which must be projected after updates and all other parameters. Learning rates for the unitary/orthogonal parameters are typically set to less than that of the other parameters. In our implementations, we found that a learning rate for the unitary or orthogonal parameters of one-tenth or one-twentieth of that of the other parameters works well in practice. Furthermore, when using optimizers with momentum terms, we constructed a variant of standard optimizers to apply changes to the momentum terms after projecting gradients onto the unitary/orthogonal manifold or tangent space. This optimizer performed well in our experiments though sometimes added instability. Therefore, unless otherwise stated, standard optimizers were used.

\subsection{Numerical stability}
\label{app:numerical_stability}
Though updates via \textsc{projUNN} mathematically guarantee that parameters remain unitary/orthogonal, performing a significant number of sequential updates to a matrix can add numerical errors over time slowly drifting parameters away from the unitary/orthogonal manifold. This is especially true in the case of \textsc{projUNN-D} where updating matrices requires division of eigenvalues in the eigendecompositon of the rank $k$ subspace (see \cref{app:deferred_proofs} and \cref{eq:eigen_projunnd}). In contrast, we empirically find that \textsc{projUNN-T} maintains stability throughout optimization even when requiring many updates. To actively address potential instabilities, one can perform eigendecompositions with higher floating point precision or sporadically project unitary/orthogonal parameters onto the closest unitary/orthogonal matrix via \cref{lemma:DirectProjection}. Though this projection step requires $O(n^3)$ time for an $n \times n$ matrix, performing this projection only every $O(n)$ gradient updates can still preserve efficient runtimes of on average $O(kn^2)$ time per gradient update. Unless otherwise stated, we do not perform any additional steps for maintaining stability.

\subsection{Initialization of unitary/orthogonal parameters}
\label{app:initialization}
Empirically, we found that initializing unitary/orthogonal matrices to be Haar random or close to Haar random does not perform well. This is also an observation noted in prior works \cite{henaff2016recurrent,helfrich2018orthogonal,lezcano2019cheap}. Instead, we used one of two different initialization schemes. The first initializes unitary/orthogonal parameters as the identity matrix. The second initializes unitary/orthogonal matrices using variants of the so-called Henaff initialization \cite{henaff2016recurrent} where $2 \times 2$ diagonal blocks of the matrices are initialized as samples from the below
\begin{equation}
    \exp \left( \begin{bmatrix}
    0 & s_i \\ -s_i & 0
    \end{bmatrix} \right),
\end{equation}
where $s_i$ is sampled uniformly from $[-\pi,\pi]$ \cite{henaff2016recurrent}. In other words, an $n \times n$ matrix $\mU$ is initialized as
\begin{equation}
    \mU = \begin{bmatrix}
    \exp \left( \begin{bmatrix}0 & s_1 \\ -s_1 & 0\end{bmatrix} \right) & \boldsymbol{0} & \cdots & \boldsymbol{0} \\
    \boldsymbol{0} & \exp \left( \begin{bmatrix}0 & s_2 \\ -s_2 &0\end{bmatrix} \right) &  & \boldsymbol{0} \\
    \vdots &  & \ddots  & \vdots   \\
    \boldsymbol{0} & \boldsymbol{0} & \cdots &  \exp \left( \begin{bmatrix}0 & s_{n/2} \\ -s_{n/2} & 0 &\end{bmatrix} \right)
    
    \end{bmatrix}.
\end{equation}
Note, that since we parameterize matrices in the Lie group instead of the Lie algebra, we include the exponential map in the parameterization above. Other variants of the above have been used in prior works. For example, the Cayley initialization samples $s_i=\sqrt{\frac{1- \cos t_i}{1+\cos t_i}}$ where $t_i$ is sampled uniformly from $[0,\pi/2]$ \cite{helfrich2018orthogonal}.

\subsection{Computational details}

We employed the latest version of PyTorch (1.10.1+cu102) with Python3.8 and all the pre-requisite libraries coming along. The hardware leverages Quadro GP100 GPU and Intel(R) Xeon(R) CPU E5-2698 v4 @ 2.20GHz.

\end{document}